\documentclass[twoside,11pt]{article}
\usepackage{color}
\usepackage[usenames,dvipsnames]{xcolor}

%

\usepackage{jmlr2e}

\jmlrheading{1}{2012}{1-30}{4/00}{10/00}{Theja Tulabandhula and Cynthia Rudin}
\ShortHeadings{Machine Learning with Operational Costs}{Tulabandhula et al.}
\firstpageno{1}
\title{Machine Learning with Operational Costs}
\author{\name Theja Tulabandhula \email theja@mit.edu \\
       \addr Department of Electrical Engineering and Computer Science\\
       Massachusetts Institute of Technology\\
       Cambridge, MA 02139, USA
       \AND
       \name Cynthia Rudin \email rudin@mit.edu \\
       \addr MIT Sloan School of Management and Operations Research Center\\
       Massachusetts Institute of Technology\\
       Cambridge, MA 02139, USA
       }
\editor{}

\usepackage{natbib}
\usepackage{fullpage}
\usepackage{wrapfig}
\usepackage{amsmath}
\usepackage{subfigure}
\usepackage{tikz}

\graphicspath{{./figures/}{./}}

\usepackage{mathtools}
\DeclarePairedDelimiter{\ceil}{\lceil}{\rceil}
%
%
%
\newcommand{\qed}{\nobreak \ifvmode \relax \else
      \ifdim\lastskip<1.5em \hskip-\lastskip
      \hskip1.5em plus0em minus0.5em \fi \nobreak
      \vrule height0.75em width0.5em depth0.25em\fi}

\def\matXbar{\tilde{X}}
\def\sign{\textrm{sign}}
\def\gammaS{\bar{\gamma}}
\def\betaS{\tilde{\beta}}
\def\hS{\tilde{h}}
 \def\xt{\tilde{x}}
 \def\scalej{\frac{n^{1/r}X_{b}B_{b}}{\|h_{j}\|_{r}}}
  \def\invscalej{\frac{\|h_{j}\|_{r}}{n^{1/r}X_{b}B_{b}}}

\def\R{\mathbb{R}}
\def\X{\mathcal{X}}
\def\F{\mathcal{F}}
\def\Y{\mathcal{Y}}
\def\Fr{\mathcal{F}^{R}}
\def\Obj{\textrm{\rm OpCost}}
\def\argminpi{\hbox{ \raise-1.6mm\hbox{$\textstyle
    \mathrm{argmin} \atop \pi\in\Pi$}}}
\def\argminf{\hbox{ \raise-1.6mm\hbox{$\textstyle
    \mathrm{argmin} \atop f\in\F^{unc}$}}}
\def\minpi{\hbox{ \raise-1.6mm\hbox{$\textstyle
    \mathrm{min} \atop \pi\in\Pi$}}}    
\def\argmin{\textrm{argmin}}
\def\cuppi{\hbox{ \raise-1.6mm\hbox{$\textstyle
    \cap \atop g\in \F_{good}$}}}
\def\cuuppi{\hbox{ \raise-1.6mm\hbox{$\textstyle
    \cup \atop g\in \F_{good}$}}}

\begin{document}
\maketitle

\begin{abstract}%
\textcolor{Black}{ 
This work proposes a way to align statistical modeling with decision making.
We provide a method that propagates the uncertainty in predictive modeling to
the uncertainty in operational cost, where operational cost is the amount spent
by the practitioner in solving the problem. The method allows us to explore the
range of operational costs associated with the set of reasonable statistical models,
so as to provide a useful way for practitioners to understand uncertainty. To do
this, the operational cost is cast as a regularization term in a learning algorithm's
objective function, allowing either an optimistic or pessimistic view of possible
costs, depending on the regularization parameter. From another perspective, if
we have prior knowledge about the operational cost, for instance that it should be
low, this knowledge can help to restrict the hypothesis space, and can help with
generalization. We provide a theoretical generalization bound for this scenario.
We also show that learning with operational costs is related to robust optimization.
} 

\noindent \textbf{Keywords:} statistical learning theory, optimization, covering numbers, decision theory
\end{abstract}

\section{Introduction} \label{sec:intro}
\textcolor{Black}{ 
Machine learning algorithms are used to produce predictions, and these predictions are often used to make a policy or plan of action afterwards, where there is a cost to implement the policy. 
In this work, we would like to understand how the uncertainty in predictive modeling can translate into the uncertainty in the cost for implementing the policy. This would help us answer questions like: 
\textcolor{Black}{ 
\begin{enumerate}
\item[Q1.] ``What is a reasonable amount to allocate for this task so we can react best to whatever nature brings?"
\item[Q2.] ``Can we produce a reasonable probabilistic model, supported by data, where we might expect to pay a specific amount?"
\item[Q3.] ``Can our intuition about how much it will cost to solve a problem help us produce a better probabilistic model?" 
\end{enumerate}
}
The three questions above cannot be answered by standard decision theory, where the goal is to produce a single policy that minimizes expected cost. These questions also cannot be answered by robust optimization, where the goal is to produce a single policy that is robust to the uncertainty in nature. Those paradigms produce a single policy decision that takes uncertainty into account, and the chosen policy might not be a best response policy to any realistic situation. In contrast, our goal is to understand the uncertainty and how to react to it, using policies that would be best responses to individual situations. 
}

\textcolor{Black}{ 
There are many applications in which this method can be used. For example, in scheduling staff for a medical clinic, predictions based on a statistical model of the number of patients might be used to understand the possible policies and costs for staffing. In traffic flow problems, predictions based on a model of the forecasted traffic might be useful for determining load balancing policies on the network and their associated costs. In online advertising, predictions based on models for the payoff and ad-click rate might be used to understand policies for when the ad should be displayed and the associated revenue. 
}

\textcolor{Black}{ 
In order to propagate the uncertainty in modeling to the uncertainty in costs, we introduce what we call the \textit{simultaneous
process}, where we explore the range of predictive models and corresponding policy decisions at the
same time. The simultaneous process was named to contrast with a more traditional \textit{sequential process}, where
first, data are input into a statistical algorithm to produce a predictive model, which makes recommendations
for the future, and second, the user develops a plan of action and projected cost for implementing the policy. The sequential process is commonly used
in practice, even though there may actually be a whole class of models that could be relevant for
the policy decision problem. The sequential process essentially assumes that the probabilistic model is ``correct enough" to make a decision that is ``close enough."  
}

\textcolor{Black}{ 
In the simultaneous process, the machine learning algorithm contains a regularization term encoding the
policy and its associated cost, with an adjustable regularization parameter.  If there is some uncertainty about how much it will cost to solve the problem, the regularization
parameter can be swept through an interval to find a range of possible costs, from optimistic to
pessimistic. The method then produces the most likely scenario for each value of the cost. This way, by looking at the full range of the regularization parameter, we sweep out costs for all of the reasonable probabilistic models. This range can be used to determine how much might be reasonably allocated to solve the problem. 
}

\textcolor{Black}{ 
Having the full range of costs for reasonable models can directly answer the question in the first paragraph regarding allocation, ``What is a reasonable amount to allocate for this task so we can react best to whatever nature brings?" One might choose to allocate the maximum cost for the set of reasonable predictive models for instance.
The second question above is ``Can we produce a reasonable probabilistic model, supported by data, where we might expect to pay a specific amount?" This is an important question, since business managers often like to know if there is some scenario/decision pair that is supported by the data, but for which the operational cost is low (or high); the simultaneous process would be able to find such scenarios directly. To do this, we would look at the setting of the regularization parameter that resulted in the desired value of the cost, and then look at the solution of the simultaneous formulation, which gives the model and its corresponding policy decision. 
}

\textcolor{Black}{ 
Let us consider the third question above, which is ``Can our intuition about how much it will cost to solve a problem help us produce a better probabilistic model?" The regularization parameter can be interpreted to regulate the strength of our belief in the operational cost. If we have a strong belief in the cost to solve the problem, and if that belief is correct, this will guide the choice of regularization parameter, and will help with prediction. In many real scenarios, a practitioner or domain expert might truly have a prior belief on the cost to complete a task. Arguably, a manager having this more grounded type of prior belief is much more natural than, for instance, the manager having a prior belief on the $\ell_2$ norm of the coefficients of a linear model, or the number of nonzero coefficients in the model. Being able to encode this type of prior belief on cost could potentially be helpful for prediction: as with other types of prior beliefs, it can help to restrict the hypothesis space and can assist with generalization. In this work, we show that the restricted hypothesis spaces resulting from our method can often be bounded by an intersection of an an $\ell_{q}$  ball with a halfspace - and this is true for many different types of decision problems. We analyze the complexity of this type of hypothesis space with a technique based on Maurey's Lemma \citep{barron93,zhang02} that leads eventually to a counting problem, where we calculate the number of integer points within a polyhedron in order to obtain a covering number bound.
}

\textcolor{Black}{ 
The operational cost regularization term can be the optimal value of a complicated optimization
problem, like a scheduling problem. This means we will need to solve an optimization problem
each time we evaluate the learning algorithm's objective. However, the practitioner must be able to
solve that problem anyway in order to develop a plan of action; it is the same problem they need
to solve in the traditional sequential process, or using standard decision theory. Since the decision problem is solved only on data
from the present, whose labels are not yet known, solving the decision problem may not be difficult,
especially if the number of unlabeled examples is small. In that case, the method can still scale up
to huge historical data sets, since the historical data factors into the training error term but not the new
regularization term, and both terms can be computed. An example is to compute a schedule for a day, based on factors of the various meetings on the schedule that day. We can use a very large amount of past meeting-length data for the training error term, but then we use only the small set of possible meetings coming up that day to pass into the scheduling problem. In that case, both the training error term and the regularization term are able to be computed, and the objective can be minimized. 
}


\textcolor{Black}{ 
The simultaneous process is a type of decision theory. To give some background, there are two types of relevant decision theories: normative (which assumes full information, rationality and infinite computational power) and descriptive (models realistic human behavior).
Normative decision theories that address decision making under uncertainty can be classified into those based on ignorance (using no probabilistic information) and those based on risk (using probabilistic information). The former include maximax, maximin (Wald), minimax regret (Savage), criterion of realism (Hurwicz), equally likely (Laplace) approaches. The latter include utility based expected value and bayesian approaches (Savage). Info-gap, Dempster-Shafer, fuzzy logic, and possibility theories offer non-probabilistic alternatives to probability in Bayesian/expected value theories \citep{french1986decision,hansson1994decision}.
}

\textcolor{Black}{ 
The simultaneous process does not fit into any of the decision theories listed above.  
For instance, a core idea in the Bayesian approach is to choose a single policy that maximizes expected utility, or minimizes expected cost. Our goal is not to find a single policy that is useful on average. In contrast, our goal is to trace out a path of models, their specific (not average) optimal-response policies, and their costs.  
The policy from the Bayesian approach may not correspond to the best decision for any particular single model, whereas that is something we want in our case. We trace out this path by changing our prior belief on the operational cost (that is, by changing the strength of our regularization term). In Bayesian decision theory, the prior is over possible probabilistic models, rather than on possible costs as in this paper. Constructing this prior over possible probabilistic models can be challenging, and the prior often ends up being chosen arbitrarily, or as a matter of convenience. In contrast, we assume only an unknown probability measure over the data, and the data itself defines the possible probabilistic models for which we compute policies. 
}

\textcolor{Black}{ 
Maximax (optimistic) and maximin (pessimistic) decision approaches contrast with the Bayesian framework and do not assume a distribution on the possible probabilistic models. In Section \ref{sec:robust} we will discuss how these approaches are related to the simultaneous process. They overlap with the simultaneous process but not completely. Robust optimization is a maximin approach to decision making, and the simultaneous process also differs in principle from robust optimization. In robust optimization, one would generally need to allocate much more than is necessary for any single realistic situation, in order to produce a policy that is robust to almost all situations. However, this is not always true; in fact, we show in this work that in some circumstances, while sweeping through the regularization parameter, one of the results produced by the  simultaneous process is the same as the one coming from robust optimization. 
}

\textcolor{Black}{ 
We introduce the sequential and simultaneous processes in Section \ref{sec:formulation}.
In Section \ref{sec:experiments}, we give several examples of algorithms that incorporate these operational costs.
\textcolor{Black}{ 
In doing so, we provide answers for the first two questions Q1 and Q2 above, with respect to specific problems.}
Our first example application is a staffing problem at a medical clinic, where the decision problem is to staff a set of stations that patients must complete in a certain order. The time required for patients to complete each station is random and estimated from past data. The second example is a real-estate purchasing problem, where the policy decision is to purchase a subset of available properties. The values of the properties need to be estimated from comparable sales. The third example is a call center staffing problem, where we need to create a staffing policy based on historical call arrival and service time information. A fourth example is the ``Machine Learning and Traveling Repairman Problem" (ML\&TRP) where the policy decision is a route for a repair crew.
As mentioned above, there is a large subset of problems that can be formulated using the simultaneous process that have a special property: they are equivalent to robust optimization (RO) problems.
Section \ref{sec:robust} discusses this relationship and provides, under specific conditions, the equivalence of the simultaneous process with RO. Robust optimization, when used for decision-making, does not usually include machine learning, nor any other type of statistical model, so we discuss how a statistical model can be incorporated within an uncertainty set for an RO. Specifically, we discuss how different loss functions from machine learning correspond to different uncertainty sets. We also discuss the overlap between RO and the optimistic and pessimistic versions of the simultaneous process.
}

We consider the implications of the simultaneous process on statistical learning theory in Section \ref{sec:bound}. In particular, we aim to understand how operational costs affect prediction (generalization) ability.
\textcolor{Black}{ 
This helps answer the third question Q3, about how intuition about operational cost can help produce a better probabilistic model.}
We show first that the hypothesis spaces for most of the applications in Section \ref{sec:experiments} can be bounded in a specific way - by an intersection of a ball and a halfspace - and this is true regardless of how complicated the constraints of the optimization problem are, and how different the operational costs are from each other in the different applications. Second, we bound the complexity of this type of hypothesis space using a technique based on Maurey's Lemma \citep{barron93,zhang02} that leads eventually to a counting problem, where we calculate the number of integer points within a polyhedron in order to obtain a generalization bound.
Our results show that it is possible to make use of much more general structure in estimation problems, compared to the standard (norm-constrained) structures like sparsity and smoothness; further, this additional structure can benefit generalization ability. A shorter version of this work has been previously published \citep[see][]{TuRu12isaim}.


\section{The Sequential and Simultaneous Processes}\label{sec:formulation}

We have a training set of (random) labeled instances, $\{(x_i, y_i)\}_{i=1}^n$, where $x_i\in\X$, $y_i\in\Y$ that we will use to learn a function $f^*:\X\rightarrow \Y$. Commonly in machine learning this is done by choosing $f$ to be the solution of a minimization problem:
\begin{equation}
f^*\in\argmin_{f\in\F^{unc}}\left( \sum_{i=1}^{n} l(f(x_i),y_i)+C_{2}R(f)\right),
\label{eqn:reg-train-loss}
\end{equation}
for some loss function $l:\Y\times\Y \rightarrow \mathbb{R}_{+}$, regularizer $R:\F^{unc} \rightarrow \mathbb{R}$, constant $C_{2}$ and function class $\F^{unc}$. Here, $ \Y\subset \mathbb{R}$. Typical loss functions used in machine learning are the 0-1 loss, ramp loss, hinge loss, logistic loss and the exponential loss. Function class $\F^{unc}$ is commonly the class of all linear functionals, where an element $f \in \mathcal{F}^{unc}$ is of the form $\beta^{T}x$, where $\X\subset \mathbb{R}^{p}$, $\beta \in \mathbb{R}^{p}$. We have used `$unc$' in the superscript for $\F^{unc}$ to refer to the word ``unconstrained,'' since it contains all linear functionals. Typical regularizers $R$ are the $\ell_{1}$ and $\ell_{2}$ norms of $\beta$. 
Note that nonlinearities can be incorporated into $\F^{unc}$ by allowing nonlinear features, so that we now would have $f(x)=\sum_{j=1}^p\beta_jh_j(x)$, where $\{h_j\}_j$ is the set of features,
which can be arbitrary nonlinear functions of $x$; for simplicity in notation, we will equate $h_j(x)=x_j$ and have  $\X\subset \mathbb{R}^{p}$.

Consider an organization making policy decisions. Given a new collection of unlabeled instances $\{\tilde{x}_i\}_{i=1}^m$, the organization wants to create a policy $\pi^*$ that minimizes a certain operational cost $\Obj(\pi,f^*,\{\tilde{x}_i\}_i)$. Of course, if the organization knew the true labels for the $\{\tilde{x}_i\}_i$'s beforehand, it would choose a policy to optimize the operational cost based directly on these labels, and would not need $f^*$. Since the labels are not known, the operational costs are calculated using the model's predictions, the $f^*(\tilde{x}_i)$'s. The difference between the traditional sequential process and the new simultaneous process is whether $f^*$ is chosen with or without knowledge of the operational cost.  

As an example, consider $\{\xt_i\}_{i}$ as representing machines in a factory waiting to be repaired, where the first feature $\xt_{i,1}$ is the age of the machine, the second feature $\xt_{i,2}$ is the condition at its last inspection, etc. The value $f^*(\xt_i)$ is the predicted probability of failure for $\xt_i$. Policy $\pi^*$ is the order in which the machines $\{\xt_i\}_i$ are repaired, which is chosen based on how likely they are to fail, that is, $\{f^*(\xt_i)\}_i$, and on the costs of the various types of repairs needed. The traditional sequential process picks a model $f^*$, based on past failure data without the knowledge of operational cost, and afterwards computes $\pi^*$ based on an optimization problem involving the $\{f^{*}(\xt_i)\}_{i}$'s and the operational cost. The new simultaneous process picks $f^{*}$ and $\pi^*$ at the same time, based on optimism or pessimism on the operational cost of $\pi^*$.

Formally, the \textbf{sequential process} computes the policy according to two steps, as follows.
\begin{description}
\item [Step 1:] Create function $f^{*}$ based on $\{(x_i, y_i)\}_i$ according to (\ref{eqn:reg-train-loss}). That is 
\[f^*\in\argmin_{f\in\F^{unc}} \left(\sum_{i=1}^{n} l(f(x_i),y_i)+C_{2}R(f)\right).\]
\item [Step 2:] Choose policy $\pi^*$ to minimize the operational cost, 
\[\pi^*\in\argmin_{\pi\in\Pi} \Obj(\pi,f^*,\{\tilde{x}_{i}\}_i).\] 
\end{description}
The operational cost $\Obj(\pi,f^*,\{\tilde{x}_{i}\}_i)$ is the amount the organization will spend if policy $\pi$ is chosen in response to the values of $\{f^*(\tilde{x}_{i})\}_{i}$.

To define the \textbf{simultaneous process}, we combine Steps 1 and 2 of the sequential process. We can choose an \textbf{optimistic bias}, where we prefer (all else being equal) a model providing lower costs, or we can choose a \textbf{pessimistic bias} that prefers higher costs, where the degree of optimism or pessimism is controlled by a parameter $C_1$. in other words, the optimistic bias lowers costs when there is uncertainty, whereas the pessimistic bias raises them. The new steps are as follows. 
\begin{description}
\small
\item [Step 1:]  Choose a model $f^{\circ}$ obeying one of the following:
\begin{align}\nonumber
\textrm{Optimistic}&\textrm{ Bias: }f^{\circ} \in \argminf \left[ \sum_{i=1}^{n} l\left(f(x_i),y_i\right) \right.\\
&\hspace*{-20pt} \left. +C_2 R(f) +C_1 \minpi \Obj\left(\pi,f,\{\tilde{x}_{i}\}_i\right) \right]\hspace*{0pt}\label{eqn:optimisticbias}\\\nonumber
\textrm{Pessimistic}&\textrm{ Bias: } f^{\circ} \in \argminf \left[ \sum_{i=1}^{n} l\left(f(x_i),y_i\right) \right.\\
&\hspace*{-20pt} \left.+C_2 R(f) -C_1 \minpi \Obj\left(\pi,f,\{\tilde{x}_{i}\}_i\right)  \right]\hspace*{-3pt}.\hspace*{0pt}\label{eqn:pessimisticbias}
\end{align}
\item [Step 2:] Compute the policy: $$\pi^{\circ} \in \argminpi \Obj\left(\pi,f^{\circ},\{\tilde{x}_{i}\}_i\right).$$
\end{description}

\textcolor{Black}{ 
When $C_1=0$, the simultaneous process becomes the sequential process; the sequential process is a special case of the simultaneous process.
}

\textcolor{Black}{ 
The optimization problem in the simultaneous process can be computationally difficult, particularly if the subproblem to minimize $\Obj$ involves discrete optimization. However, if the number of unlabeled instances is small, or if the policy decision can be broken into several smaller subproblems, then even if the training set is large, one can solve Step 1 using different types of mathematical programming solvers, including MINLP solvers \citep{bonmin}, Nelder-Mead \citep{neldermead} and Alternating Minimization schemes \citep{TuRuJaadt11}. One needs to be able to solve instances of that optimization problem in any case for Step 2 of the sequential process. The simultaneous process is more intensive than the sequential process in that it requires repeated solutions of that optimization problem, rather than a single solution.
}
 
The regularization term $R(f)$ can be for example, an $\ell_1$ or $\ell_2$ regularization term to encourage a sparse or smooth solution. 

\textcolor{Black}{ 
As the $C_1$ coefficient swings between large values for optimistic and pessimistic cases, the algorithm finds the best solution (having the lowest loss with respect to the data) for each possible cost. Once the regularization coefficient is too large, the algorithm will sacrifice empirical error in favor of lower costs, and will thus obtain solutions that are not reasonable.
When that happens, we know we have already mapped out the full range of costs for reasonable solutions. This range can be used for pre-allocation decisions. 
}

\textcolor{Black}{ 
By sweeping over a range of $C_{1}$, we obtain a range of costs that we might incur. Based on this range, we can choose to allocate a reasonable amount of resources so that we can react best to whatever nature brings. This helps answer question Q1 in Section \ref{sec:intro}. In addition, we can pick a value of $C_{1}$ such that the resulting operational cost is a specific amount. In this case, we checking whether a probabilistic model exists, corresponding to that cost, that is reasonably supported by data. This can answer question Q2 in Section \ref{sec:intro}.
}

 It is possible for the set of feasible policies $\Pi$ to depend on recommendations $\{f(\tilde{x}_{1}),...,f(\tilde{x}_{m})\}$, so that $\Pi=\Pi(f,\{\tilde{x}_{i}\}_i)$ in general. We will revisit this possibility in Section \ref{sec:robust}. It is also possible for the optimization over $\pi\in\Pi$ to be trivial, or the optimization problem could have a closed form solution. Our notation does accommodate this, and is more general.


One should not view the operational cost as a utility function that needs to be estimated, as in reinforcement learning, where we do not know the cost. Here one knows precisely what the cost will be under each possible outcome. Unlike in reinforcement learning, we have a complicated one shot decision problem at hand and have training data as well as future/unlabeled examples on which the predictive model makes prediction on. 

\textcolor{Black}{ 
The use of unlabeled data $\{\tilde{x}_{i}\}_{i}$ has been explored widely in the machine learning literature under semi-supervised, transductive, and unsupervised learning. In particular, we point out that the simultaneous process is not a semi-supervised learning method \citep[see][]{ChaSchZie06}, since it does not use the unlabeled data to provide information about the underlying distribution. A small unlabeled sample is not very useful for semi-supervised learning, but could be very useful for constructing a low-cost policy. The simultaneous process also has a resemblance to transductive learning \citep[see][]{Zhu07}, whose goal is to produce the output labels on the set of unlabeled examples; in this case, we produce a function (namely the operational cost) applied to those output labels. The simultaneous process, for a fixed choice of $C_1$, can also be considered as a multi-objective machine learning method, since it involves an optimization problem having two terms with competing goals \citep[see][]{Jin06}. 
}

\subsection{The Simultaneous Process in the Context of Structural Risk Minimization}\label{subsec:srm}

In the framework of statistical learning theory 
\citep[e.g.,][]{Vapnik98,Pollard84,Bartlett99,zhang02},
prediction ability of a class of models is guaranteed when the class has low ``complexity," where complexity is defined via covering numbers, VC (Vapnik-Chervonenkis) dimension, Rademacher complexity, gaussian complexity, etc. Limiting the complexity of the hypothesis space imposes a bias, and the classical image associated with the bias-variance tradeoff is provided in Figure \ref{FigureGeneraliz}(a).  
\begin{figure}
 \begin{center}
  \resizebox{270pt}{!} {
  \includegraphics{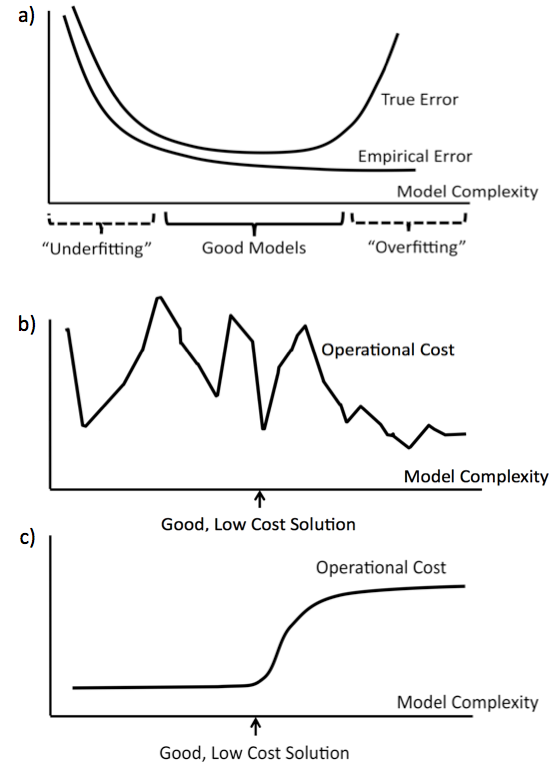}}
\end{center}
 \caption{\label{FigureGeneraliz} In all three plots, the x-axis represents model classes with increasing complexity. a) Relationship between training error and test error as a function of model complexity. b) A possible operational cost as a function of model complexity. c) Another possible operational cost.} 
\end{figure}
The set of good models is indicated on the axis of the figure. Models that are not good are either overfitted (explaining too much of the variance of the data, having a high complexity), or underfitted (having too strong of a bias and a high empirical error).
By understanding complexity, we can find a model class where both the training error and the complexity are kept low. An example of increasingly complex model classes is the set of nested classes of polynomials, starting with constants, then linear functions, second order polynomials and so on.

\textcolor{Black}{ 
In predictive modeling problems, there is often no one right statistical model when dealing with finite datasets, in fact there may be a whole class of good models. In addition,
it is possible that a small change in the choice of predictive model could lead to a large change in the cost required to implement the policy recommended by the model. This occurs, for instance, when costs are based on  objects (e.g., products) that come in discrete amounts.
Figure \ref{FigureGeneraliz}(b) illustrates this possibility, by showing that there may be a variety of costs amongst the class of good models. The simultaneous process can find the range of costs for the set of good models, which can be used for allocation of costs, as discussed in the first question Q1 in the introduction.
}

\textcolor{Black}{ 
Recall that question Q3 asked if our intuition about how much it will cost to solve a problem can help us produce a better probabilistic model.
Figure \ref{FigureGeneraliz} can be used to illustrate how this question can be answered. Assume we have a strong prior belief that the operational cost will not be above a certain fixed amount.}
Accordingly, we will choose only amongst the class of low cost models. This can significantly limit the complexity of the hypothesis space, because the set of low-cost good models might be much smaller than the full space of good models. Consider, for example, the cost displayed in Figure \ref{FigureGeneraliz}(c), where only models on the left part of the plot would be considered, since they are low cost models. Because the hypothesis space is smaller, we may be able to produce a tighter bound on the complexity of the hypothesis space, thereby obtaining a better prediction guarantee for the simultaneous process than for the sequential process. In Section \ref{sec:bound} we develop results of this type. These results indicate that in some cases, the operational cost can be an important quantity for generalization.


\section{Conceptual Demonstrations}\label{sec:experiments}

We provide four examples. In the first, we estimate manpower requirements for a scheduling task. In the second, we estimate real estate prices for a purchasing decision. In the third, we estimate call arrival rates for a call center staffing problem. In the fourth, we estimate failure probabilities for manholes (access points to an underground electrical grid). The first two are small scale reproducible examples, designed to demonstrate new types of constraints due to operational costs. In the first example, the operational cost subproblem involves scheduling. In the second, it is a knapsack problem, and in the third, it is another multidimensional knapsack variant. In the fourth, it is a routing problem. In the first, second and fourth examples, the operational cost leads to a linear constraint, while in the third example, the cost leads to a quadratic constraint.

Throughout this section, we will assume that we are working with linear functions $f$ of the form $\beta^{T}x$ so that $\Pi(f,\{\tilde{x}_{i}\}_{i})$ can be denoted by $\Pi(\beta,\{\tilde{x}_{i}\}_{i})$. We will set $R(f)$ to be equal to $\|\beta\|_{2}^{2}$.
We will also use the notation $\Fr$ to denote the set of linear functions that satisfy an additional property:
\[
\Fr := \{f \in \F^{unc}: R(f) \leq C_{2}^{*}\},
\]
where $C_{2}^{*}$ is a known constant greater than zero. We will use constant $C_{2}$ from (\ref{eqn:reg-train-loss}), and also $C_{2}^{*}$ from the definition of $\Fr$, to control the extent of regularization. $C_{2}$ is inversely related to $C_{2}^{*}$. We use both versions interchangeably throughout the paper.

\subsection{Manpower Data and Scheduling with Precedence Constraints} 
We aim to schedule the starting times of medical staff, who work at 6 stations, for instance, ultrasound, X-ray, MRI, CT scan, nuclear imaging, and blood lab. Current and incoming patients need to go through some of these stations in a particular order. The six stations and the possible orders are shown in Figure \ref{fig:problem_instance1}. Each station is denoted by a line. Work starts at the check-in (at time $\pi_1$) and ends at the check-out (at time $\pi_5$). The stations are numbered 6-11, in order to avoid confusion with the times $\pi_1$-$\pi_5$. The clinic has precedence constraints, where a station cannot be staffed (or work with patients) until the preceding stations are likely to finish with their patients. For instance, the check-out should not start until all the previous stations finish. Also, as shown in Figure \ref{fig:problem_instance1}, station 11 should not start until stations 8 and 9 are complete at time $\pi_4$, and station 9 should not start until station 7 is complete at time $\pi_3$. Stations 8 and 10 should not start until station 6 is complete.
(This is related to a similar problem called \textit{planning with preference} posed by F. Malucelli, Politecnico di Milano). 
\begin{figure}[t]
\centering
\includegraphics[width=0.4\textwidth]{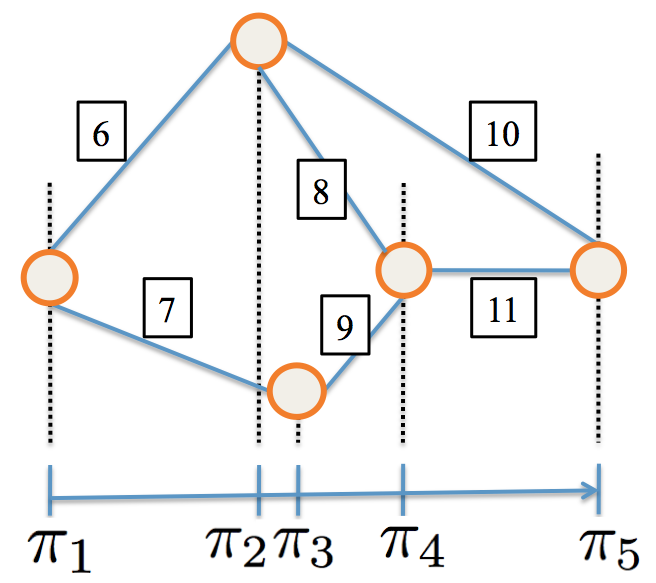}
\caption{Staffing estimation with bias on scheduling with precedence constraints.\label{fig:problem_instance1}}
\end{figure}

The operational goal is to minimize the total time of the clinic's operation, from when the check-in happens at time $\pi_1$ until the check-out happens at time $\pi_5$.
We estimate the time it takes for each station to finish its job with the patients based on two variables: the new load of patients for the day at the station, and the number of current patients already present. The data are available as \textit{manpower} in the R-package \textit{bestglm}, using ``Hour," ``Load" and ``Stay" columns. The training error is chosen to be the least squares loss between the estimated time for stations to finish their jobs ($\beta^{T}x_{i}$) and the actual times it took to finish ($y_{i}$). The unlabeled data are the new load and current patients present at each station for a given period, given as $\tilde{x}_{6},..,\tilde{x}_{11}$. Let $\pi$ denote the $5$-dimensional real vector with coordinates $\pi_{1},...,\pi_{5}$.

The operational cost is the total time $\pi_5-\pi_1$. Step 1, with an optimistic bias, can be written as:
\begin{equation}
\min_{\{\beta:\|\beta\|_{2}^{2}\leq C_{2}^{*}\}}\sum_{i=1}^{n}(y_{i}  -\beta^{T}x_{i})^{2} +C_{1}\min_{\pi \in \Pi(\beta,\{\tilde{x}_{i}\}_{i})} (\pi_{5} - \pi_{1} ),
\label{eqn:problem_instance1}
\end{equation}
where the feasible set $\Pi(\beta,\{\tilde{x}_{i}\}_{i})$ is defined by the following constraints:
\begin{align*}
\pi_{a}+ \beta^{T}\tilde{x}_{i} \leq \pi_{b}  ;& \;\; (a,i,b) \in \{(1,6,2),(1,7,3),(2,8,4),(3,9,4),(2,10,5),(4,11,5)\}\\
\pi_{a} \geq 0 & \textrm{ for } a = 1,...,5.
\end{align*}
To solve (\ref{eqn:problem_instance1}) given values of $C_{1}$ and $C_{2}$, we used a function-evaluation-based scheme called Nelder-Mead \citep{neldermead} where at every iterate of $\beta$, the subproblem for $\pi$ was solved to optimality (using Gurobi\footnote{Gurobi Optimizer v3.0, Gurobi Optimization, Inc. 2010.}). $C_{2}$ was chosen heuristically based on (\ref{eqn:reg-train-loss}) and kept fixed for the experiment beforehand.

Figure \ref{fig:problem_instance1_results} shows the operational cost, training loss, and $r^2$ statistic\footnote{If $\hat{y}_{i}$ are the predicted labels and $\bar{y}$ is the mean of $\{y_{1},...,y_{n}\}$, then the value of the $r^{2}$ statistic is defined as $1 - \sum_{i}(y_{i}-\hat{y}_{i})^{2}/\sum_{i}(y_{i}-\bar{y})^{2}$. Thus $r^2$ is an affine transformation of the sum of squares error. $r^2$ allows training and test accuracy to be measured on a comparable scale.} for various values of $C_1$. For $C_1$ values between $0$ and $0.2$, the operational cost varies substantially, by $\sim$16\%. The $r^2$ values for both training and test vary much less, by $\sim$3.5\%, where the best value happened to have the largest value of $C_1$. For small datasets, there is generally a variation between training and test:  for this data split, there is a 3.16\% difference in $r^2$ between training and test for plain least squares, and this is similar across various splits of the training and test data. 
\textcolor{Black}{ 
This means that for the scheduling problem, there is a range of reasonable predictive models within about $\sim$3.5\% of each other. 
}

\textcolor{Black}{ 
What we learn from this, in terms of the three questions in the introduction, is that: 1) There is a wide range of possible costs within the range of reasonable optimistic models. 2) We have found a reasonable scenario, supported by data, where the cost is 16\% lower than in the sequential case. 3) If we have a prior belief that the cost will be lower, the models that are more accurate are the ones with lower costs, and therefore we may not want to designate the full cost suggested by the sequential process. We can perhaps designate up to 16\% less.}

\begin{figure}
\centering 
     \includegraphics[width=.33\textwidth]{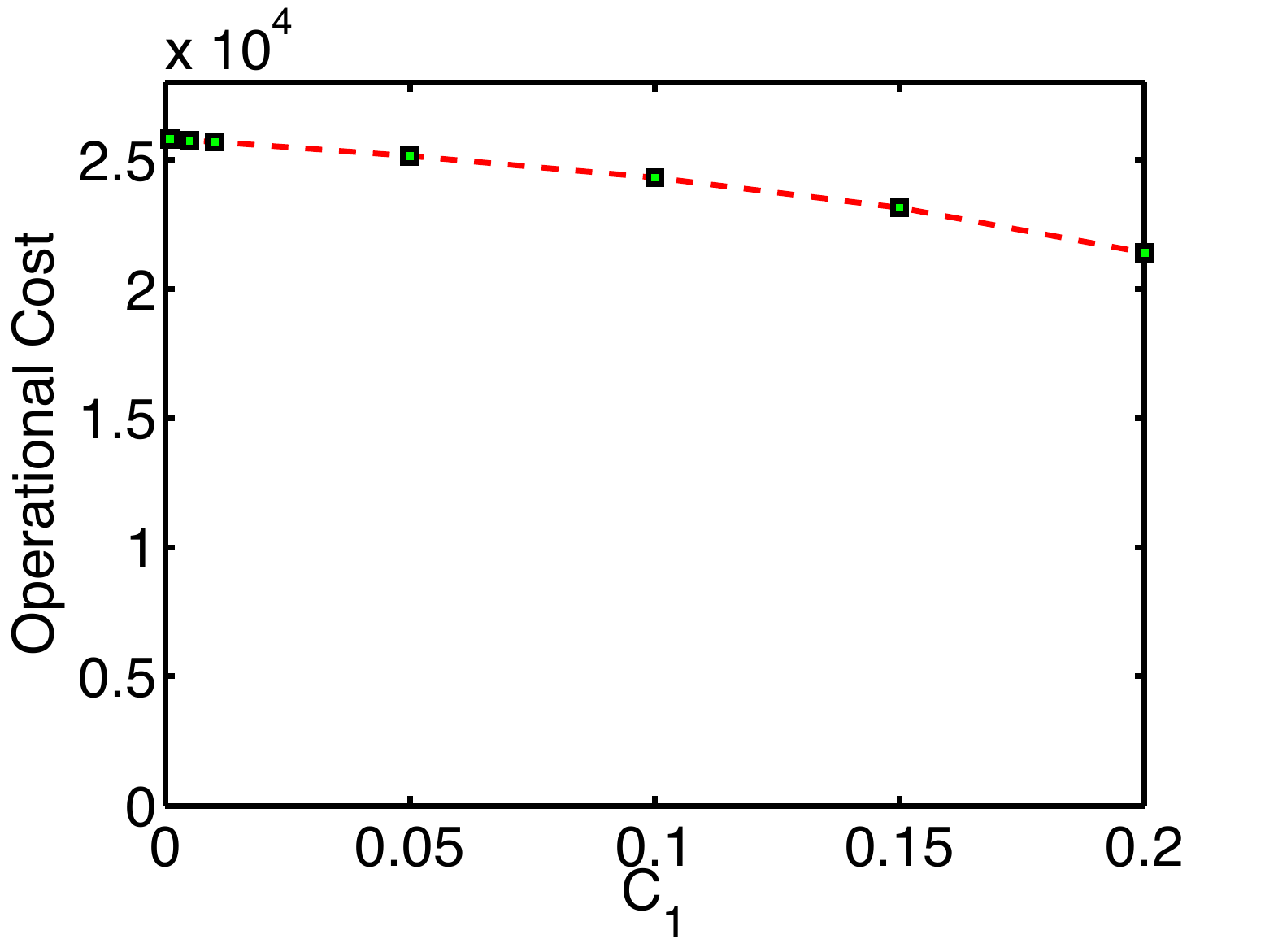}
\hspace{-15pt}
	\includegraphics[width=.33\textwidth]{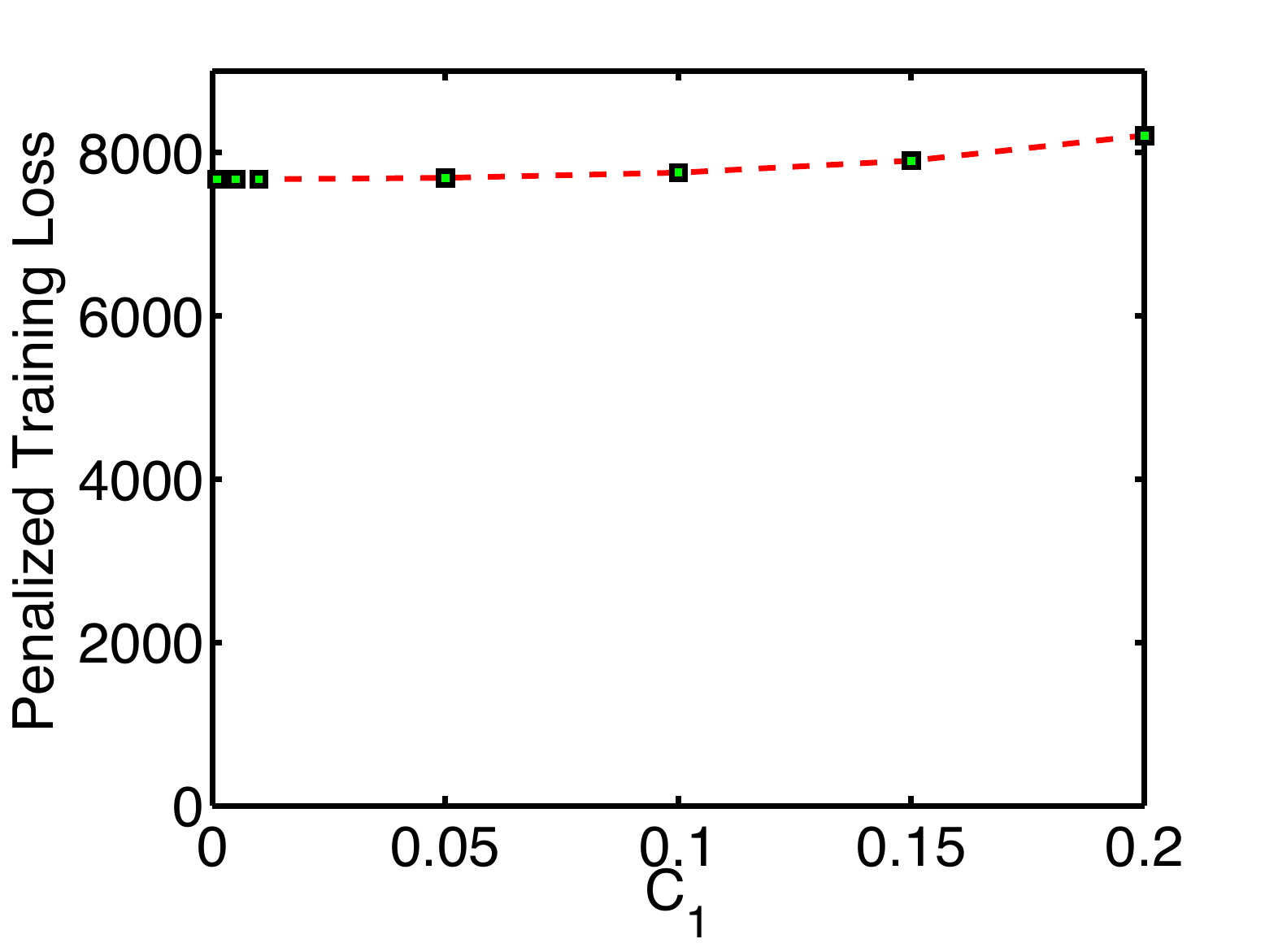}
\hspace{-15pt}
	\includegraphics[width=.33\textwidth]{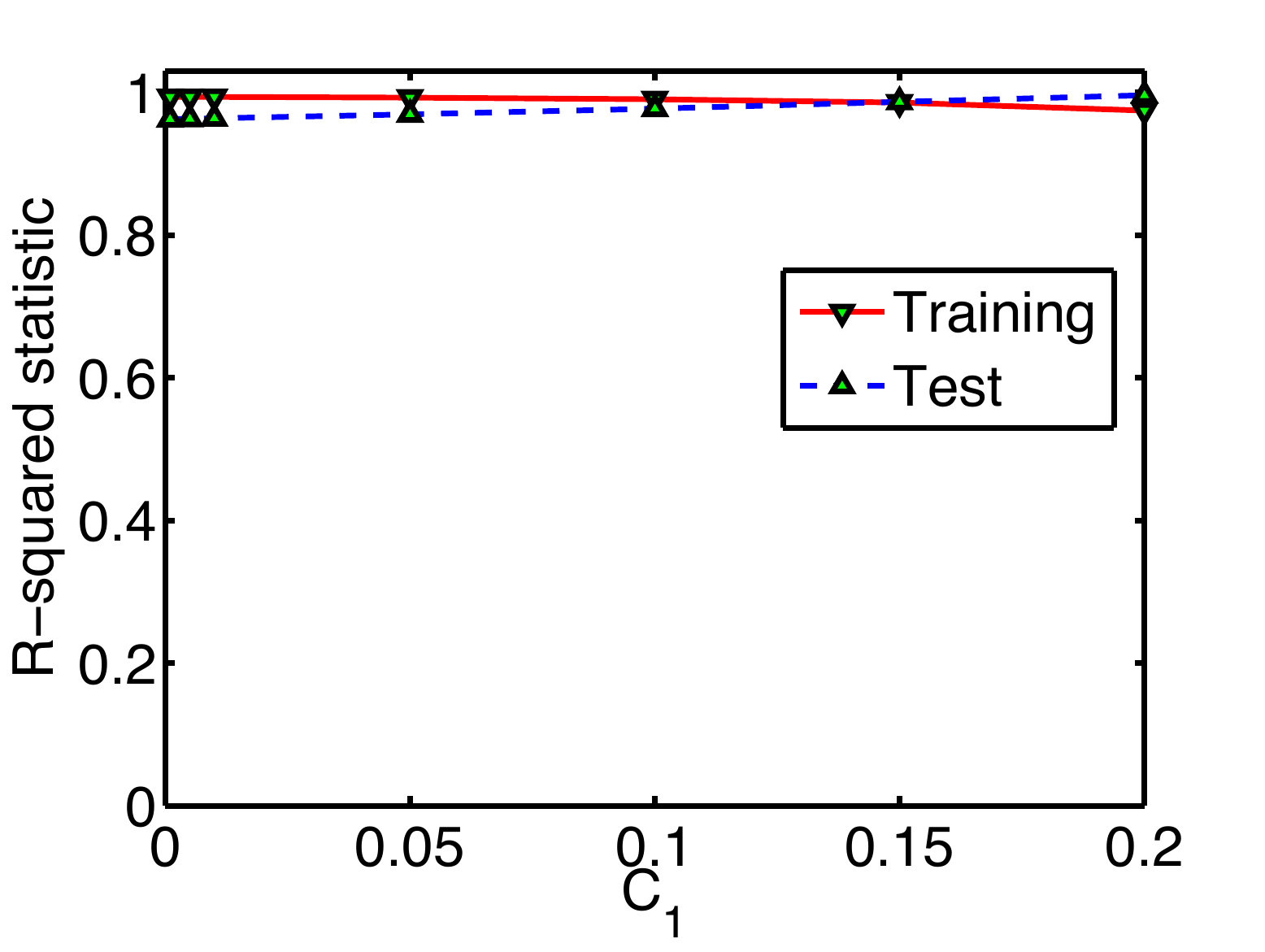}
\caption{\textit{Left:} Operational cost vs $C_{1}$. \textit{Center:} Penalized training loss vs $C_{1}$. \textit{Right:} R-squared statistic. $C_{1} = 0$ corresponds to the baseline, which is the sequential formulation. \label{fig:problem_instance1_results}}
\end{figure}

\vspace*{5pt}
\noindent \textbf{Connection to learning theory:} In the experiment, we used tradeoff parameter $C_1$ to provide a soft constraint. Considering instead the corresponding hard constraint $ \min_{\pi}(\pi_5-\pi_1)\leq\alpha$, the total time must be at least the time for any of the three paths in Figure \ref{fig:problem_instance1}, and thus at least the average of them:
\begin{align}
\nonumber \alpha\geq & \min_{\pi\in\Pi\{\beta,\{\tilde{x}_{i}\}_{i}\}}\pi_5-\pi_1\\
\nonumber \geq & \max\{(\tilde{x}_6+\tilde{x}_{10})^T\beta,(\tilde{x}_6+\tilde{x}_{8}+\tilde{x}_{11})^T\beta,(\tilde{x}_7+\tilde{x}_{9}+\tilde{x}_{11})^T\beta\}\\
\label{eqn:linear-const-relaxation1} \geq & z^T\beta
\end{align}
where 
$$z= \frac{1}{3}[( \tilde{x}_{6} + \tilde{x}_{10}) + ( \tilde{x}_{6} + \tilde{x}_{8} + \tilde{x}_{11}) + ( \tilde{x}_{7} + \tilde{x}_{9} + \tilde{x}_{11})].$$
The main result in Section \ref{sec:bound}, Theorem \ref{thm:main-bound}, is a learning theoretic guarantee in the presence of this kind of arbitrary linear constraint, $z^T\beta\leq \alpha$.

\subsection{Housing Prices and the Knapsack Problem} A developer will purchase 3 properties amongst the 6 that are currently for sale and in addition, will remodel them. 
She wants to maximize the total value of the houses she picks (the value of a property is its purchase cost plus the fixed remodeling cost).
 The fixed remodeling costs for the 6 properties are denoted $\{c_{i}\}_{i=1}^{6}$. She estimates the purchase cost of each property from data regarding historical sales, in this case, from the \textit{Boston Housing} data set \citep{uci}, which has 13 features. Let policy $\pi\in\{0,1\}^6$ be the $6$-dimensional binary vector that indicates the properties she purchases. Also, $x_{i}$ represents the features of property $i$ in the training data and $\tilde{x}_{i}$ represents the features of a different property that is currently on sale. The training loss is chosen to be the sum of squares error between the estimated prices $\beta^Tx_i$ and the true house prices $y_i$ for historical sales. The cost (in this case, total value) is the sum of the three property values plus the costs for repair work. A pessimistic bias on total value is chosen to motivate a min-max formulation. The resulting (mixed-integer) program for Step 1 of the simultaneous process is:
\begin{eqnarray}
\lefteqn{\nonumber
\min_{\beta\in\{\beta:\beta\in\R^{13},\|\beta\|_{2}^{2}\leq C_{2}^{*}\}}\sum_{i=1}^{n}(y_{i}-\beta^{T}x_{i})^{2}}
\\
&& + C_{1}\Bigg[\max_{\pi \in \{0,1\}^6}\sum_{i=1}^{6}(\beta^{T}\tilde{x}_{i} + c_{i})\pi_{i} \;\;\; \textrm{ subject to }\;\;\; \sum_{i=1}^{6} \pi_{i} \leq 3 \Bigg]. 
\label{eqn:problem_instance2}
\end{eqnarray}

Notice that the second term above is a $1$-dimensional $\{0,1\}$ knapsack instance. Since the set of policies $\Pi$ does not depend on $\beta$, we can rewrite (\ref{eqn:problem_instance2}) in a cleaner way that was not possible directly with (\ref{eqn:problem_instance1}):
\begin{align}
\nonumber \min_{\beta}\max_{\pi} & \Bigg[\sum_{i=1}^{n}(y_{i}-\beta^{T}x_{i})^{2} + C_{1}\sum_{i=1}^{6}(\beta^{T}\tilde{x}_{i} + c_{i})\pi_{i}\Bigg]\\
\nonumber \textrm{ subject to }&\;\;\; \\
\nonumber \beta & \in \{\beta:\beta\in\R^{13},\|\beta\|_{2}^{2}\leq C_{2}^{*}\}\\
\pi &\in \left\{\pi:  \pi \in \{0,1\}^6, \sum_{i=1}^{6} \pi_{i} \leq 3\right\}. 
\label{eqn:problem_instance2a}
\end{align}

To solve (\ref{eqn:problem_instance2a}) with user-defined parameters $C_{1}$ and $C_{2}$, we use {fminimax}, available through Matlab's Optimization toolbox.\footnote{Version 5.1, Matlab R2010b, Mathworks, Inc.} 

For the training and unlabeled set we chose, there is a change in policy above and below $C_1=0.05$, where different properties are purchased.  Figure \ref{fig:problem_instance2_results}  shows the operational cost which is the predicted total value of the houses after remodeling, the training loss, and $r^2$ values for a range of $C_1$. The training loss and $r^2$ values change by less than $\sim$3.5\%, whereas the total value changes about $6.5$\%.
\textcolor{Black}{ 
We can again draw conclusions in terms of the questions in the introduction as follows.}
\textcolor{Black}{ 
The pessimistic bias shows that even if the developer chose the best response policy to the prices, she might end up with the expected total value of the purchased properties on the order of $6.5$\% less if she is unlucky. Also, we can now produce a realistic model where the total value is $6.5$\% less. We can use this model to help her understand the uncertainty involved in her investment. }

\begin{figure}
\centering 
     \includegraphics[width=.33\textwidth]{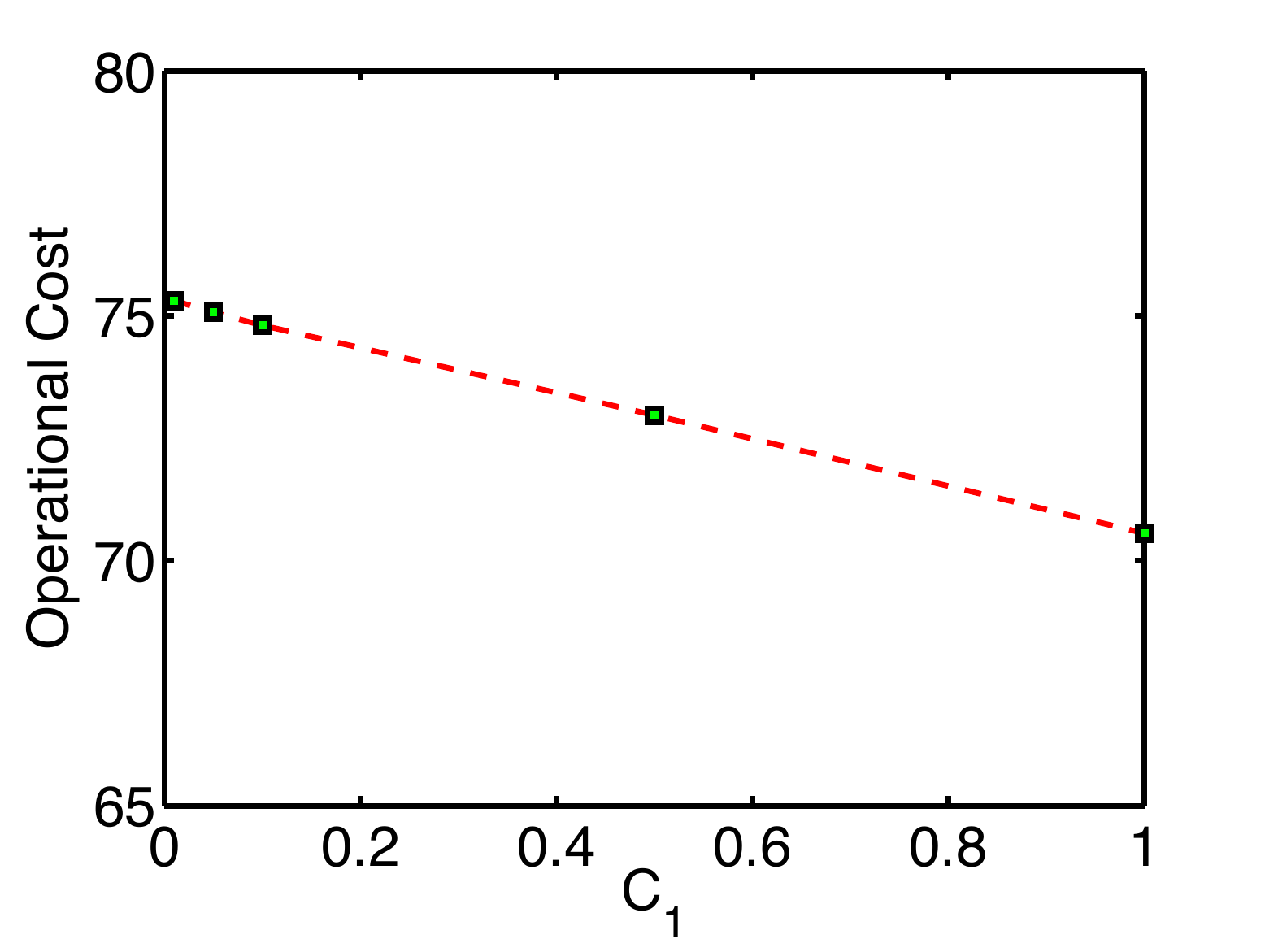}
\hspace{-15pt}
	\includegraphics[width=.33\textwidth]{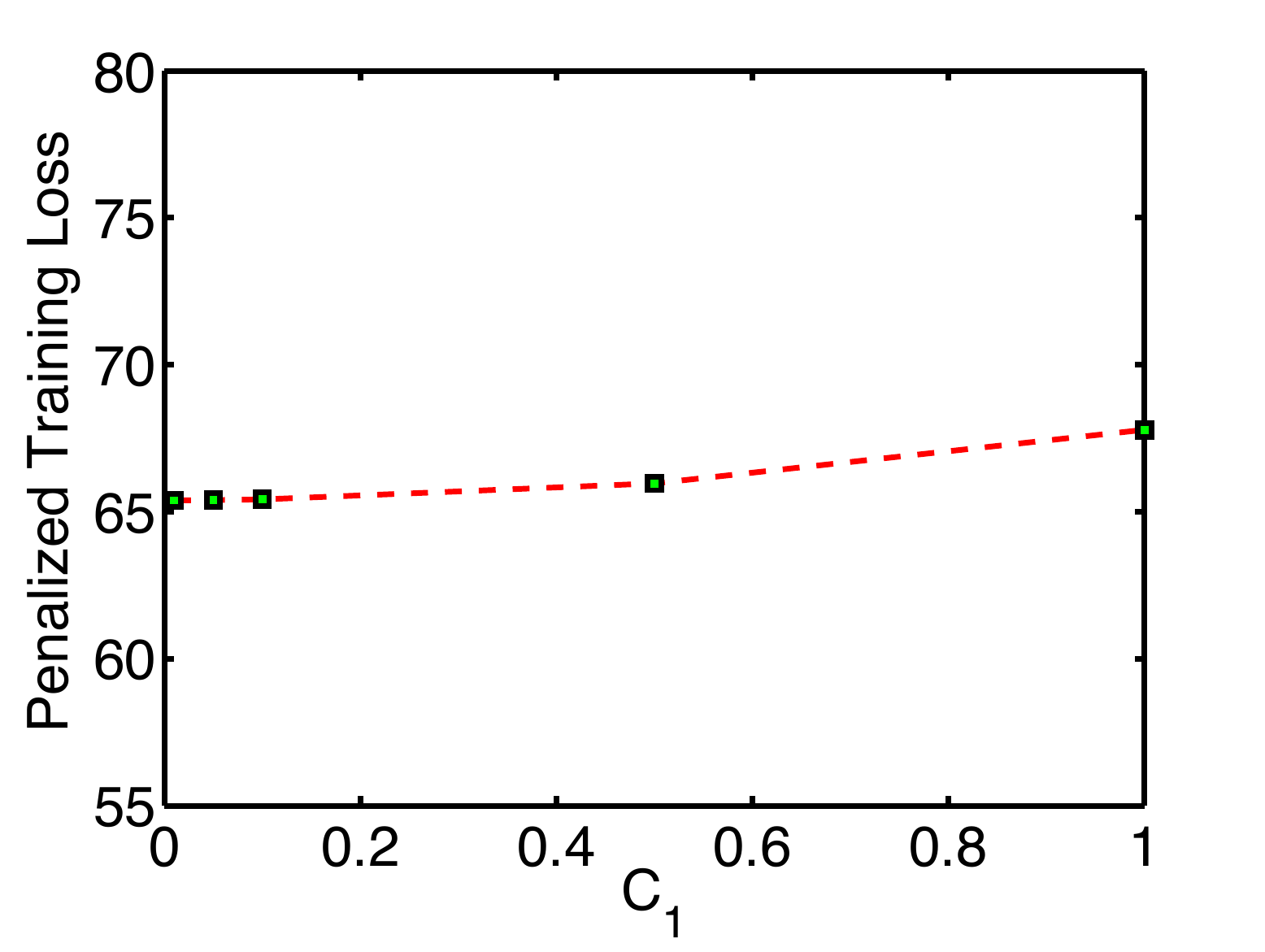}
\hspace{-15pt}
	\includegraphics[width=.33\textwidth]{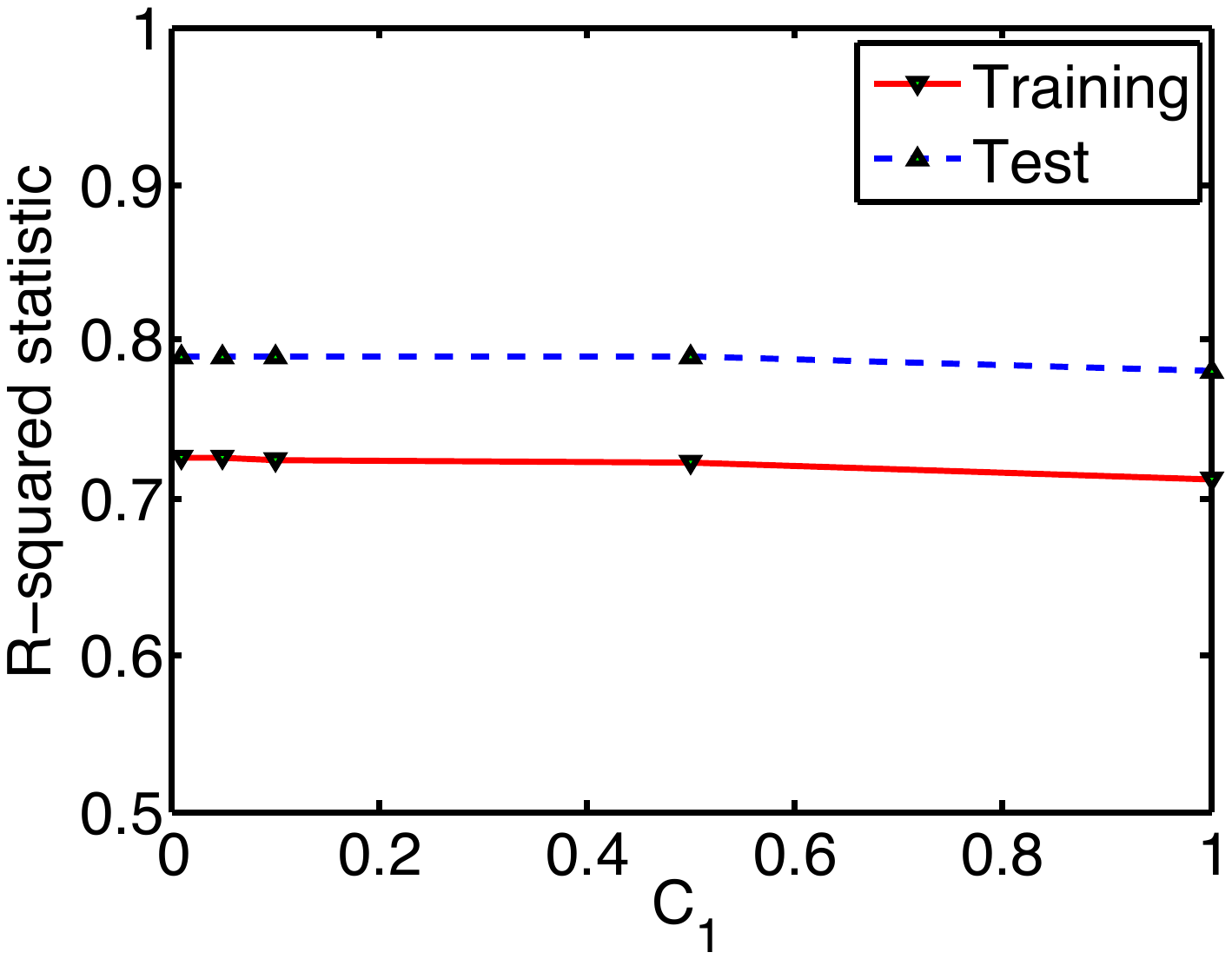}
\caption{\textit{Left:} Operational cost (total value) vs $C_{1}$. \textit{Center:} Penalized training loss vs $C_{1}$. \textit{Right:} R-squared statistic. $C_{1} = 0$ corresponds to the baseline, which is the sequential formulation. \label{fig:problem_instance2_results}}
\end{figure}

Before moving to the next application of the proposed framework, we provide a bound analogous to that of (\ref{eqn:linear-const-relaxation1}).
Let us replace the soft constraint represented by the second term of (\ref{eqn:problem_instance2}) with a hard constraint and then obtain a lower bound:
\begin{align}
\alpha\geq \max_{\pi\in\{0,1\}^6, \sum_{i=1}^{6}\pi_{i} \leq 3} \sum_{i=1}^{6}(\beta^{T}\tilde{x}_{i})\pi_{i}\geq \sum_{i=1}^{6}(\beta^{T}\tilde{x}_{i})\pi'_{i}, 
\label{eqn:linear-const-relaxation2}
\end{align}
where $\pi'$ is some feasible solution of the linear programming relaxation of this problem that also gives a lower objective value. For instance picking $\pi'_i=0.5$ for $i=1,\ldots,6$ is a valid lower bound giving us a looser constraint. 
The constraint can be rewritten:
\[\beta^{T}\left(\frac{1}{2}\sum_{i=1}^{n}\tilde{x}_{i}\right) \leq \alpha.\]  
This is again a linear constraint on the function class parametrized by $\beta$, which we can use for the analysis in Section \ref{sec:bound}.

Note that if all six properties were being purchased by the developer instead of three, the knapsack problem would have a trivial solution and the regularization term would be explicit (rather than implicit).
 
\subsection{A Call Center's Workload Estimation and Staff Scheduling} 

A call center management wants to come up with the per-half-hour schedule for the staff for a given day between 10am to 10pm. The staff on duty should be enough to meet the demand based on call arrival estimates $N(i), i=1,...,24$. The staff required will depend linearly on the demand per half-hour. The demand per half-hour in turn will be computed based on the Erlang C  model \citep{aldor2009workload} which is also known as the square-root staffing rule. This particular model relates the demand $D(i)$ to the call arrival rate $N(i)$ in the following manner: $D(i) \propto N(i) + c\sqrt{N(i)}$ where $c$ determines where on the QED (Quality Efficiency Driven) curve the center wants to operate on. We make the simplifying assumptions that the service time for each customer is constant, and that the coefficient $c$ is $0$. 

\begin{figure}
\begin{center}
  \resizebox{200pt}{200pt}{
	\includegraphics[width=0.4\textwidth]{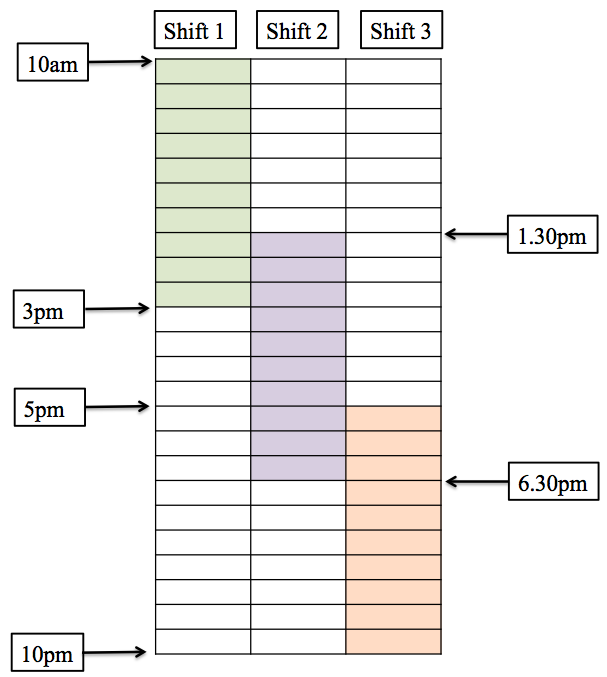}
	}
\caption{The three shifts for the call center. The cells represent half-hour periods, and there are 24 periods per work day. Work starts at 10am and ends at 10pm. \label{fig:problem_instance3}}
\end{center}
\end{figure}

If we know the call arrival rate $N(i)$, we can calculate the staffing requirements during each half hour. If we do not know the call arrival rate, we can estimate it from past data, and make optimistic or pessimistic staffing allocations.

There are additional staffing constraints as shown in Figure \ref{fig:problem_instance3}, namely, there are three sets of employees who work at the center such that: the first set can work only from 10am-3pm, the second can work from 1:30pm-6:30pm, and the third set works from 5pm-10pm.
The operational cost is the total number of employees hired to work that day (times a constant, which is the amount each person is paid). 
The objective of the management is to reduce the number of staff on duty but at the same time maintain a certain quality and efficiency.

The call arrivals are modeled as a poisson process \citep{aldor2009workload}. What previous studies \citep{brown2001root} have discovered about this estimation problem is that the square root of the call arrival rate tends to behave as a linear function of several features,  including: day of the week, time of the day, whether it is a holiday/irregular day, and whether it is close to the end of the billing cycle.  

Data for call arrivals and features were collected over a period of 10 months from Mid-February 2004 to the end of December 2004 \citep[this is the same dataset as in][]{aldor2009workload}. After converting categorical variables into binary encodings  (e.g., each of the 7 weekdays into 6 binary features) the number of features is 36, and we randomly split the data into a training set and test set (2764 instances for training; another 3308 for test). 

We now formalize the optimization problem for the simultaneous process. Let policy $\pi\in \mathbb{Z}_{+}^3$ be a size three vector indicating the number of employees for each of the three shifts. The training loss is the sum of squares error between the estimated square root of the arrival rate $\beta^Tx_i$ and the actual square root of the arrival rate $y_i := \sqrt{N(i)}$. The cost  is proportional to the total number of employees signed up to work, $\sum_i \pi_i$. An optimistic bias on cost is chosen, so that the (mixed-integer) program for Step 1 is:
\begin{eqnarray}
\nonumber
\lefteqn{\min_{\beta:\|\beta\|_{2}^{2}\leq C_{2}^{*}}\sum_{i=1}^{n}(y_{i}-\beta^{T}x_{i})^{2}}\\ 
&&+ C_{1}\Bigg[ \min_{\pi}\sum_{i=1}^{3}\pi_{i} \textrm{ subject to } a_{i}^{T} \pi \geq (\beta^{T}\tilde{x}_{i})^{2} \textrm{ for } i=1,...,24, \pi \in \mathbb{Z}_{+}^3 \Bigg], 
\label{eqn:problem_instance3}
\end{eqnarray}
where Figure \ref{fig:problem_instance3} illustrates the matrix $A$ with the shaded cells containing entry $1$ and $0$ elsewhere. The notation $a_i$ indicates the $i^{\textrm{th}}$ row of $A$:
\begin{align*}
a_{i}(j) = \left\{ \begin{array}{cc} 1 & \textrm{ if staff $j$ can work in half-hour period $i$} \\  0 & \textrm{ otherwise.} \end{array} \right.
\end{align*}

To solve (\ref{eqn:problem_instance3})  we first relax the $\ell_{2}$-norm constraint on $\beta$ by adding another term to the function evaluation, namely $C_2\|\beta\|_{2}^{2}$. This, way we can use a function-evaluation based scheme that works for unconstrained optimization problems. As in the manpower scheduling example, we used an implementation of the Nelder-Mead algorithm, where at each step, Gurobi was used to solve the mixed-integer subproblem for finding the policy.

\textcolor{Black}{ 
Figure \ref{fig:problem_instance3_results} shows the operational cost, the training loss, and $r^2$ values for a range of $C_1$. The training loss and $r^2$ values change only $\sim$1.6\% and $\sim$3.9\% respectively, whereas the operational cost  changes about 9.2\%. 
Similar to the previous two examples, we can again draw conclusions in terms of the questions in Section \ref{sec:intro} as follows.
The optimistic bias shows that the management might incur operational costs on the order of $9$\% less if they are lucky. Further, the simultaneous process produces a reasonable model  where costs are about $9$\% less. If the management team believes they will be reasonably lucky, they can justify designating substantially less than the amount suggested by the traditional sequential process. 
}

\begin{figure}
\centering 
     \includegraphics[width=.33\textwidth]{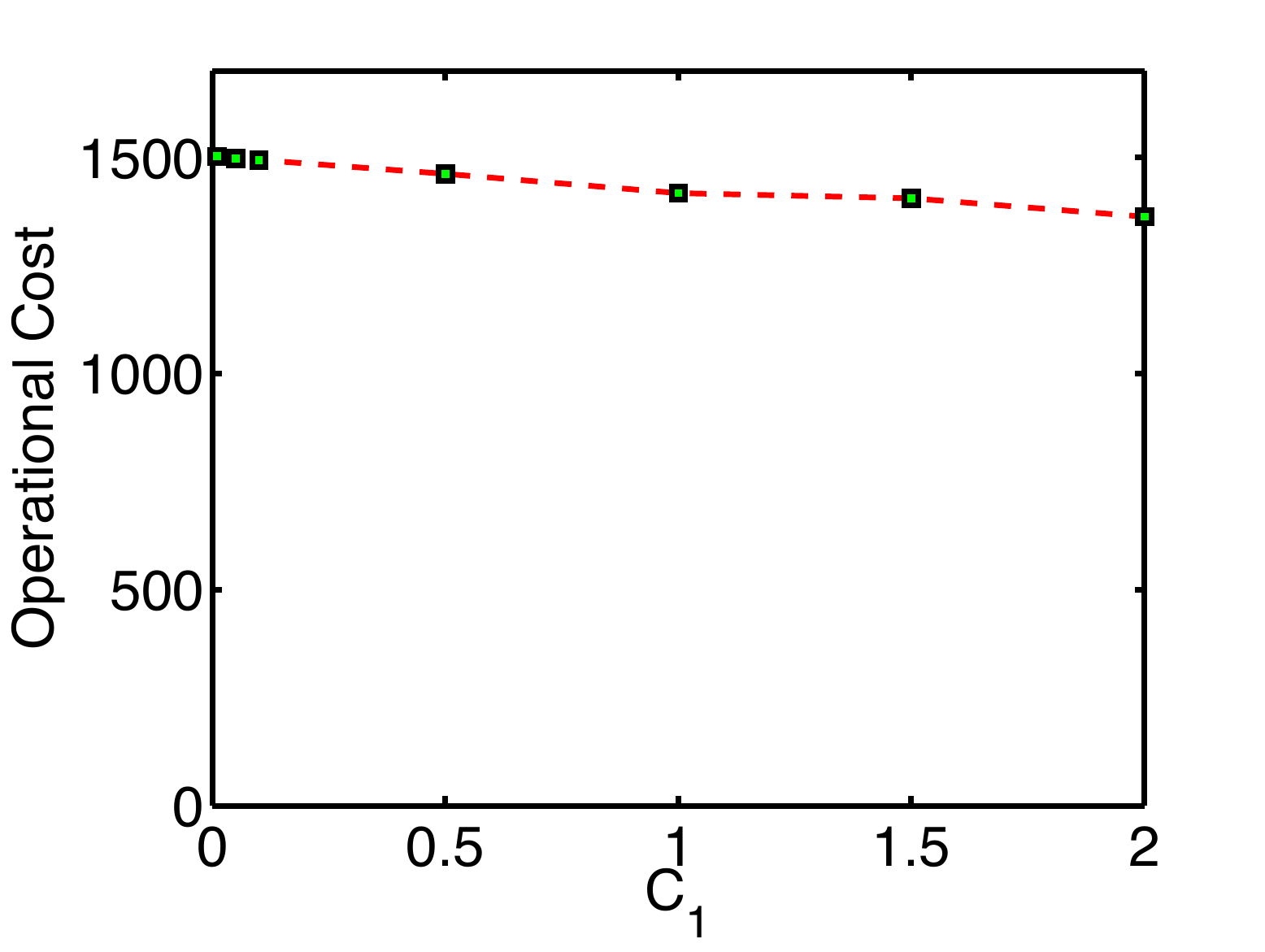}
\hspace{-15pt}
	\includegraphics[width=.33\textwidth]{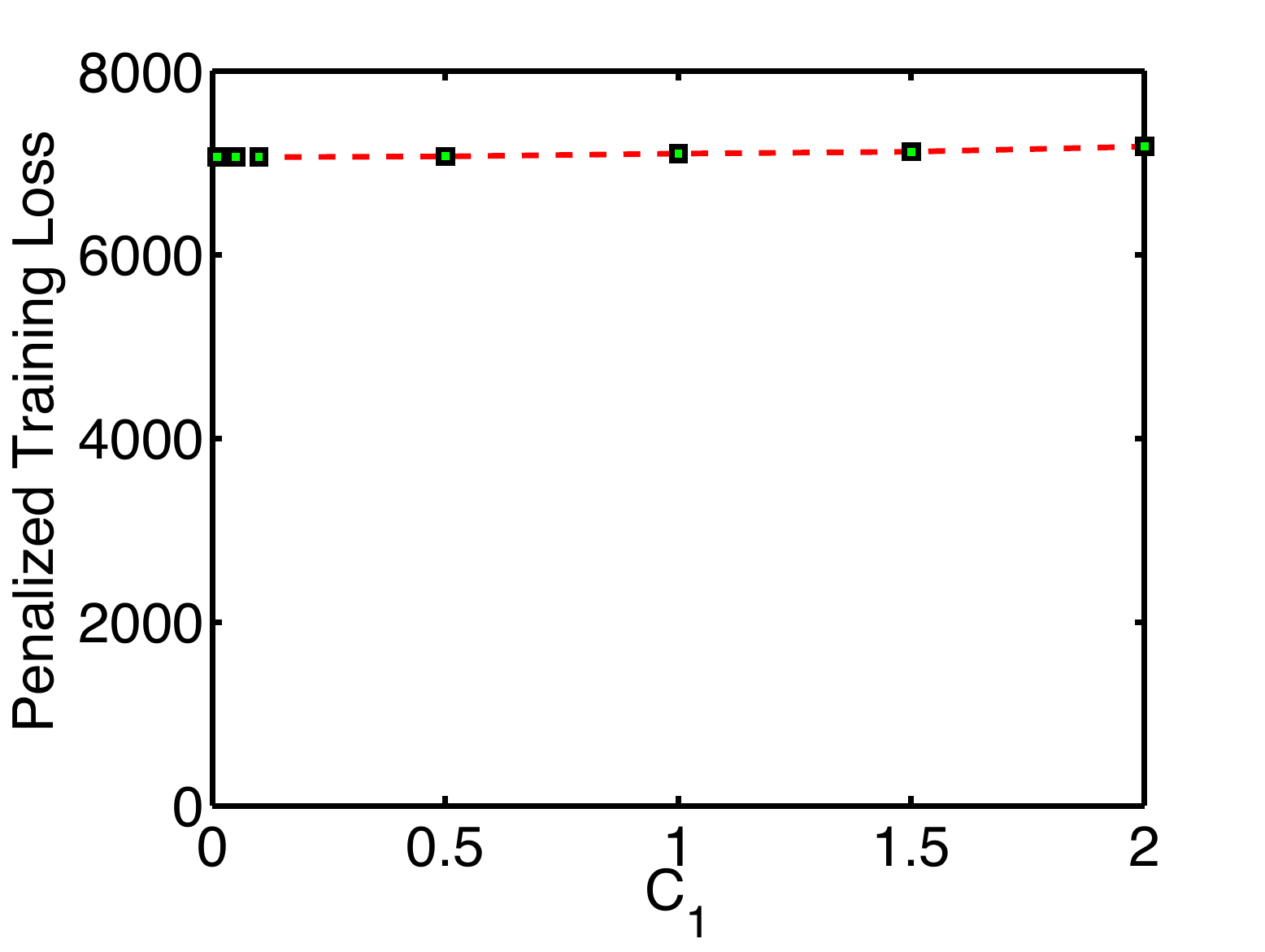}
\hspace{-15pt}
	\includegraphics[width=.33\textwidth]{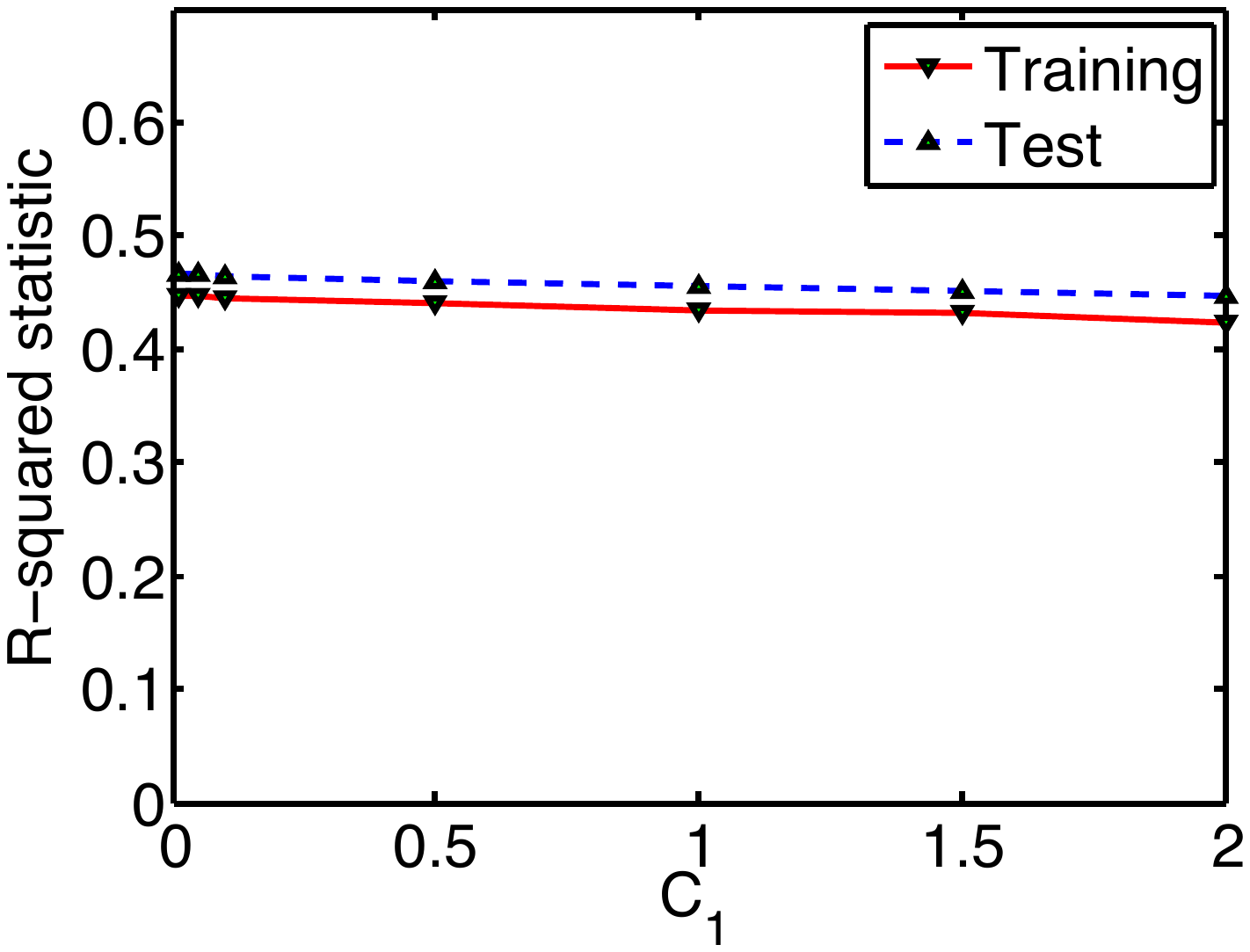}
\caption{\textit{Left:} Operational cost vs $C_{1}$. \textit{Center:} Penalized training loss vs $C_{1}$. \textit{Right:} R-squared statistic. $C_{1} = 0$ corresponds to the baseline, which is the sequential formulation.\label{fig:problem_instance3_results}}
\end{figure}

Let us now investigate the structure of the operational cost regularization term we have in  (\ref{eqn:problem_instance3}). For convenience, let us stack the quantities $(\beta^{T}\tilde{x}_{i})^{2}$ as a vector $b \in \mathbb{R}^{24}$. Also let boldface symbol $\mathbf{1}$ represent a vector of all ones. If we replace the soft constraint represented by the second term with a hard constraint having an upper bound $\alpha$, we get:
\begin{align*}
\alpha & \geq \min_{\pi\in\mathbb{Z}_{+}^3; A\pi \geq b} \sum_{i=1}^{3}\mathbf{1}^{T}\pi \overset{(\dagger)}{\geq}  \min_{\pi\in\mathbb{R}_{+}^3; A\pi \geq b} \sum_{i=1}^{3}\mathbf{1}^{T}\pi \overset{(\ddagger)}= \max_{w\in \mathbb{R}_{+}^{24}; A^{T}w \leq \mathbf{1}}\sum_{i=1}^{24}w_{i}(\beta^{T}\tilde{x}_{i})^{2}\\
& \overset{(*)}{\geq}  \sum_{i=1}^{24}\frac{1}{10}(\beta^{T}\tilde{x}_{i})^{2}.
\end{align*}
Here $\alpha$ is related to the choice of $C_{1}$ and is fixed. $(\dagger)$ represents an LP relaxation of the integer program with $\pi$ now belonging to the positive orthant rather than the cartesian product of set of positive integers. $(\ddagger)$ is due to LP strong duality and $(*)$ is by choosing an appropriate feasible dual variable. Specifically, we pick $w_{i} = \frac{1}{10}$ for $i=1,\ldots,24$, which is feasible because staff cannot work more than 10 half hour shifts (or 5 hours).  With the three inequalities, 
we now have a constraint on $\beta$ of the form:
\[
\sum_{i=1}^{24}(\beta^{T}\tilde{x}_{i})^{2} \leq 10\alpha.
\]
This is a quadratic form in $\beta$ and gives an ellipsoidal feasible set. We already had a simple ellipsoidal feasibility constraint while defining the minimization problem of (\ref{eqn:problem_instance3}) of the form $\|\beta\|_{2}^{2} \leq C_{2}^{*}$. Thus, we can see that our effective hypothesis set (the set of linear functionals satisfying these constraints) has become smaller. This in turn affects generalization. We are investigating generalization bounds for this type of hypothesis set in separate ongoing work.

\subsection{The Machine Learning and Traveling Repairman Problem (ML\&TRP) \citep{TuRuJaadt11}}
Recently, power companies have been investing in intelligent ``proactive" maintenance for the power grid, in order to enhance public safety and reliability of electrical service. For instance, New York City has implemented new inspection and repair programs for manholes, where a manhole is an access point to the underground electrical system. Electrical grids can be extremely large (there are on the order of 23,000-53,000 manholes in each borough of NYC), and parts of the underground  distribution network in many cities can be as old as 130 years, dating from the time of Thomas Edison. Because of the difficulties in collecting and analyzing historical electrical grid data, electrical grid repair and maintenance has been performed reactively (fix it only when it breaks), until recently \citep{Urbina04}. These new proactive maintenance programs open the door for machine learning to assist with smart grid maintenance. 

Machine learning models have started to be used for proactive maintenance in NYC, where supervised ranking algorithms are used to rank the manholes in order of predicted susceptibility to failure (fires, explosions, smoke)  so that the most vulnerable manholes can be prioritized \citep{RudinEtAl10,RudinEtAl12,RudinEtAl2011ComputerMagazine}. The machine learning algorithms make reasonably accurate predictions of manhole vulnerability; however, they do not (nor would they, using any other prediction-only technique) take the cost of repairs into account when making the ranked lists.  They do not know that it is unreasonable, for example, if a repair crew has to travel across the city and back again for each manhole inspection, losing important time in the process. The power company must solve an optimization problem to determine the best repair route, based on the machine learning model's output. We might wish to find a policy that is not only supported by the historical power grid data (that ranks more vulnerable manholes above less vulnerable ones), but also would give a better route for the repair crew. An algorithm that could find such a route would lead to an improvement in repair operations on NYC's power grid, other power grids across the world, and improvements in many different kinds of routing operations (delivery trucks, trains, airplanes). 

The simultaneous process could be used to solve this problem, where the operational cost is the price to route the repair crew along a graph, and the probabilities of failure at each node in the graph must be estimated. We call this the  ``the machine learning and traveling repairman problem" (ML\&TRP) and in 
our ongoing work 
\citep{TuRuJaadt11}
, we have developed several formulations for the ML\&TRP. We demonstrated, using manholes from the Bronx region of NYC, that it is possible to obtain a much more practical route using the ML\&TRP, by taking the cost of the route optimistically into account in the machine learning model.
We showed also that from the routing problem, we can obtain a linear constraint on the hypothesis space, in order to apply the generalization analysis of Section \ref{sec:bound}
(and in order to address question Q3 of Section \ref{sec:intro}).


\section{Connections to Robust Optimization}\label{sec:robust}

\textcolor{Black}{ 
The goal of robust optimization (RO) is to provide the best possible policy that is acceptable under a wide range of situations.\footnote{http://en.wikipedia.org/wiki/Robust\_optimization}
This is different from the simultaneous process, which aims to find the best policies and costs for specific situations. Note that it is not always desirable to have a policy that is robust to a wide range of situations; this is a question of whether to 
respond to every situation simultaneously or whether to understand the single worst situation that could reasonably occur (which is what the pessimistic simultaneous formulation handles). In general, robust optimization can be overly pessimistic, requiring us to allocate enough to handle all reasonable situations; it can be substantially more pessimistic than the pessimistic simultaneous process.
}

\textcolor{Black}{ 
In robust optimization, if there are several real-valued parameters involved in the optimization problem, we might declare a reasonable range, called the ``uncertainty set," for each parameter (\textit{e.g.} $a_1 \in [9,10]$, $a_2 \in [1,2]$). Using techniques of RO, we would minimize the largest possible operational cost that could arise from parameter settings in these ranges. Estimation is not usually involved in the study of robust optimization \citep[with some exceptions, see][who consider support vector machines]{Xu2009}. On the other hand, one could choose the uncertainty set according to a statistical model, which is how we will build a connection to RO. Here, we choose the uncertainty set to be the class of models that fit the data to within $\epsilon$, according to some fitting criteria.  
}

The major goals of the field of RO include algorithms, geometry, and tractability 
in finding the best policy, whereas our work is not concerned with finding a robust policy, but we are concerned with estimation, taking the policy into account. 
Tractability for us is not always a main concern as we need to be able to solve the optimization problem, even to use the sequential process. Using even a small optimization problem as the operational cost might have a large impact on the model and decision. If the unlabeled set is not too large, or if the policy optimization problem can be broken into smaller subproblems, there is no problem with tractability. 
An example where the policy optimization might be broken into smaller subproblems is when the policy involves routing several different vehicles, where each vehicle must visit part of the unlabeled set; in that case there is a small subproblem for each vehicle.
On the other hand, even though the goals of the simultaneous process and RO are entirely different, there is a strong connection with respect to the formulations for the simultaneous process and RO, and a class of problems for which they are equivalent. We will explore this connection in this section.

There are other methods that consider uncertainty in optimization, though not via the lens of estimation and learning. In the simplest case, one can perform both local and global sensitivity analysis for linear programs to ascertain uncertainty in the optimal solution and objective, but these techniques generally only handle simple forms of uncertainty \citep{Vanderbei}. 
Our work is also related to stochastic programming, where the goal is to find a policy that is robust to almost all of the possible circumstances (rather than all of them), where there are random variables governing the parameters of the problem, with known distributions \citep{birge1997introduction}. Again, our goal is not to find a policy that is necessarily robust to (almost all of) the worst cases, and estimation is again not the primary concern for stochastic programming, rather it is how to take known randomness into account when determining the policy.

\subsection{Equivalence Between RO and the Simultaneous Process in Some Cases}

In this subsection we will formally introduce RO. In order to connect RO to estimation, we will define the uncertainty set for RO, denoted $\mathcal{F}_{good}$, to be models for which the average loss on the sample is within $\epsilon$ of the lowest possible. Then we will present the equivalence relationship between RO and the simultaneous process, using a minimax theorem.

In Section \ref{sec:formulation}, we had introduced the notation $\{(x_{i},y_{i})\}_i$ and $\{\tilde{x}_{i}\}_{i}$ for labeled and unlabeled data respectively. We had also introduced the class $\mathcal{F}^{unc}$ in which we were searching for a function $f^{*}$ by minimizing an objective of the form (\ref{eqn:reg-train-loss}). 
The uncertainty set $\mathcal{F}_{good}$ will turn out to be a subset of $\mathcal{F}^{unc}$ that depends on $\{(x_{i},y_{i})\}_{i}$ and $f^{*}$ but not on $\{\tilde{x}_{i}\}_{i}$. 

We start with plain (non-robust) optimization, using a general version of the vanilla sequential process. Let $f$ denote an element of the set $\mathcal{F}_{good}$, where $f$ is pre-determined, known and fixed. Let the optimization problem for the policy decision $\pi$ be defined by:
\begin{equation}
\min_{\pi \in \Pi(f;\{\tilde{x}\}_{i})} \Obj(\pi,f;\{\tilde{x}_{i}\}), \hspace*{50pt}\textit{(Base problem)} 
\label{eqn:ro_baseproblem}
\end{equation}
where $\Pi(f;\{\tilde{x}_{i}\})$ is the feasible set for the optimization problem. 
Note that this is a more general version of the sequential process than in Section \ref{sec:formulation}, since we have allowed the constraint set $\Pi$ to be a function of both $f$ and $\{\tilde{x}_{i}\}_i$, whereas in (\ref{eqn:optimisticbias}) and (\ref{eqn:pessimisticbias}), only the objective and not the constraint set can depend on $f$ and $\{\tilde{x}_{i}\}_{i}$. Allowing this more general version of $\Pi$ will allow us to relate (\ref{eqn:ro_baseproblem}) to RO more clearly, and will help us to specify the additional assumptions we need in order to show the equivalence relationship. Specifically, in Section \ref{sec:formulation}, $\Obj$ depends on $(f, \{\tilde{x}_{i}\}_{i})$ but not $\Pi$; whereas in RO, generally $\Pi$ depends on $(f,\{\tilde{x}_{i}\}_{i})$ but not $\Obj$.
The fact that $\Obj$ does not need to depend on $f$ and $\{\tilde{x}_{i}\}_{i}$ is not a serious issue, since we can generally remove their dependence through auxiliary variables.
 For instance, if the problem is a minimization of the form (\ref{eqn:ro_baseproblem}), we can use an auxiliary variable, say $t$, to obtain an equivalent problem: 
\begin{align*}
							   	\min_{\pi,t} t  & \hspace*{50pt}\textit{(Base problem reformulated)}\\
\textrm{ such that }  \pi \in \Pi(f;\{\tilde{x}_{i}\})  &\\
 							    \Obj(\pi,f;\{\tilde{x}_{i}\}) \leq t&
\end{align*}
where the dependence on $( f,\{\tilde{x}_{i}\}_i  )$ is present only in the (new) feasible set.
Since we had assumed $f$ to be fixed, this is a deterministic optimization problem (convex, mixed-integer, nonlinear, etc.).

Now, consider the case when $f$ is not known exactly but only known to lie in the uncertainty set $\mathcal{F}_{good}$. 
The robust counterpart to (\ref{eqn:ro_baseproblem}) can then be written as:
\begin{equation}
 \min_{\pi \in \cuppi \hspace*{-2pt} \Pi(g;\{\tilde{x}\}_{i}) }\;\;\max_{f \in \mathcal{F}_{good}} \Obj(\pi,f;\{\tilde{x}_{i}\}) \hspace*{50pt}\textit{(Robust counterpart)} 
\label{eqn:ro_counterpart}
\end{equation}
where we obtain a ``robustly feasible solution" that is guaranteed to remain feasible for all values of $f \in \mathcal{F}_{good}$. In general, (\ref{eqn:ro_counterpart}) is much harder to solve than (\ref{eqn:ro_baseproblem}) and is a topic of much interest in the robust optimization community. 
As we discussed earlier, there is no focus in (\ref{eqn:ro_counterpart}) on estimation, but it is possible to embed an estimation problem within the description of the set $\mathcal{F}_{good}$, which we now define formally.

In Section \ref{sec:experiments}, $\Fr$ (a subset of $\mathcal{F}^{unc}$) was defined as the set of linear functionals with the property that $R(f) \leq C_{2}^{*}$. That is,
$$\Fr = \left\{f: f \in \mathcal{F}^{unc}, R(f) \leq C_{2}^{*}\right\}.$$
We define $\mathcal{F}_{good}$ as a subset of $\Fr$ by adding an additional property:
\begin{align}
\label{fgood}
\mathcal{F}_{good} = \left\{f:   f \in \Fr, \sum_{i=1}^{n} l\left(f(x_i),y_i\right) \leq  \sum_{i=1}^{n} l\left(f^{*}(x_i),y_i\right) + \epsilon \right\}, 
\end{align}
for some fixed positive real $\epsilon$. In (\ref{fgood}), again $f^{*}$ is a solution that minimizes the objective in (\ref{eqn:reg-train-loss}) over $\mathcal{F}^{unc}$.  The right hand side of the inequality in (\ref{fgood}) is thus constant, and we will henceforth denote it with a single quantity $C_{1}^{*}$. Substituting this definition of $\mathcal{F}_{good}$ in (\ref{eqn:ro_counterpart}), and further making an important assumption (denoted \textbf{A1}) that $\Pi$ is not a function of $(f,\{\tilde{x}_{i}\}_{i})$, we get the following optimization problem:
\begin{equation}
 \min_{\pi \in \Pi} \max_{\{f \in \Fr: \sum_{i=1}^{n}l(f(x_{i}),y_{i}) \leq C_{1}^{*}\}} \Big[ \Obj\left(\pi,f,\{\tilde{x}_{i}\}_i\right)\Big] \hspace*{10pt}\textit{(Robust counterpart with assumptions)}
 \label{eqn:risk-averse-loss}
\end{equation}
where $C_{1}^{*}$ now controls the amount of the uncertainty via the set $\mathcal{F}_{good}$. 

Before we state the equivalence relationship, we restate the formulations for optimistic and pessimistic biases on operational cost in the simultaneous process from (\ref{eqn:optimisticbias}) and (\ref{eqn:pessimisticbias}):
\begin{align}
\nonumber
\min_{f\in\mathcal{F}^{unc}} \left[ \sum_{i=1}^{n} l\left(f(x_i),y_i\right) + C_{2}R(f) + C_1 \min_{\pi\in\Pi} \Obj\left(\pi,f,\{\tilde{x}_{i}\}_i\right)  \right]& \textit{(Simultaneous optimistic)}\\
\label{eqn:pessimisticbiasfull}
\min_{f\in\mathcal{F}^{unc}} \left[ \sum_{i=1}^{n} l\left(f(x_i),y_i\right) + C_{2}R(f) - C_1 \min_{\pi\in\Pi} \Obj\left(\pi,f,\{\tilde{x}_{i}\}_i\right)  \right]& \textit{(Simultaneous pessimistic)}
\end{align}

Apart from the assumption \textbf{A1} on the decision set $\Pi$ that we made in (\ref{eqn:risk-averse-loss}), we will also assume that $\mathcal{F}_{good}$ defined in (\ref{fgood}) is convex; this will be assumption \textbf{A2}. If we also assume that the objective $\Obj$ satisfies some nice properties (\textbf{A3}), and that uncertainty is characterized via the set $\mathcal{F}_{good}$, then we can show that the two problems, namely (\ref{eqn:pessimisticbiasfull}) and (\ref{eqn:risk-averse-loss}), are equivalent. Let $\Leftrightarrow$ denote equivalence between two problems, meaning that a solution to one side translates into the solution of the other side for some parameter values ($C_{1},C_{1}^{*},C_{2},C_{2}^{*}$).

\begin{proposition}
Let $\Pi(f;\{\tilde{x}_{i}\}_{i}) = \Pi$ be compact, convex, and independent of parameters $f$ and $\{\tilde{x}_{i}\}_{i}$ (assumption \textbf{A1}). Let $\{f \in \Fr: \sum_{i=1}^{n}l(f(x_{i}),y_{i}) \leq C_{1}^{*}\}$ be convex (assumption \textbf{A2}). Let the cost (to be minimized) $\Obj(\pi,f,\{\tilde{x}_{i}\}_{i})$ be concave continuous in $f$ and convex continuous in $\pi$ (assumption \textbf{A3}). Then, the robust optimization problem (\ref{eqn:risk-averse-loss}) is equivalent to the pessimistic bias optimization problem (\ref{eqn:pessimisticbiasfull}). That is,
\begin{eqnarray*}
 \lefteqn{\min_{\pi \in \Pi} \max_{\{f \in \Fr: \sum_{i=1}^{n}l(f(x_{i}),y_{i}) \leq C_{1}^{*}\}} \Big[ \Obj(\pi,f,\{\tilde{x}_{i}\}_i)\Big]  \Leftrightarrow}\\
 && \min_{f\in\mathcal{F}^{unc}} \left[ \sum_{i=1}^{n} l\left(f(x_i),y_i\right) + C_{2}R(f) - C_1 \min_{\pi\in\Pi} \Obj\left(\pi,f,\{\tilde{x}_{i}\}_i\right)  \right].
\end{eqnarray*}
\label{prop:ro-bias-equivalence}
\end{proposition}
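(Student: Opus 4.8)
The plan is to turn the robust counterpart (\ref{eqn:risk-averse-loss}) into the pessimistic form (\ref{eqn:pessimisticbiasfull}) by swapping the inner maximization with the outer minimization and then trading the resulting constrained problem for a penalized one. Writing $\mathcal{F}_{good} = \{f \in \Fr : \sum_{i=1}^n l(f(x_i),y_i) \leq C_1^*\}$, I would view $\Obj(\pi,f)$ as a saddle function on $\Pi \times \mathcal{F}_{good}$. By \textbf{A1} the set $\Pi$ is compact and convex, by \textbf{A2} the set $\mathcal{F}_{good}$ is convex, and by \textbf{A3} the map $\Obj$ is continuous, convex in $\pi$, and concave in $f$; these are precisely the hypotheses of Sion's minimax theorem, with the required compactness supplied by $\Pi$. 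Hence
$$\min_{\pi\in\Pi}\max_{f\in\mathcal{F}_{good}}\Obj(\pi,f)=\max_{f\in\mathcal{F}_{good}}\min_{\pi\in\Pi}\Obj(\pi,f).$$

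Next I would set $G(f):=\min_{\pi\in\Pi}\Obj(\pi,f)$. As a pointwise minimum of functions concave in $f$, $G$ is concave, so $-G$ is convex. Using $\max_f G = -\min_f(-G)$, the maximin above equals $-\min_{f\in\mathcal{F}_{good}}[-G(f)]$, i.e. the negative of the constrained convex program
$$\min_{f\in\mathcal{F}^{unc}}\big[-G(f)\big]\ \ \text{subject to}\ \ R(f)\leq C_2^*,\ \ \sum_{i=1}^n l(f(x_i),y_i)\leq C_1^*,$$
whose feasible region is convex by \textbf{A2}. Minimizing the robust objective over $\pi$ is therefore the same (up to sign) as this constrained minimization over the uncertainty set.

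The final step passes from this constrained (Ivanov-type) form to the penalized (Tikhonov-type) form of (\ref{eqn:pessimisticbiasfull}). Introducing multipliers $\mu_1,\mu_2\geq 0$ for the loss and $R$ constraints and discarding the additive constants $-\mu_1 C_1^*-\mu_2 C_2^*$ gives the Lagrangian $\min_f[-G(f)+\mu_1\sum_{i} l(f(x_i),y_i)+\mu_2 R(f)]$; dividing through by $\mu_1$ and setting $C_1:=1/\mu_1$, $C_2:=\mu_2/\mu_1$ reproduces exactly the objective $\sum_i l(f(x_i),y_i)+C_2 R(f)-C_1 G(f)$ of (\ref{eqn:pessimisticbiasfull}). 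Because $-G$, $R$ and $\sum_i l$ are convex and the feasible set is convex, strong duality (under a constraint qualification) guarantees that the optimal multipliers turn the constrained solution into a minimizer of the penalized objective, and conversely that each choice of $(C_1,C_2)$ yields levels $(C_1^*,C_2^*)$ attained at its solution. This is the two-way translation meant by ``$\Leftrightarrow$''.

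The hard part is this last correspondence. The minimax swap is essentially a direct citation once the hypotheses are checked, and concavity of $G$ is automatic. But making the constrained-penalized equivalence rigorous needs strong duality, hence a constraint qualification such as Slater's condition and convexity of the individual constraint functions $R$ and $\sum_i l(\cdot,y_i)$ --- a bit more than the convexity of their intersection granted by \textbf{A2}. I would either strengthen \textbf{A2} accordingly (both hold for $R(f)=\|\beta\|_2^2$ and squared loss, as in the examples of Section \ref{sec:experiments}) or establish the parameter correspondence directly by tracing the solution as $C_1,C_2$ vary. A minor point worth confirming is that Sion's theorem does not require $\mathcal{F}_{good}$ itself to be compact; compactness of $\Pi$ alone suffices, with the remaining semicontinuity and quasiconvexity/quasiconcavity coming from continuity and the convex-concave structure in \textbf{A3}.
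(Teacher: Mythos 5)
Your proposal follows essentially the same route as the paper's own proof: apply Sion's minimax theorem (justified by \textbf{A1}--\textbf{A3}, with compactness supplied by $\Pi$) to swap the min and max, and then use Lagrangian relaxation with positive multipliers (the paper's $\frac{1}{C_{1}}$ and $\frac{C_{2}}{C_{1}}$, your $\mu_1,\mu_2$ after rescaling) to convert the constrained maximization over $\mathcal{F}_{good}$ into the penalized objective of (\ref{eqn:pessimisticbiasfull}). If anything, you are more careful than the paper, which invokes ``Lagrange relaxation equivalence (strong duality)'' without spelling out the constraint qualification or the convexity of the individual constraint functions that you correctly flag as the delicate point.
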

\begin{remark}
That the equivalence applies to linear programs (LPs) is clear because the objective is linear and the feasible set is generally a polyhedron, and is thus convex. For integer programs, the objective $\Obj$ satisfies continuity, but the feasible set is typically not convex, and hence, the result does not generally apply to integer programs. In other words, the requirement that the constraint set $\Pi$ be convex excludes integer programs. 
\end{remark}

To prove Proposition \ref{prop:ro-bias-equivalence}, we restate a well-known generalization of von Neumann's minimax theorem and some related definitions.

\begin{definition}
A linear topological space (also called a topological vector space) is a vector space over a topological field (typically, the real numbers with their standard topology) with a topology such that vector addition and scalar multiplication are continuous functions. For example, any normed vector space is a linear topological space. A function $h$ is upper semicontinuous at a point $p_{0}$ if for every $\epsilon > 0$ there exists a neighborhood $U$ of $p_{0}$ such that $h(p) \leq h(p_{0}) + \epsilon$ for all $p \in U$. A function $h$ defined over a convex set is quasi-concave if for all $p,q$ and $\lambda \in [0,1]$ we have $h(\lambda p + (1-\lambda)q) \geq \min(h(p),h(q))$. Similar definitions follow for lower semicontinuity and quasi-convexity.
\end{definition}

\begin{theorem} \citep[Sion's minimax theorem][]{sion1958general} Let $\Pi$ be a compact convex subset of a linear topological space and $\Xi$ be a convex subset of a linear topological space. Let $G(\pi,\xi)$ be a real function  on $\Pi \times \Xi$ such that
\begin{enumerate}
\item[(i)] $G(\pi,\cdot)$ is upper semicontinuous and quasi-concave on $\Xi$ for each $\pi \in \Pi$;
\item[(ii)] $G(\cdot,\xi)$ is lower semicontinuous and quasi-convex on $\Pi$ for each $\xi \in \Xi$. 
\end{enumerate}
Then
\begin{align*}
\min_{\pi\in \Pi}\sup_{\xi \in \Xi} G (\pi,\xi) = \sup_{\xi \in \Xi}\min_{\pi\in \Pi} G (\pi,\xi)
\end{align*}
\label{theorem:sion}
\end{theorem}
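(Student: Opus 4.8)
The plan is to prove the two inequalities between $v_* := \sup_{\xi\in\Xi}\min_{\pi\in\Pi}G(\pi,\xi)$ and $v^* := \min_{\pi\in\Pi}\sup_{\xi\in\Xi}G(\pi,\xi)$ separately. The inequality $v_*\le v^*$ (weak duality) is immediate and uses none of the convexity hypotheses: for every fixed $\xi_0$ and every $\pi$ one has $\min_{\pi'}G(\pi',\xi_0)\le G(\pi,\xi_0)\le \sup_{\xi}G(\pi,\xi)$, so $\min_{\pi'}G(\pi',\xi_0)\le v^*$, and taking the supremum over $\xi_0$ gives $v_*\le v^*$. Lower semicontinuity of $G(\cdot,\xi)$ on the compact set $\Pi$ guarantees that each inner minimum is attained, so these are genuine minima as written. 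The entire content of the theorem is therefore the reverse inequality $v^*\le v_*$.

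To establish $v^*\le v_*$, I would argue by contradiction: suppose $v_*<v^*$ and fix a real $c$ with $v_*<c<v^*$. Since $\sup_{\xi}G(\pi,\xi)\ge v^*>c$ for every $\pi$, each $\pi$ lies in one of the sets $O_\xi:=\{\pi:G(\pi,\xi)>c\}$, which are open because $G(\cdot,\xi)$ is lower semicontinuous. By compactness of $\Pi$, a finite subfamily $O_{\xi_1},\dots,O_{\xi_n}$ already covers $\Pi$, so the lower semicontinuous function $\pi\mapsto\max_{i\le n}G(\pi,\xi_i)$ exceeds $c$ at every point and, attaining its minimum on $\Pi$, satisfies $\min_{\pi}\max_{i\le n}G(\pi,\xi_i)>c$. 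This reduces the whole problem to the purely finite claim that $\min_{\pi}\max_{i\le n}G(\pi,\xi_i)\le v_*$ for every finite family $\{\xi_i\}\subseteq\Xi$, which would contradict the strict inequality just obtained since $v_*<c$.

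I would prove this finite claim by induction on $n$, the essential case being $n=2$; the inductive step is handled by restricting $\Pi$ to a compact convex sublevel set $\{\pi:\max_i G(\pi,\xi_i)\le\mu\}$, on which all hypotheses persist, and collapsing two of the points into a single convex combination in $\Xi$ via the two-point result. For the two-point case, set $\xi_t=(1-t)\xi_1+t\xi_2\in\Xi$ and, assuming for contradiction that $w:=\min_{\pi}\max(G(\pi,\xi_1),G(\pi,\xi_2))>v_*$, fix $\beta\in(v_*,w)$ and form the sublevel sets $F_t:=\{\pi:G(\pi,\xi_t)\le\beta\}$. Each $F_t$ is closed and convex by lower semicontinuity and quasi-convexity of $G(\cdot,\xi_t)$, and nonempty since $\min_{\pi}G(\pi,\xi_t)\le v_*<\beta$; the endpoint sets $S_1:=F_0$ and $S_2:=F_1$ are disjoint, as a common point would force $\max(G(\pi,\xi_1),G(\pi,\xi_2))\le\beta<w$; and quasi-concavity of $G(\pi,\cdot)$, which gives $G(\pi,\xi_t)\ge\min(G(\pi,\xi_1),G(\pi,\xi_2))$, forces $F_t\subseteq S_1\cup S_2$. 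Since each $F_t$ is connected (being convex) and contained in the union of the two disjoint relatively closed sets $S_1,S_2$, it lies entirely in one of them, partitioning $[0,1]$ into $T_1=\{t:F_t\subseteq S_1\}$ and $T_2=\{t:F_t\subseteq S_2\}$ with $0\in T_1$ and $1\in T_2$.

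The hard part, and the step I expect to be the main obstacle, is showing that $T_1$ and $T_2$ cannot both be relatively closed, which would contradict the connectedness of $[0,1]$ and finish the proof. This is precisely where Sion's theorem is delicate: $G$ is only separately semicontinuous, lower in $\pi$ and upper in $\xi$, and is not assumed jointly continuous, so one cannot naively pass to a limit along a sequence $t_k\to t^*$ with $\pi_k\in F_{t_k}$. I would resolve it by using compactness of $\Pi$ to extract a limit point $\pi^\circ$ of such a $\pi_k$ and then applying the two one-sided semicontinuities in their correct directions, namely upper semicontinuity in $\xi$ to bound $\limsup_k G(\pi^\circ,\xi_{t_k})$ by $G(\pi^\circ,\xi_{t^*})$ and lower semicontinuity in $\pi$ to compare $G(\pi^\circ,\cdot)$ with $\liminf_k G(\pi_k,\cdot)$. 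The cleanest way to make the strict and non-strict inequalities align with the available semicontinuity is to work with two thresholds $\beta<\beta'$ when defining the sublevel sets. Managing this asymmetry carefully is the crux; once it is in place, everything else is routine bookkeeping.
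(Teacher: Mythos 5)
The paper itself offers no proof of this statement: it is quoted as a known result, with a citation to Sion (1958), and is used purely as a black box in the proof of Proposition \ref{prop:ro-bias-equivalence}. So there is no in-paper argument to compare yours against, and your proposal has to be judged on its own. On those terms it is essentially correct: what you have reconstructed is the standard elementary proof of Sion's theorem (in the form due to Komiya). The weak-duality direction, the reduction of $v^*\le v_*$ via the open cover $\{\pi: G(\pi,\xi)>c\}$ and a finite subcover of the compact set $\Pi$, the induction on the finite family with restriction to a compact convex sublevel set, and the two-point case via the partition of $[0,1]$ into $T_1$ and $T_2$ using convexity-hence-connectedness of the sets $F_t$ are all the right steps, and you correctly isolate the genuine crux, namely proving both $T_i$ closed under only separate semicontinuity.

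One caution on that crux: the first device you float, extracting a limit point $\pi^\circ$ of points $\pi_k\in F_{t_k}$, does not work as stated. Separate semicontinuity gives $\limsup_k G(\pi^\circ,\xi_{t_k})\le G(\pi^\circ,\xi_{t^*})$ (usc in $\xi$ at the fixed $\pi^\circ$) and $G(\pi^\circ,\xi)\le\liminf_k G(\pi_k,\xi)$ for each fixed $\xi$ (lsc in $\pi$), but nothing relates $G(\pi_k,\xi_{t_k})$ to $G(\pi^\circ,\xi_{t^*})$ when both arguments move at once --- precisely the failure of joint continuity you yourself flag. Your fallback, the two-threshold device $v_*<\beta<\beta'<w$, is the correct mechanism, and it renders the limit point unnecessary: define $T_1,T_2$ via the $\beta'$-sublevel sets $F'_t$; given $t_k\to t^*$ with $t_k\in T_1$, pick a single $\pi$ with $G(\pi,\xi_{t^*})\le\beta$ (such $\pi$ exists since $\min_{\pi'} G(\pi',\xi_{t^*})\le v_*<\beta$); usc of $G(\pi,\cdot)$ alone gives $G(\pi,\xi_{t_k})\le\beta'$ for large $k$, so $\pi\in F'_{t_k}\subseteq S'_1$; since $\pi\in F'_{t^*}$ as well and $F'_{t^*}$ is connected inside the disjoint union $S'_1\cup S'_2$, it lies wholly in $S'_1$, so $t^*\in T_1$. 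With that substitution your outline closes into a complete and correct proof.
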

We can now proceed to the proof of Proposition (\ref{prop:ro-bias-equivalence}).
\begin{proof} \textit{(Of Proposition \ref{prop:ro-bias-equivalence})}
We start from the left hand side of the equivalence we want to prove:
\begin{align*}
 \textrm{} & \min_{\pi \in \Pi} \max_{\{f \in \Fr: \sum_{i=1}^{n}l(f(x_{i}),y_{i}) \leq C_{1}^{*}\}} \Big[ \Obj(\pi,f,\{\tilde{x}_{i}\}_i)\Big] \\
\overset{(a)}{\Leftrightarrow} & \max_{\{f \in \Fr: \sum_{i=1}^{n}l(f(x_{i}),y_{i}) \leq C_{1}^{*}\}}  \min_{\pi \in \Pi} \Big[ \Obj(\pi,f,\{\tilde{x}_{i}\}_i)\Big] \\
\overset{(b)}{\Leftrightarrow} &  \max_{f \in \mathcal{F}^{unc}}\Big[  - \frac{1}{C_{1}}\Big( \sum_{i=1}^{n}l(f(x_{i}),y_{i})  - C_{1}^{*}\Big)  -\frac{C_{2}}{C_{1}}\Big( R(f) - C_{2}^{*} \Big) +  \min_{\pi \in \Pi} \Obj(\pi,f,\{\tilde{x}_{i}\}_{i}) \Big] \\
\overset{(c)}{\Leftrightarrow} &  \min_{f\in\mathcal{F}^{unc}} \left[ \sum_{i=1}^{n} l\left(f(x_i),y_i\right) + C_{2}R(f) - C_1 \min_{\pi\in\Pi} \Obj\left(\pi,f,\{\tilde{x}_{i}\}_i\right)  \right].\\
\end{align*}
which is the right hand side of the logical equivalence in the statement of the theorem. In step $(a)$ we applied Sion's minimax theorem (Theorem \ref{theorem:sion}) which is satisfied because of the assumptions we made.
In step $(b)$, we picked Lagrange coefficients, namely $\frac{1}{C_{1}}$ and $\frac{C_{2}}{C_{1}}$, both of which are positive. In particular, $C_{1}^{*}$ and $C_{1}$ as well as $C_{2}^{*}$ and $C_{2}$ are related by the Lagrange relaxation equivalence (strong duality).
In $(c)$, we multiplied the objective with $C_{1}$ throughout, pulled the negative sign in front, and removed the constant terms $C_{1}^{*}$ and $C_{2}C_{2}^{*}$ and used the following observation:  $\max_{a} - g(a) =  -\min_{a} g(a)$; and finally, removed the negative sign in front as this does not affect equivalence. 
\end{proof}

The equivalence relationship of Proposition \ref{prop:ro-bias-equivalence} shows that there is a problem class in which each instance can be viewed either as a RO problem or an estimation problem with an operational cost bias. 
We can use ideas from RO to make the simultaneous process more general. 
Before doing so, we will characterize $\mathcal{F}_{good}$ for several specific loss functions.

\subsection{Creating Uncertainty Sets for RO Using Loss Functions from Machine Learning}
Let us for simplicity specialize our loss function to the least squares loss. Let $X$ be an $n \times p$ matrix with each training instance $x_{i}$ forming the $i^{th}$ row. Also let $Y$ be the $n$-dimensional vector of all the labels $y_{i}$. Then the loss term of (\ref{eqn:reg-train-loss}) can be written as:
$$\sum_{i=1}^{n}(y_{i} - f(x_{i}))^{2} = \sum_{i=1}^{n}(y_{i} - \beta^{T}x_{i})^{2} = \|Y - X\beta\|_{2}^{2}.$$
Let $\beta^{*}$ be a parameter corresponding to $f^{*}$ in (\ref{eqn:reg-train-loss}). Then the definition of $\mathcal{F}_{good}$ in terms of the least squares loss is:
$$\mathcal{F}_{good} = \{f: f\in \Fr, \|Y - X\beta\|_{2}^{2} \leq \|Y - X\beta^{*}\| _{2}^{2} + \epsilon \} = \{f: f\in \Fr, \|Y - X\beta\|_{2}^{2} \leq C_{1}^{*} \}.$$
Since each $f\in\F_{good}$ corresponds to at least one $\beta$, the optimization of (\ref{eqn:reg-train-loss}) can be performed with respect to $\beta$. In particular, the constraint $\|Y-X\beta\| \leq C_{1}^{*}$ is an ellipsoid constraint on $\beta$.
For the purposes of the robust counterpart in (\ref{eqn:ro_counterpart}), we can thus say that the uncertainty is of the ellipsoidal form. 
In fact, ellipsoidal constraints on uncertain parameters are widely used in robust optimization, especially because the resulting optimization problems often remain tractable. 

Box constraints are also a popular way of incorporating uncertainty into robust optimization. For box constraints, the uncertainty over the $p$-dimensional parameter vector $\beta = [\beta_{1},...,\beta_{p}]^{T}$ is written for
$i=1,...,p$ as $LB_{i} \leq \beta_{i} \leq UB_{i},$
where $\{LB_{i}\}_{i}$ and $\{UB_{i}\}_{i}$ are real-valued upper and lower bounds that together define the box intervals.

Our main point in this subsection is that one can potentially derive a very wide range of uncertainty sets for robust optimization using different loss functions from machine learning. Box constraints and ellipsoidal constraints are two simple types of constraints that could potentially be the set $\F_{good}$, which arise from two different loss functions, as we have shown. The least squares loss leads to ellipsoidal constraints on the uncertainty set, but it is unclear what the structure would be for uncertainty sets arising from the 0-1 loss, ramp loss, hinge loss, logistic loss and exponential loss among others. Further, it is possible to create a loss function for fitting data to a probabilistic model using the method of maximum likelihood; uncertainty sets for maximum likelihood could thus be established. Table \ref{table:lossfunc-uncertainty} shows several different popular loss functions and the uncertainty sets they might lead to. Many of these new uncertainty sets do not always give tractable mathematical programs, which could explain why they are not commonly considered in the optimization literature. 

\begin{table}
\begin{center}
\begin{tabular}{|c|c|}
\hline
Loss function & Uncertainty set description\\
\hline
least squares & $\|Y-X\beta\|_{2}^{2} \leq \|Y-X\beta^{*}\|_{2}^{2} + \epsilon$ (ellipsoid) \\
0-1 loss & $\mathbf{1}_{[f(x_{i}) \neq y_{i}]} \leq \mathbf{1}_{[f^{*}(x_{i}) \neq y_{i}]} + \epsilon$\\
logistic loss & $\sum_{i=1}^{n}\log(1+e^{-y_{i}f(x_{i})}) \leq\sum_{i=1}^{n}\log(1+e^{-y_{i}f^{*}(x_{i})}) + \epsilon$\\
exponential loss & $\sum_{i=1}^{n}e^{-y_{i}f(x_{i})} \leq \sum_{i=1}^{n}e^{-y_{i}f^{*}(x_{i})} + \epsilon$\\
ramp loss & $\sum_{i=1}^{n} \min(1,\max(0,1-y_{i}f(x_{i}))) \leq \sum_{i=1}^{n} \min(1,\max(0,1-y_{i}f^{*}(x_{i}))) + \epsilon$ \\
hinge loss & $\sum_{i=1}^{n} \max(0,1-y_{i}f(x_{i})) \leq \sum_{i=1}^{n} \max(0,1-y_{i}f^{*}(x_{i})) + \epsilon$\\
\hline
\end{tabular}
\end{center}
\label{table:lossfunc-uncertainty}
\caption{Table showing a summary of different possible uncertainty set descriptions that are based on ML loss functions.}
\end{table}

\vspace*{5pt}
\noindent \textbf{The sequential process for RO.}
If we design the uncertainty sets as described above, with respect to a machine learning loss function, the sequential process described in Section \ref{sec:formulation} can be used with robust optimization. This proceeds in three steps: 
\begin{enumerate}
\item use a learning algorithm on the training data to get $f^*$, 
\item establish an uncertainty set based on the loss function and $f^*$, for example, ellipsoidal constraints arising from the least squares loss (or one could use any of the new uncertainty sets discussed in the previous paragraph), 
\item use specialized optimization techniques to solve for the best policy, with respect to the uncertainty set.
\end{enumerate}

 We note that the uncertainty sets created by the $0$-$1$ loss and ramp loss for instance, are non-convex, consequently assumption (\textbf{A2}) and Proposition \ref{prop:ro-bias-equivalence} do not hold for robust optimization problems that use these sets.
 
\subsection{The Overlap Between The Simultaneous Process and RO}

On the other end of the spectrum from robust optimization, one can think of ``optimistic" optimization where we are seeking the best value of the objective in the best possible situation (as oppose to the worst possible situation in RO). For optimistic optimization, more uncertainty is favorable, and we find the best policy for the best possible situation. This could be useful in many real applications where one not only wants to know the worst-case conservative policy but also the best case risk-taking policy. 
A typical formulation, following (\ref{eqn:ro_counterpart}) can be written as:
\begin{equation*}
\min_{\pi \in \cuuppi \hspace*{-5pt}\Pi(g;\{\tilde{x}\}_{i}) }\;\;\min_{f \in \mathcal{F}_{good}} \Obj(\pi,f;\{\tilde{x}_{i}\}).\hspace*{30pt}(\textit{Optimistic optimization})
\end{equation*}
In optimistic optimization, we view operational cost optimistically ($\min_{f \in \mathcal{F}_{good}} \Obj$) whereas in the robust optimization counterpart (\ref{eqn:ro_counterpart}), we view operational cost conservatively ($\max_{f \in \mathcal{F}_{good}} \Obj$). The policy $\pi^*$ is feasible in more situations in RO ($\min_{\pi \in \cap_{g\in \mathcal{F}_{good}\Pi}}$) since it must be feasible with respect to each $g\in \mathcal{F}_{good}$, whereas the $\Obj$ is lower in optimistic optimization ($\min_{\pi \in \cup_{g\in \mathcal{F}_{good}}\Pi}$) since it need only be feasible with respect to at least one of the $g$'s. Optimistic optimization has not been heavily studied, possibly because a (min-min) formulation is relatively easier to solve than its (min-max) robust counterpart, and so is less computationally interesting. Also, one generally plans for the worst case more often than for the best case, particularly when no estimation is involved. In the case where estimation is involved, both optimistic and robust optimization could potentially be useful to a practitioner. 

Both optimistic optimization and robust optimization, considered with respect to uncertainty sets $\F_{good}$, have non-trivial overlap with the simultaneous process. In particular, we showed in Proposition \ref{prop:ro-bias-equivalence} that 
pessimistic bias on operational cost is equivalent to robust optimization under specific conditions on $\Obj$ and $\Pi$.
Using an analogous proof, one can show that optimistic bias on operational cost is equivalent to optimistic optimization under the same set of conditions. 
Both robust and optimistic optimization and the simultaneous process encompass large classes of problems, some of which overlap. Figure \ref{fig:set-diagram} represents the overlap between the three classes of problems. There is a class of problems that fall into the simultaneous process, but are not equivalent to robust or optimistic optimization problems. These are problems where we use operational cost to assist with estimation, as in the call center example and ML\&TRP discussed in Section~\ref{sec:experiments}. 
Typically problems in this class have $\Pi = \Pi(f;\{\tilde{x}_{i}\}_{i})$. This class includes problems where the bias can be either optimistic or pessimistic, and for which $F_{good}$ has a complicated structure, 
beyond ellipsoidal or box constraints. 
There are also problems contained in either robust optimization or optimistic optimization alone and do not belong to the simultaneous process. Typically, again, this is when $\Pi$ depends on $f$. Note that the housing problem presented in Section \ref{sec:experiments} lies within the intersection of optimistic optimization and the simultaneous process; this can be deduced from (\ref{eqn:problem_instance2a}).

\begin{figure}
\centering
\includegraphics[width=.4\textwidth]{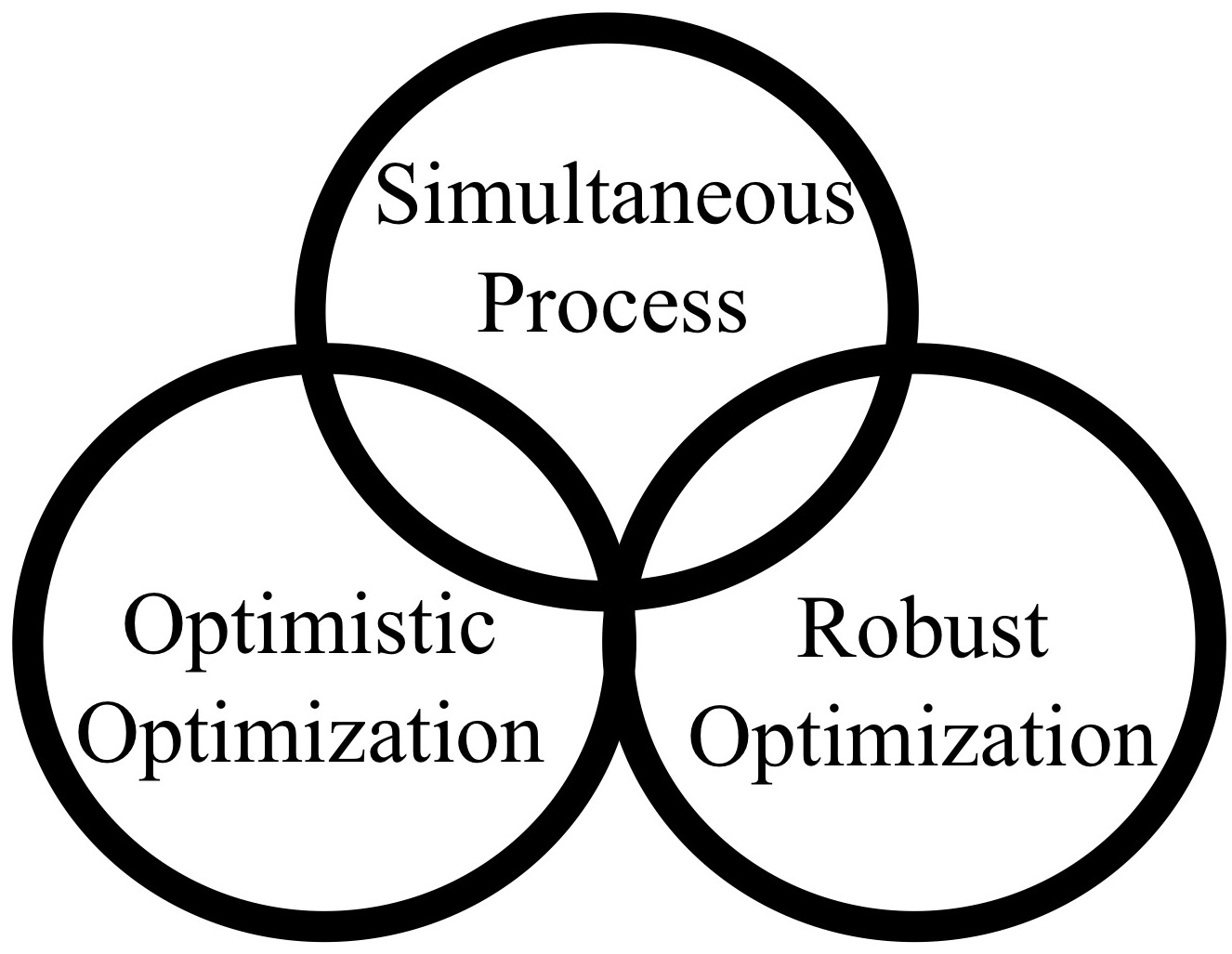}
\caption{
Set based description of the proposed framework (top circle) and its relation to robust (right circle) and optimistic (left circle) optimizations. The regions of intersection are where the conditions on the objective $\Obj$ and the feasible set $\Pi$ are satisfied.
}
\label{fig:set-diagram}
\end{figure}

\textcolor{Black}{ 
In Section \ref{sec:bound}, we will provide statistical guarantees for the simultaneous process. These are very different from the style of probabilistic guarantees in the robust optimization literature.  
There are some ``sample complexity'' bounds in the RO literature of the following form: how many observations of uncertain data are required (and applied as simultaneous constraints) to maintain robustness of the solution with high probability? There is an unfortunate overlap in terminology; these are totally different problems to the sample complexity bounds in statistical learning theory. 
From the learning theory perspective, we ask:  how many training instances does it take to come up with a model $\beta$ that we reasonably know to be good? We will answer that question for a very general class of estimation problems.
}

\section{Generalization Bound with New Linear Constraints} \label{sec:bound}

\textcolor{Black}{ 
In this section, we give statistical learning theoretic results for the simultaneous process that involve counting integer points in convex bodies.
Generalization bounds are probabilistic guarantees, that often depend on some measure of the complexity of the hypothesis space. 
 Limiting the complexity of the hypothesis space equates to a better bound. 
 In this section, we consider the complexity of hypothesis spaces that results from an operational cost bias.}
 This enables us to answer in a quantitative manner, question Q3 in the introduction: ``Can our intuition about how much it will cost to solve a problem help us produce a better probabilistic model?''

 \textcolor{Black}{ 
Generalization bounds have been well established for \emph{norm-based} constraints on the hypothesis space, but the emphasis has been more on qualitative dependence (e.g., using big-O notation) and the constants are not emphasized. On the other hand, for a practitioner, every prior belief should reduce the number of examples they need to collect, as these examples may each be expensive to obtain; thus constants within the bounds, and even their approximate values, become important \citep{Bousquet03}. We thus provide bounds on the covering number for new types of hypothesis spaces, emphasizing the role of constants.
}

To establish the bound, it is sufficient to provide an upper bound on the covering number. There are many existing generic generalization bounds in the literature \citep[e.g.,][]{bartlett02}, which combined with our bound, will yield a specific generalization bound for machine learning with operational costs, as we will construct in Theorem \ref{thm:fullbound}.

In Section \ref{sec:experiments}, we showed that a bias on the operational cost can sometimes be transformed into linear constraints on model parameter $\beta$ (see equations (\ref{eqn:linear-const-relaxation1}) and (\ref{eqn:linear-const-relaxation2})). There is a broad class of other problems for which this is true, for example, for applications related to those presented in Section \ref{sec:experiments}.
Because we are able to obtain linear constraints for such a broad class of problems, we will analyze the case of linear constraints here. The hypothesis we consider is thus the intersection of an $\ell_q$ ball and a halfspace. This is illustrated in Figure \ref{fig:set-diagram2}. 
\begin{figure}
\centering
\includegraphics[width=.5\textwidth]{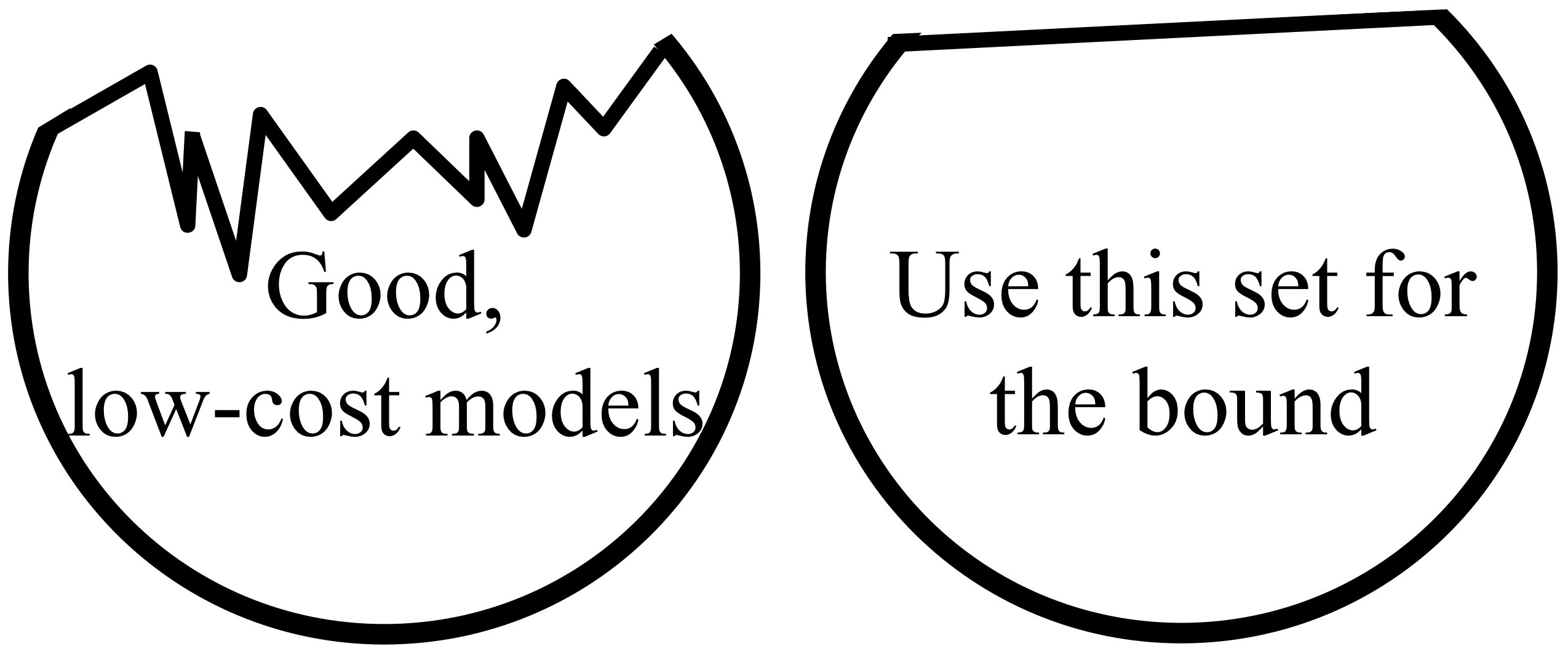}
\caption{
Left: hypothesis space for intersection of good models (circular, to represent $\ell_q$ ball) with low cost models (models below cost threshold, one side of wiggly curve). Right: relaxation to intersection of a half space with an $\ell_q$ ball.
}
\label{fig:set-diagram2}
\end{figure}

\textcolor{Black}{ 
The plan for the rest of the section is as follows. We will introduce the quantities on which our main result in this section depends. Then, we will state the main result (Theorem \ref{thm:main-bound}). Following that, we will build up to a generalization bound (Theorem \ref{thm:fullbound}) that incorporates Theorem \ref{thm:main-bound}. After that will be the proof of Theorem \ref{thm:main-bound}.
}

\textcolor{Black}{ 
\begin{definition} \textit{\citep[Covering Number,][]{kol61}} Let $A \subseteq \Gamma$ be an arbitrary set and $(\Gamma, \rho)$ a (pseudo-)metric space. Let $|\cdot|$ denote set size. 
\begin{itemize}
\item For any $\epsilon > 0$, an \textbf{$\epsilon$-cover} for $A$ is a finite set $U \subseteq \Gamma$ (not necessarily $ \subseteq A$) s.t. $ \forall a \in A, \exists u \in U$ with $d_{\rho}(a, u) \leq \epsilon$. 
\item The \textbf{covering number} of $A$ is $N(\epsilon,A,\rho) := \inf_{U} |U|$ where $U$ is an $\epsilon$-cover for $A$.
\end{itemize}
\end{definition}
}

We are given the set of $n$ instances $S := \{x_{i}\}_{i=1}^{n}$ with each $x_{i} \in \mathcal{X} \subseteq \mathbb{R}^{p}$ where $\mathcal{X} = \{x: \| x\| _r \leq X_{b}\}$, $2 \leq r \leq \infty$ and $X_{b}$ is a known constant. Let $\mu_{\mathcal{X}}$ be a probability measure on $\mathcal{X}$. Let $x_{i}$ be arranged as rows of a matrix $X$. We can represent the columns of $X= [x_{1}\;\hdots \; x_{n}]^{T}$ with $h_{j} \in \mathbb{R}^{n},j=1,...,p$, so $X$ can also be written as $[h_{1}{ } \cdots{ }h_{p} ]$.
Define function class $\mathcal{F}$ as the set of linear functionals whose coefficients lie in an $\ell_q$ ball and with a set of linear constraints:
\begin{align*}
\mathcal{F} &:= \{f: f(x) = \beta^{T}x, \beta \in \mathcal{B}\}\textrm{ where }\\
\mathcal{B}& := \left\{\beta \in \mathbb{R}^{p}: \| \beta\| _q \leq B_{b},
\sum_{j=1}^{p}c_{j\nu}\beta_{j} +\delta_{\nu} \leq 1, \delta_{\nu} > 0, \nu=1,...,V\right\},
\end{align*}
where $1/r + 1/q = 1$ and $\{c_{j\nu}\}_{j,\nu}$, $\{\delta_{\nu}\}_{\nu}$ and $B_{b}$ are known constants. 
The linear constraints given by the $c_{j\nu}$'s force the hypothesis space $\mathcal{F}$ to be smaller, which will help with generalization - this will be shown formally by our main result in this section.
Let $\mathcal{F}_{|S}$ be defined as the restriction of $\mathcal{F}$ with respect to $S$.

Let $\{\tilde{c}_{j\nu}\}_{j,\nu}$ be proportional to $\{c_{j\nu}\}_{j,\nu}$:
\begin{eqnarray*}
\tilde{c}_{j\nu} &:=& \frac{c_{j\nu}n^{1/r}X_{b}B_{b}}{\| h_{j}\| _{r}} \;\;\textrm{ } \forall j=1,...,p \textrm{ and } \nu=1,...,V.\\
\end{eqnarray*}
Let $K$ be a positive number. Further, let the sets $P^{K}$ parameterized by $K$ and $P_{c}^{K}$ parameterized by $K$ and $\{\tilde{c}_{j\nu}\}_{j,\nu}$ be defined as
\begin{align}
P^{K} := \left\{(k_{1},...,k_{p}) \in \mathbb{Z}^{p}: \sum_{j=1}^{p}|k_{j}| \leq K\right\}.\nonumber \\
P_{c}^{K} := \left\{(k_{1},...,k_{p}) \in P^{K}: \sum_{j=1}^{p}\tilde{c}_{j\nu}k_{j} \leq K \; \forall \nu = 1,...,V\right\}.
\label{eqn:myconstraints1}
\end{align} 
\textcolor{Black}{ 
Let $|P^{K}|$ and $|P_{c}^{K}|$ be the sizes of the sets $P^{K}$ and $P_{c}^{K}$ respectively. The subscript $c$ in $P_{c}^{K}$ denotes that this polyhedron is a constrained version of $P^{K}$.
As the linear constraints given by the $c_{j\nu}$'s force the hypothesis space to be smaller, they force $|P_{c}^{K}|$ to be smaller.
Define $\matXbar$ to be equal to $X$ times a diagonal matrix whose $j^{th}$ diagonal element is $\scalej$.
Define $\lambda_{\min}(\matXbar^{T}\matXbar)$ to be the smallest eigenvalue of the matrix $\matXbar^{T}\matXbar$, which will thus be non-negative.
Using these definitions, we state our main result of this section.
}
\begin{theorem}(Main result, covering number bound)
\begin{equation}
N(\sqrt{n}\epsilon,\mathcal{F}_{|S},\| \cdot\| _{2}) \leq 
\begin{cases}
\min\{|P^{K_{0}}|,|P_{c}^{K}|\} & \textrm{if } \epsilon < X_{b}B_{b} \\
1 & \textrm{ otherwise}
\end{cases},
\label{eqn:thm-statement}
\end{equation}
where 
\[
K_{0} = \ceil*{\frac{X_{b}^{2}B_{b}^{2}}{\epsilon^{2}}}
\] 
and
\[
K =\max\left\{K_{0}, \ceil*{\frac{nX_{b}^{2}B_{b}^{2}}{\lambda_{\min}(\matXbar^{T}\matXbar)\Big[\min_{\nu=1,...,V} \frac{\delta_{\nu}}{\sum_{j=1}^{p}|\tilde{c}_{j\nu}|}\Big]^{2}}}\right\}.
\] 

\label{thm:main-bound}
\end{theorem}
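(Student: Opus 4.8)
The plan is to build two separate $\sqrt{n}\epsilon$-covers of the restricted class $\mathcal{F}_{|S}=\{X\beta:\beta\in\mathcal{B}\}\subseteq\mathbb{R}^{n}$ in the Euclidean metric, one of size $|P^{K_{0}}|$ and one of size $|P_{c}^{K}|$, and then take the smaller. The engine for both is Maurey's Lemma applied after a rescaling that converts the $\ell_{q}$ constraint into an $\ell_{1}$ constraint. Concretely, I would set $\betaS_{j}:=\beta_{j}\invscalej$ and let $\hS_{j}:=\scalej h_{j}$ be the columns of $\matXbar$, so that $X\beta=\matXbar\betaS=\sum_{j}\betaS_{j}\hS_{j}$. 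Using H\"older's inequality with conjugate exponents $q,r$ together with the identity $\sum_{j}\|h_{j}\|_{r}^{r}=\sum_{i}\|x_{i}\|_{r}^{r}\leq nX_{b}^{r}$, I would show $\|\betaS\|_{1}\leq1$; and using the norm inequality $\|\cdot\|_{2}\leq n^{1/2-1/r}\|\cdot\|_{r}$ on $\mathbb{R}^{n}$ I would show each atom obeys $\|\hS_{j}\|_{2}\leq\sqrt{n}X_{b}B_{b}$. Thus $X\beta$ lies in the convex hull of the finite atom set $\{0,\pm\hS_{1},\dots,\pm\hS_{p}\}$, every atom having Euclidean norm at most $b:=\sqrt{n}X_{b}B_{b}$.

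For the first bound I would invoke Maurey's Lemma: for any integer $K$ there is a selection of atoms whose average can be written $\frac{1}{K}\matXbar k$ with $k\in\mathbb{Z}^{p}$, $\sum_{j}|k_{j}|\leq K$, and $\|\frac{1}{K}\matXbar k-X\beta\|_{2}^{2}\leq b^{2}/K=nX_{b}^{2}B_{b}^{2}/K$. Choosing $K=K_{0}=\lceil X_{b}^{2}B_{b}^{2}/\epsilon^{2}\rceil$ makes the right-hand side at most $n\epsilon^{2}$, so these averages form a $\sqrt{n}\epsilon$-cover; since each is indexed by a lattice point of $\{\sum_{j}|k_{j}|\leq K_{0}\}$, there are at most $|P^{K_{0}}|$ of them. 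The trivial branch $\epsilon\geq X_{b}B_{b}$ follows immediately, because $\|X\beta\|_{2}\leq\|\betaS\|_{1}\max_{j}\|\hS_{j}\|_{2}\leq\sqrt{n}X_{b}B_{b}\leq\sqrt{n}\epsilon$, so the single point $\{0\}$ covers $\mathcal{F}_{|S}$ and $N=1$.

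The second, constraint-aware bound is where the real work lies, and it is the step I expect to be the main obstacle. In the rescaled coordinates the constraints $\sum_{j}c_{j\nu}\beta_{j}\leq1-\delta_{\nu}$ become $\sum_{j}\tilde{c}_{j\nu}\betaS_{j}\leq1-\delta_{\nu}$, with $\tilde{c}_{j\nu}$ exactly as defined. I want the lattice point $k$ returned by Maurey to inherit the relaxed constraint $\sum_{j}\tilde{c}_{j\nu}k_{j}\leq K$, i.e. $k\in P_{c}^{K}$; the difficulty is that Maurey controls only the Euclidean residual $\matXbar\gamma=\frac{1}{K}\matXbar k-X\beta$, not the coefficient error $\gamma:=k/K-\betaS$ directly. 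To bridge this I would use the conditioning bound $\|\gamma\|_{2}\leq\|\matXbar\gamma\|_{2}/\sqrt{\lambda_{\min}(\matXbar^{T}\matXbar)}$ together with $\|\matXbar\gamma\|_{2}^{2}\leq nX_{b}^{2}B_{b}^{2}/K$, yielding $\|\gamma\|_{2}\leq X_{b}B_{b}\sqrt{n/K}/\sqrt{\lambda_{\min}(\matXbar^{T}\matXbar)}$. Then H\"older gives $|\sum_{j}\tilde{c}_{j\nu}\gamma_{j}|\leq(\sum_{j}|\tilde{c}_{j\nu}|)\,\|\gamma\|_{2}$, and demanding this be at most $\delta_{\nu}$ for every $\nu$ is precisely what forces the lower bound on $K$ stated in the theorem. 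With $K$ this large (and $\geq K_{0}$, so the Euclidean approximation still holds) I would conclude $\sum_{j}\tilde{c}_{j\nu}(k_{j}/K)\leq(1-\delta_{\nu})+\delta_{\nu}=1$, hence $k\in P_{c}^{K}$, producing a $\sqrt{n}\epsilon$-cover of size at most $|P_{c}^{K}|$. Taking the minimum of the two cover sizes gives the stated bound; the only degenerate case is $\lambda_{\min}(\matXbar^{T}\matXbar)=0$, where the second estimate becomes vacuous and the minimum simply reverts to $|P^{K_{0}}|$.
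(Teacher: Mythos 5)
Your proposal is correct and follows essentially the same route as the paper's own proof: the same $\ell_q\to\ell_1$ rescaling (the paper's Facts 1--3), Maurey's Lemma to produce lattice-indexed approximants with $\sum_j|k_j|\leq K$, the same $\lambda_{\min}(\matXbar^{T}\matXbar)$ conditioning step to transfer the linear constraints to the discretized coefficients (the paper's Lemma \ref{lemma:constraints}), and the same choice of $K$ and final minimum of the two cover sizes. The only cosmetic difference is that you work with the symmetric atom set $\{0,\pm\hS_j\}$ from the start, whereas the paper absorbs the signs per point via Lemma \ref{lemma:maurey-ext}; the substance is identical.
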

\textcolor{Black}{ 
The theorem gives a bound on the $\ell_{2}$ covering number for the specially constrained class $\mathcal{F}_{|S}$. The bound improves as the constraints given by $c_{j\nu}$ on the operational cost become tighter. In other words, as the $c_{j\nu}$ impose more restrictions on the hypothesis space, $|P_{c}^{K}|$ decreases, and the covering number bound becomes smaller. This bound can be plugged directly into an established generalization bound that incorporates covering numbers, and this is done in what follows to obtain Theorem \ref{thm:fullbound}.
}
 
 Note that $\min\{|P^{K_{0}}|,|P_{c}^{K}|\}$ can be tighter than $|P_{c}^{K}|$ when $\epsilon$ is large. When $\epsilon$ is larger than $X_{b}B_{b}$, we only need one closed ball of radius $\sqrt{n}\epsilon$ to cover $\F_{|S}$, so $N(\sqrt{n}\epsilon,\mathcal{F}_{|S},\| \cdot\| _{2}) = 1$. In that case, the covering number in Theorem \ref{thm:main-bound} is appropriately bounded by $1$. If $\epsilon$ is large, but not larger than $X_{b}B_{b}$, then $|P_{c}^{K}|$ can be smaller than $|P^{K_{0}}|$. $|P^{K_{0}}|$ is the size of the polytope without the operational cost constraints. $|P_{c}^{K}|$ is the size of a potentially bigger polytope, but with additional constraints.

For this problem we generally assume that $n > p$; that is the number of examples is greater than the dimensionality $p$.  In such a case, $\lambda_{\min}(\matXbar^{T}\matXbar)$ can be shown to be bounded away from zero for a wide variety of distributions $\mu_{\mathcal{X}}$ (e.g., sub-gaussian zero-mean). When $\lambda_{\min}(\matXbar^{T}\matXbar) = 0$, the covering number bound becomes vacuous. 

Let us introduce some notation in order to state the generalization bound results. Given any function $f\in \mathcal{F}$, we would like to minimize the expected future loss (also known as the expected risk), defined as:
\begin{equation*}
R^{\textrm{true}}(l \circ f):=\mathbb{E}_{(x,y)\sim\mu_{\mathcal{X}\times \mathcal{Y}}}\Big[ l(f(x),y)\Big] = \int l(f(x),y) \partial \mu_{\mathcal{X}\times\mathcal{Y}} (x,y),
\end{equation*}
where $l: \mathcal{Y}\times\mathcal{Y}\rightarrow \R$ is the (fixed) loss function we had previously defined in Section \ref{sec:formulation}. The loss on the training sample (also known as the empirical risk) is:
\[
R^{\textrm{emp}}(l \circ f,\{(x_{i},y_{i})\}_1^n):= \frac{1}{n}\sum_{i=1}^n l(f (x_i),y_i).
\] 

We would like to know that $R^{\textrm{true}}(l \circ f)$ is not too much more than $R^{\textrm{emp}}(l \circ f,\{(x_{i},y_{i})\}_1^n)$, no matter which $f$ we choose from $\mathcal{F}$.
A typical form of generalization bound that holds with high probability for every function in $\mathcal{F}$ is
\begin{equation}
R^{\textrm{true}}(l\circ f) \leq R^{\textrm{emp}}(l \circ f, \{(x_{i},y_{i})\}_1^n) + \textrm{Bound(complexity($\mathcal{F}$),$n$)} \label{eqn:bound_nonprobForm},
\end{equation}
\textcolor{Black}{ 
where the complexity term takes into account the constraints on $\mathcal{F}$, both the linear constraints, and the $\ell_q$-ball constraint.
Theorem \ref{thm:main-bound} gives an upper bound on the term 
$\textrm{Bound(complexity($\mathcal{F}$),$n$)}$
 in (\ref{eqn:bound_nonprobForm}) above. In order to show this explicitly, we will give the definition of Rademacher complexity, restate how it appears in the relation between expected future loss and loss on training examples, and state an upper-bound for it in terms of the covering number.
}

\begin{definition}
(Rademacher Complexity) The empirical Rademacher complexity of $\mathcal{F}_{|S}$ is\footnote{The factor $2$ in the defining equation (\ref{eqn:emp-rademacher-def}) is not very important. Some authors omit this factor and include it explicitly as a pre-factor in, for example, Theorem \ref{thm:complexity-rademacher-base}. }
\begin{equation}
\mathcal{\hat{R}}(\mathcal{F}_{|S}) = \mathbb{E}_{\sigma}\left[\sup_{f \in \mathcal{F}} \frac{2}{n}\sum_{i=1}^{n}\sigma_{i}f(x_{i})\right]\label{eqn:emp-rademacher-def}
\end{equation}
where $\{\sigma_i\}$ are Rademacher random variables ($\sigma_i = 1$ with prob$.$ $1/2$ and $-1$ with prob$.$ $1/2$). The Rademacher complexity is its expectation: $\mathcal{R}_n(\mathcal{F}) = \mathbb{E}_{S\sim (\mu_{\mathcal{X}})^{n}}[\mathcal{\hat{R}}(\mathcal{F}_{|S})]$.
\end{definition}
\textcolor{Black}{ 
The empirical Rademacher complexity $\mathcal{\hat{R}}(\mathcal{F}_{|S})$ can be computed given $S$ and $\mathcal{F}$, and by concentration, will be close to the Rademacher complexity. 
The following result relates the true risk to the empirical risk and empirical Rademacher complexity for any function class $\mathcal{H}$ 
\citep[see][and references therein]{bartlett02}. 
Let the quantities $\mathcal{H}_{|S}, R^{true}(l\circ h)$ and $R^{\textrm{emp}}(l\circ h,\{x_{i},y_{i}\}_{1}^{n})$ be analogous to those we had defined for our specific class $\mathcal{F}$.
}

\begin{theorem} (Rademacher Generalization Bound)
For all $\delta > 0$, with probability at least $1-\delta, \forall h \in \mathcal{H}$,
\begin{equation}
R^{\textrm{\rm true}}(l\circ h) \leq R^{\textrm{\rm emp}}(l\circ h,\{x_{i},y_{i}\}_{1}^{n}) + \mathcal{L}\cdot \mathcal{\hat{R}}(\mathcal{H}_{|S}) + \frac{3}{\sqrt{2}}\sqrt{\frac{\log\frac{1}{\delta}}{n}},
\label{eqn:bound_nonprobForm-rad}
\end{equation}
\label{thm:complexity-rademacher-base}
where $\mathcal{L}$ is the Lipschitz constant of the loss function.
\end{theorem}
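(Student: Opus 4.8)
The plan is to treat this as the standard assembly of three classical ingredients—a bounded-differences concentration inequality, a symmetrization argument, and the Lipschitz contraction principle—applied to the scalar worst-case deviation over $\mathcal{H}$. Throughout I assume the composed loss is bounded (say $l\circ h \in [0,1]$ for all $h$), which is what makes the bounded-differences constants explicit and is implicit in the cited source. Define the single-number generalization gap $\Phi(S) := \sup_{h\in\mathcal{H}}\left(R^{\textrm{true}}(l\circ h) - R^{\textrm{emp}}(l\circ h,S)\right)$, viewed as a function of the sample $S=\{(x_i,y_i)\}_{i=1}^n$. Since $R^{\textrm{true}}$ does not depend on $S$ and each empirical average changes by at most $1/n$ when one example is replaced, $\Phi$ has the bounded-differences property with constants $c_i = 1/n$. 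McDiarmid's inequality then gives, with probability at least $1-\delta/2$, that $\Phi(S) \leq \mathbb{E}_S[\Phi(S)] + \sqrt{\log(2/\delta)/(2n)}$. This reduces the problem to bounding $\mathbb{E}_S[\Phi(S)]$ and to controlling a second fluctuation so that the final statement can be phrased with the empirical rather than the expected Rademacher complexity.

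Next I would bound $\mathbb{E}_S[\Phi(S)]$ by the Rademacher complexity of the composed loss class via the usual symmetrization (ghost-sample) argument, obtaining $\mathbb{E}_S[\Phi(S)] \leq \mathcal{R}_n(l\circ\mathcal{H})$, where the factor $2/n$ in this paper's definition (\ref{eqn:emp-rademacher-def}) exactly absorbs the symmetrization constant of $2$. I would then invoke the Ledoux--Talagrand contraction principle: because $l$ is $\mathcal{L}$-Lipschitz, the Rademacher complexity of $l\circ\mathcal{H}$ is at most $\mathcal{L}$ times that of $\mathcal{H}$, giving $\mathcal{R}_n(l\circ\mathcal{H}) \leq \mathcal{L}\,\mathcal{R}_n(\mathcal{H})$, with the same inequality holding for the empirical versions pointwise in $S$.

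Finally, since the theorem is stated with the empirical complexity $\hat{\mathcal{R}}(\mathcal{H}_{|S})$, I would apply McDiarmid a second time to the map $S\mapsto \hat{\mathcal{R}}(l\circ\mathcal{H}_{|S})$, whose bounded-differences constant is $2/n$ (from the $2/n$ prefactor together with boundedness of the loss). This yields $\mathcal{R}_n(l\circ\mathcal{H}) \leq \hat{\mathcal{R}}(l\circ\mathcal{H}_{|S}) + 2\sqrt{\log(2/\delta)/(2n)}$ with probability at least $1-\delta/2$. A union bound over the two McDiarmid events, combined with $R^{\textrm{true}}-R^{\textrm{emp}} \leq \Phi(S)$ and the contraction bound, produces exactly $\mathcal{L}\,\hat{\mathcal{R}}(\mathcal{H}_{|S}) + 3\sqrt{\log(2/\delta)/(2n)}$; and $3\sqrt{\log(2/\delta)/(2n)} = (3/\sqrt{2})\sqrt{\log(2/\delta)/n}$ matches the stated residual term, up to the harmless $\log(2/\delta)$ versus $\log(1/\delta)$ bookkeeping inherited from the cited reference.

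I expect the main obstacle to be the contraction step and the constant accounting around it: verifying the hypotheses of the Ledoux--Talagrand lemma and correctly handling that $l$ depends on the label $y_i$ (the Lipschitz property is in the prediction argument with $y_i$ fixed per coordinate), while keeping the factor-of-two convention consistent between this paper's Rademacher definition and the standard symmetrization and contraction statements, so that the two concentration contributions add to precisely $3/\sqrt{2}$ rather than a mismatched constant.
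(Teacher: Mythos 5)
Your proposal is correct: the paper does not prove this theorem at all --- it imports it as a known result, citing \citet{bartlett02} ``and references therein'' --- and your argument is precisely the standard proof behind that citation: McDiarmid on the worst-case gap $\Phi(S)$ (constants $1/n$), symmetrization absorbed by the paper's factor-$2$ convention in the Rademacher definition, Ledoux--Talagrand contraction with the label held fixed per coordinate, and a second McDiarmid application (constants $2/n$) to pass to the empirical complexity, which sums to $3\sqrt{\log(2/\delta)/(2n)} = \tfrac{3}{\sqrt{2}}\sqrt{\log(2/\delta)/n}$. Your two flagged caveats --- the implicit boundedness of $l\circ h$ and the $\log(2/\delta)$ versus $\log(1/\delta)$ discrepancy in the paper's statement --- are exactly the right ones, and both are inherited from how the cited source is quoted rather than defects in your argument.
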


Note that (\ref{eqn:bound_nonprobForm-rad}) is an explicit form of (\ref{eqn:bound_nonprobForm}). We will now relate $\mathcal{\hat{R}}(\mathcal{F}_{|S})$ to covering numbers thus justifying the importance of statement (\ref{eqn:thm-statement}) in Theorem \ref{thm:main-bound}. 
\textcolor{Black}{ 
In particular the following infinite chaining argument also known as Dudley's integral \citep[see][]{talagrand2005generic} relates $\hat{\mathcal{R}}(\mathcal{F}_{|S})$ to the covering number of the set $\mathcal{F}_{|S}$.
}
\textcolor{Black}{ 
\begin{theorem}(Relating Rademacher Complexity to Covering Numbers) We are given that $\forall x\in\mathcal{X}$, we have $ f(x) \in [-X_{b}B_{b},X_{b}B_{b}]$. Then,
\begin{eqnarray*}
\frac{1}{X_{b}B_{b}}\mathcal{\hat{R}}(\mathcal{F}_{|S}) \leq 12 \int_{0}^{\infty}\sqrt{\frac{2 \log N (\alpha, \mathcal{F},L_{2}(\mu_{\mathcal{X}}^{n}))}{n}} d\alpha = 12 \int_{0}^{\infty}\sqrt{\frac{2 \log N (\sqrt{n}\alpha, \mathcal{F}_{|S},\| \cdot\| _{2})}{n}} d\alpha.
\end{eqnarray*}
\label{theorem:discretization}
\end{theorem}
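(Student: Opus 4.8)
The plan is to prove this by the classical chaining (Dudley entropy integral) argument. First I would dispose of the second equality, which is merely a change of metric: for two functions $f,g$ the $L_{2}(\mu_{\mathcal{X}}^{n})$ distance is $\big(\frac{1}{n}\sum_{i}(f(x_{i})-g(x_{i}))^{2}\big)^{1/2}$, which is exactly $\frac{1}{\sqrt{n}}$ times the Euclidean distance $\|\cdot\|_{2}$ between the restriction vectors $(f(x_{i}))_{i}$ and $(g(x_{i}))_{i}$ living in $\mathcal{F}_{|S}$. Hence a minimal $\alpha$-cover in $L_{2}(\mu_{\mathcal{X}}^{n})$ is in bijection with a minimal $\sqrt{n}\,\alpha$-cover of $\mathcal{F}_{|S}$ in $\|\cdot\|_{2}$, giving $N(\alpha,\mathcal{F},L_{2}(\mu_{\mathcal{X}}^{n})) = N(\sqrt{n}\,\alpha,\mathcal{F}_{|S},\|\cdot\|_{2})$ and thus the identity of the two integrals.

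For the inequality I would work with the restriction vectors in the empirical metric. Set dyadic scales $\alpha_{j} = 2^{-j} X_{b}B_{b}$ for $j \geq 0$. Since every $f$ satisfies $f(x)\in[-X_{b}B_{b},X_{b}B_{b}]$, the zero function lies within $L_{2}(\mu_{\mathcal{X}}^{n})$-distance $X_{b}B_{b}=\alpha_{0}$ of every $f$, so a single-point cover suffices at scale $\alpha_{0}$; take $\pi_{0}(f)\equiv 0$. For each $j\geq 1$ fix a minimal $\alpha_{j}$-cover $C_{j}$ and let $\pi_{j}(f)\in C_{j}$ be a nearest element to $f$. The chaining identity is the telescoping decomposition $f = \pi_{J}(f) + \sum_{j=1}^{J}(\pi_{j}(f)-\pi_{j-1}(f))$, read as restriction vectors; because $\mathcal{F}_{|S}$ is a bounded (hence totally bounded) subset of $\mathbb{R}^{n}$, we have $\pi_{J}(f)\to f$ as $J\to\infty$, so the Rademacher average of the tail vanishes. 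Plugging this into the definition of $\mathcal{\hat{R}}(\mathcal{F}_{|S})$ and using subadditivity of the supremum ($\sup_{f}\sum_{j}\leq\sum_{j}\sup_{f}$) gives
\[
\mathcal{\hat{R}}(\mathcal{F}_{|S}) \leq \sum_{j=1}^{\infty} \mathbb{E}_{\sigma}\sup_{f}\frac{2}{n}\sum_{i=1}^{n}\sigma_{i}\big(\pi_{j}(f)(x_{i})-\pi_{j-1}(f)(x_{i})\big).
\]
Each increment ranges over a finite set of at most $|C_{j}|\,|C_{j-1}|\leq N(\alpha_{j},\mathcal{F},L_{2}(\mu_{\mathcal{X}}^{n}))^{2}$ vectors, each of $L_{2}(\mu_{\mathcal{X}}^{n})$-norm at most $\alpha_{j}+\alpha_{j-1}=3\alpha_{j}$ (Euclidean norm $\leq 3\sqrt{n}\,\alpha_{j}$) by the triangle inequality. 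I would then invoke Massart's finite-class maximal inequality: for a finite set $V$ of vectors of Euclidean norm $\leq\rho$, $\mathbb{E}_{\sigma}\max_{v\in V}\sum_{i}\sigma_{i}v_{i}\leq\rho\sqrt{2\log|V|}$, yielding a per-link bound of the form $\frac{2}{n}\cdot 3\sqrt{n}\,\alpha_{j}\cdot\sqrt{2\cdot 2\log N(\alpha_{j},\mathcal{F},L_{2}(\mu_{\mathcal{X}}^{n}))}$.

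Finally I would convert the resulting sum $\sum_{j}\alpha_{j}\sqrt{\log N(\alpha_{j})/n}$ into the Dudley integral. Because the scales are dyadic, $\alpha_{j}=2(\alpha_{j}-\alpha_{j+1})$, and the covering number is non-increasing in its radius, so $\alpha_{j}\sqrt{\log N(\alpha_{j})}$ is controlled by $2\int_{\alpha_{j+1}}^{\alpha_{j}}\sqrt{\log N(\alpha)}\,d\alpha$; summing telescopes the intervals to $\int_{0}^{\alpha_{0}}$, and since $\log N(\alpha)=0$ for $\alpha>X_{b}B_{b}=\alpha_{0}$ the upper limit extends to $\infty$. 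I expect the main obstacle to be this last step: the bookkeeping of numerical factors in passing from the discrete dyadic sum to the continuous integral. One must carefully absorb the $2$ from the definition of $\mathcal{\hat{R}}$, the $3$ from the triangle inequality, the two $\sqrt{2}$ factors (from Massart and from $\log N^{2}=2\log N$), the doubling factor from the integral comparison, and the $L_{2}$-to-Euclidean rescaling, so that they collapse exactly into the stated constant $12$ with the $\sqrt{2}$ remaining inside the integrand. Everything else is a routine instantiation of the standard chaining template.
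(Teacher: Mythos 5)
Your proposal cannot be checked against an in-paper argument, because the paper does not actually prove this theorem: it is quoted as a known result (Dudley's entropy integral / infinite chaining), with a pointer to the generic-chaining literature, and is then consumed as a black box in Theorem \ref{thm:fullbound}. Your chaining proof is the standard derivation of that cited fact, and its skeleton is correct: the identification $N(\alpha,\mathcal{F},L_{2}(\mu_{\mathcal{X}}^{n}))=N(\sqrt{n}\,\alpha,\mathcal{F}_{|S},\|\cdot\|_{2})$, the dyadic scales anchored at $\alpha_{0}=X_{b}B_{b}$ with the one-point cover $\{0\}$ (exactly where the hypothesis $f(x)\in[-X_{b}B_{b},X_{b}B_{b}]$ enters), the cardinality bound $|C_{j}|\,|C_{j-1}|\leq N(\alpha_{j})^{2}$ (writing $N(\alpha)$ for $N(\alpha,\mathcal{F},L_{2}(\mu_{\mathcal{X}}^{n}))$), the $3\alpha_{j}$ increment bound, Massart's lemma, and the dyadic-to-integral comparison. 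One slip to fix: the telescoping identity should read $f=(f-\pi_{J}(f))+\sum_{j=1}^{J}(\pi_{j}(f)-\pi_{j-1}(f))$, using $\pi_{0}(f)=0$; as written, your right-hand side equals $2\pi_{J}(f)$.

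However, the obstacle you flagged at the end is a genuine gap, and it does not close. Run your own arithmetic: each link contributes at most $\frac{2}{n}\cdot 3\sqrt{n}\,\alpha_{j}\cdot\sqrt{2\log N(\alpha_{j})^{2}}=\frac{12}{\sqrt{n}}\,\alpha_{j}\sqrt{\log N(\alpha_{j})}$, and the dyadic-to-integral step costs another factor of $2$, so you land on $\hat{\mathcal{R}}(\mathcal{F}_{|S})\leq 24\int_{0}^{\infty}\sqrt{\log N(\alpha)/n}\;d\alpha=12\sqrt{2}\int_{0}^{\infty}\sqrt{2\log N(\alpha)/n}\;d\alpha$. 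Because the paper's definition of $\hat{\mathcal{R}}$ already carries the factor $2/n$, the constant multiplying the stated integrand comes out as $12\sqrt{2}\approx 17$, not $12$; the clean constant $12$ belongs to the convention where the Rademacher complexity is defined with $1/n$. From your bound, the theorem as literally stated (whose left side is $\frac{1}{X_{b}B_{b}}\hat{\mathcal{R}}(\mathcal{F}_{|S})$ while the right side keeps covering numbers of the \emph{unnormalized} class) follows only when $X_{b}B_{b}\geq\sqrt{2}$, which is not assumed; normalizing the class by $X_{b}B_{b}$ does not help either, since the rescaling of the covering radii cancels that factor exactly. So you should either add the assumption $X_{b}B_{b}\geq\sqrt{2}$, or state and prove the bound with constant $12\sqrt{2}$, or note explicitly that the paper's constant reflects the factor-$1/n$ convention and cannot be reproduced by this (or any standard variant of the) chaining template under the paper's own definition of $\hat{\mathcal{R}}$. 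The conceptual content of your proof is right; as a proof of the statement as written, it is incomplete at precisely the step you identified.
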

}

\textcolor{Black}{ 
Our main result in Theorem \ref{thm:main-bound} can be used in conjunction with Theorems \ref{thm:complexity-rademacher-base} and \ref{theorem:discretization}, to directly see how the true error relates to the empirical error and the constraints on the restricted function class $\mathcal{F}$ (the $\ell_{q}$-norm bound on $\beta$ and linear constraint on $\beta$ from the operational cost bias). Explicitly, that bound is here.
}

\begin{theorem} (Generalization Bound for ML with Operational Costs)
\label{thm:fullbound}
For all $\delta > 0$, with probability at least $1-\delta, \forall f \in \mathcal{F}$, 
\begin{equation*}
R^{\textrm{\rm true}}(l\circ f) \leq R^{\textrm{\rm emp}}(l\circ f,\{x_{i},y_{i}\}_{1}^{n}) +12 \mathcal{L}{X_{b}B_{b}} 
 \int_{0}^{\infty}\sqrt{\frac{2 \log N(\sqrt{n}\epsilon,\mathcal{F}_{|S},\| \cdot\| _{2})}{n}} d\epsilon
+ \frac{3}{\sqrt{2}}\sqrt{\frac{\log\frac{1}{\delta}}{n}},
\end{equation*}
where 
\begin{align*}
N(\sqrt{n}\epsilon,\mathcal{F}_{|S},\| \cdot\| _{2}) \leq 
\begin{cases}
\min\{|P^{K_{0}}|,|P_{c}^{K}|\} & \textrm{if } \epsilon < X_{b}B_{b} \\
1 & \textrm{ otherwise}
\end{cases},
\end{align*} 
\[
K_{0} = \ceil*{\frac{X_{b}^{2}B_{b}^{2}}{\epsilon^{2}}},
\] 
and
\[
K = \max\left\{K_{0}, \ceil*{\frac{nX_{b}^{2}B_{b}^{2}}{\lambda_{\min}(\matXbar^{T}\matXbar)\Big[\min_{\nu=1,...,V} \frac{\delta_{\nu}}{\sum_{j=1}^{p}|\tilde{c}_{j\nu}|}\Big]^{2}}}\right\}
\] 
 are functions of $\epsilon$.
\end{theorem}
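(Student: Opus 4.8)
The plan is to obtain the stated bound purely by composing, in sequence, the three preceding results: the Rademacher generalization bound (Theorem \ref{thm:complexity-rademacher-base}), the bound relating Rademacher complexity to covering numbers (Theorem \ref{theorem:discretization}), and the covering-number estimate (Theorem \ref{thm:main-bound}). Essentially all of the substantive work has already been carried out in those theorems; Theorem \ref{thm:fullbound} is the explicit assembly, so the proof is short.

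First I would instantiate Theorem \ref{thm:complexity-rademacher-base} with $\mathcal{H}=\mathcal{F}$. This immediately yields
\[
R^{\textrm{true}}(l\circ f) \leq R^{\textrm{emp}}(l\circ f,\{x_i,y_i\}_1^n) + \mathcal{L}\cdot\hat{\mathcal{R}}(\mathcal{F}_{|S}) + \frac{3}{\sqrt 2}\sqrt{\frac{\log(1/\delta)}{n}}
\]
with probability at least $1-\delta$, uniformly over $f\in\mathcal{F}$. It then remains only to control the middle term $\hat{\mathcal{R}}(\mathcal{F}_{|S})$.

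Second, before applying Theorem \ref{theorem:discretization}, I would verify its standing hypothesis that $f(x)\in[-X_bB_b,X_bB_b]$ for every $x\in\mathcal{X}$ and every $f\in\mathcal{F}$. This is the one small computation the proof actually requires, and it is a direct application of H\"older's inequality: since $f(x)=\beta^Tx$ with $\|\beta\|_q\le B_b$ and $\|x\|_r\le X_b$, and since $1/q+1/r=1$, we have $|f(x)|\le\|\beta\|_q\|x\|_r\le X_bB_b$. With the hypothesis verified, Theorem \ref{theorem:discretization} replaces $\hat{\mathcal{R}}(\mathcal{F}_{|S})$ by $12X_bB_b\int_0^\infty\sqrt{2\log N(\sqrt n\epsilon,\mathcal{F}_{|S},\|\cdot\|_2)/n}\,d\epsilon$, which produces exactly the prefactor $12\mathcal{L}X_bB_b$ appearing in the statement.

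Finally, I would substitute the covering-number bound of Theorem \ref{thm:main-bound} directly into the Dudley integrand, reproducing verbatim the piecewise expression for $N(\sqrt n\epsilon,\mathcal{F}_{|S},\|\cdot\|_2)$ together with the formulas for $K_0$ and $K$. I would also remark that the integral is well defined: since the covering number equals $1$ (so $\log N=0$) whenever $\epsilon\ge X_bB_b$, the integrand vanishes on $[X_bB_b,\infty)$ and the Dudley integral effectively reduces to one over the finite interval $[0,X_bB_b)$. The only points requiring care are the bookkeeping of constants through the chain and integrability near $\epsilon=0$, but the latter is already guaranteed by the earlier theorems rather than by anything new here. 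The genuine difficulty of the entire development lies entirely within Theorem \ref{thm:main-bound}, whose integer-point counting argument is established separately; Theorem \ref{thm:fullbound} itself introduces no new obstacle.
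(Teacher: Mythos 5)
Your proposal is correct and matches the paper's own (implicit) proof exactly: the paper presents Theorem \ref{thm:fullbound} as the direct composition of Theorem \ref{thm:complexity-rademacher-base}, Theorem \ref{theorem:discretization}, and Theorem \ref{thm:main-bound}, with the $12\mathcal{L}X_bB_b$ prefactor arising precisely as you describe. Your explicit H\"older-inequality check that $|f(x)|\le \|\beta\|_q\|x\|_r\le X_bB_b$, and your observation that the integrand vanishes for $\epsilon\ge X_bB_b$, are small but welcome additions that the paper leaves implicit.
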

This bound implies that prior knowledge about the operational cost can be important for generalization. As our prior knowledge on the cost becomes stronger, the size of the hypothesis space becomes more restrictive, as seen through the constraints given by the $c_{j\nu}$. When this happens, the $|P_{c}^{K}|$ terms become smaller, and the whole bound becomes smaller. Note that the integral over $\epsilon$ is taken from $\epsilon=0$ to $\epsilon=\infty$. When $\epsilon$ is larger than $X_{b}B_{b}$, as noted earlier, $N(\sqrt{n}\epsilon,\mathcal{F}_{|S},\| \cdot\| _{2}) = 1$ and thus $\log N(\sqrt{n}\epsilon,\mathcal{F}_{|S},\| \cdot\| _{2}) = 0$.

Before we move onto building the necessary tools to prove Theorem \ref{thm:main-bound}, we compare our result with the bound in our work on the ML\&TRP \citep{TuRuJaadt11}. 
In that work, we considered a linear function class with a constraint on the $\ell_{2}$-norm and one additional linear inequality constraint on $\beta$. We then used a sample independent volumetric cap argument to get a covering number bound. Theorem \ref{thm:main-bound} is in some ways an improvement of the other result: (1) we can now have multiple linear constraints on $\beta$; (2) our new result involves a sample-specific bounding technique for covering numbers, which is generally tighter; (3) our result applies to $\ell_{q}$ balls for 
$q \in [1,2]$ whereas the previous analysis holds only for $q=2$. The volumetric argument in \citep{TuRuJaadt11}
 provided a scaling of the covering number. Specifically, the operational cost term for the ML\&TRP allowed us to reduce the covering number term in the bound from  $\sqrt{\log N(\cdot,\cdot,\cdot)}$ to $\sqrt{\log (\alpha N(\cdot,\cdot,\| \cdot\| _{2}))}$, or equivalently $\sqrt{\log N(\cdot,\cdot,\| \cdot\| _{2}) + \log \alpha}$, where $\alpha$ is a function of the operational cost constraint.  If $\alpha$ obeys $\alpha \ll 1$, then there is a noticeable effect on the generalization bound, compared to almost no effect when $\alpha \approx 1$. In the present work, the bound does not scale the covering number like this, instead it is a very different approach giving a more direct bound. 

\subsection{Proof of Theorem \ref{thm:main-bound}}
We make use of Maurey's Lemma \citep{barron93} in our proof \citep[in the same spirit as][]{zhang02}. The main ideas of Maurey's Lemma are used in many machine learning papers in various contexts 
\citep[\textit{e.g.},][]{koltchinskii2005complexities,schapire1998boosting,RudinSc09}. Our proof of Theorem \ref{thm:main-bound} adapts Maurey's Lemma to handle polyhedrons, and allows us to apply counting techniques to bound the covering number. 

Recall that $X = [x_{1} \hdots x_{n}]^{T}$ was also defined column-wise as $[h_{1}\;\hdots\; h_{p}]$. 
We introduce two scaled sets $\{\hS_{j}\}_{j}$ and $\{\betaS_{j}\}_{j}$ corresponding to $\{h_{j}\}_{j}$ and $\{\beta_{j}\}_{j}$ as follows:
\begin{align*}
\hS_{j} &:= \scalej h_{j} \; \textrm{ for } j =1,...,p; \textrm{ and}\\
\betaS_{j} &:= \invscalej \beta_{j} \; \textrm{ for } j =1,...,p.
\end{align*}
These scaled sets will be convenient in places where we do not want to carry the scaling terms separately. 

Any vector $y$ that is equal to $X\beta$ can thus be written in three different ways:
\begin{align*}
 y &= \sum_{j=1}^{p}\beta_{j}h_{j}, \textrm{ or}\\
 y &= \sum_{j=1}^{p}\betaS_{j}\hS_{j}, \textrm{ or}\\
 y &= \sum_{j=1}^{p}|\betaS_{j}|\sign(\betaS_{j})\hS_{j}.
\end{align*}

Our first lemma is a restatement of Maurey's lemma \citep[revised version of Lemma 1 in][]{zhang02}. We provide a proof based on the law of large numbers \citep{barron93} though other proof techniques also exist \citep[see][for a proof based on iterative approximation]{jones92}.
 
\textcolor{Black}{ 
The lemma states that every point $y$ in the convex hull of $\{h_j\}_j$ is close to one of the points $y_K$ in a particular finite set. 
}

\begin{lemma}
Let $\max_{j=1,...,p} \| \hS_{j}\|$ be less than or equal to some constant  $b$. If $y$ belongs to the convex hull of set $\{\hS_{j}\}_{j}$, then for every positive integer $K \geq 1$, there exists $y_{K}$ in the convex hull of $K$ points of set $\{\hS_{j}\}_{j}$ such that $\| y - y_{K}\| ^{2} \leq \frac{b^2}{K}$.
\label{lemma:maurey}
\end{lemma}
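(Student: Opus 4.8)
The plan is to prove this by the probabilistic method (a first-moment argument), which is exactly the ``law of large numbers'' route alluded to in the setup. Since $y$ lies in the convex hull of $\{\hS_j\}_{j=1}^p$, I can write $y = \sum_{j=1}^{p}\gamma_j \hS_j$ with $\gamma_j \geq 0$ and $\sum_{j=1}^p \gamma_j = 1$. The key observation is that the coefficient vector $\gamma = (\gamma_1,\ldots,\gamma_p)$ is itself a probability distribution on the index set $\{1,\ldots,p\}$. I would therefore introduce a random vector $Z$ taking the value $\hS_j$ with probability $\gamma_j$, so that by construction $\mathbb{E}[Z] = \sum_{j=1}^p \gamma_j \hS_j = y$. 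This reinterprets the target point $y$ as the mean of a single draw.

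Next I would draw $K$ independent copies $Z_1,\ldots,Z_K$ of $Z$ and set $y_K := \frac{1}{K}\sum_{k=1}^{K} Z_k$. By definition $y_K$ is an average of $K$ elements of $\{\hS_j\}_j$ (with repetition allowed), hence lies in the convex hull of at most $K$ points of the set, as required by the statement. The heart of the argument is then a bias-variance computation of the expected squared error. Writing $y_K - y = \frac{1}{K}\sum_{k=1}^K (Z_k - y)$ and expanding the squared norm, the cross terms $\mathbb{E}\langle Z_k - y, Z_{k'} - y\rangle$ for $k \neq k'$ vanish because the draws are independent and each has mean $y$; this leaves
\[
\mathbb{E}\|y_K - y\|^2 = \frac{1}{K^2}\sum_{k=1}^{K}\mathbb{E}\|Z_k - y\|^2 = \frac{1}{K}\,\mathbb{E}\|Z - y\|^2.
\]

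Finally I would bound the single-draw variance: $\mathbb{E}\|Z - y\|^2 = \mathbb{E}\|Z\|^2 - \|y\|^2 \leq \mathbb{E}\|Z\|^2 \leq b^2$, using the hypothesis $\max_j \|\hS_j\| \leq b$ together with $\|y\|^2 \geq 0$. Combining the two displays gives $\mathbb{E}\|y_K - y\|^2 \leq b^2/K$. Since the expectation of $\|y_K - y\|^2$ over the random choice of the $K$ points is at most $b^2/K$, there must exist at least one realization of $(Z_1,\ldots,Z_K)$ — that is, one concrete choice of $K$ points of $\{\hS_j\}_j$ — for which $\|y_K - y\|^2 \leq b^2/K$, and this realization furnishes the desired $y_K$. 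None of the steps is genuinely hard; the only points requiring care are making sure the cross terms truly cancel (which needs the draws to be independent, not merely identically distributed) and invoking the probabilistic method cleanly at the end to pass from an average bound to the existence of a single good realization.
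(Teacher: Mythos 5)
Your proof is correct and follows essentially the same argument as the paper: both represent $y$ as the mean of a random vector supported on the $\hS_j$'s, average $K$ independent draws, show the expected squared error is at most $b^2/K$ via the variance decomposition, and conclude by the probabilistic method. The only cosmetic difference is that the paper allows sub-convex combinations ($\sum_j \gamma_j \leq 1$) by adding a zero-vector atom to the distribution, which it needs later when extending to the $\ell_1$ ball, whereas you work with the convex hull exactly as stated.
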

\begin{proof} Let $y$ be written in the form:
$$y = \sum_{i=1}^{p}\gammaS_{j}\hS_{j},$$
where for each $j=1,...,p, \;\; \gammaS_{j} \geq 0$ and $\sum_{j=1}^{p}\gammaS_{j} \leq 1$. Let $\gammaS_{p+1} := 1- \sum_{j=1}^{p}\gammaS_{j} $. 

Consider a discrete distribution $\mathcal{D}$ formed by the coefficient vector $(\gammaS_1,..,\gammaS_p,\gammaS_{p+1})$. Associate a random variable $\mathsf{\hS}$ with support set $\{\hS_{1},...,\hS_{p},\mathbf{0}\}$. That is, $\textrm{Pr}(\mathsf{\hS}$ $=$ $\hS_{j}) = \gammaS_j$, $j=1,...,p$ and $\textrm{Pr}(\mathsf{\hS}=\mathbf{0}) = \gammaS_{p+1}$. 

Draw $K$ observations $\{\hS^{1},...,\hS^{K}\}$ uniformly and independently from $\mathcal{D}$ and form the sample average $\mathsf{y_{K}} := \frac{1}{K}\sum_{s=1}^{K} \hS^{s}$. Here, we are using the superscript index to denote the observation number.
The mean of this random variable $\mathsf{y}_{K}$ is: 
\begin{eqnarray*} 
\mathbb{E}_{\mathcal{D}}[\mathsf{y}_{K}] &=&\frac{1}{K}\sum_{s=1}^{K} \mathbb{E}_{\mathcal{D}}[\hS^{s}] \;\;\textrm{where}\\ 
\mathbb{E}_{\mathcal{D}}[\hS^{s}]&=&\sum_{j=1}^{p+1} \textrm{Pr}(\mathsf{\hS} = \hS_{j}) \hS_{j}= \sum_{j=1}^{p}\gammaS_{j}\hS_{j}  = y
\end{eqnarray*} 
hence $\mathbb{E}_{\mathcal{D}}[\mathsf{y}_{K}]= y$. 

The expected distance between $\mathsf{y}_{K}$ and $y$ is:
\begin{eqnarray}
\mathbb{E}_{\mathcal{D}}[\| \mathsf{y}_{K}-y\| ^{2}] &=& \mathbb{E}_{\mathcal{D}}[\| \mathsf{y}_{K} - \mathbb{E}_{\mathcal{D}}[\mathsf{y}_{K}]\| ^2]\nonumber
= \mathbb{E}\left[\sum_{i=1}^{n}(\mathsf{y}_{K} -  \mathbb{E}_{\mathcal{D}}[\mathsf{y}_{K}])_{i}^{2}\right]\nonumber\\
&\overset{(\dagger)}{=}& \sum_{i=1}^{n}\textrm{Var}((\mathsf{y}_{K})_{i})\nonumber
\overset{(*)}{=} \sum_{i=1}^{n}\frac{1}{K}\textrm{Var}((\mathsf{\hS})_{i})\nonumber\\
&\overset{(\ddagger)}{=}& \frac{1}{K}\sum_{i=1}^{n}\Big(\mathbb{E}_{\mathcal{D}}[(\mathsf{\hS})_{i}^{2}] - \mathbb{E}_{\mathcal{D}}[(\mathsf{\hS})_{i}]^{2}\Big) \nonumber
\overset{(\circ)}{=}\frac{1}{K}\Big(\mathbb{E}_{\mathcal{D}}[\| \mathsf{\hS}\| ^2] - \| \mathbb{E}_{\mathcal{D}}[\mathsf{\hS}]\| ^2 \Big) \nonumber\\
 &\leq&  \frac{1}{K}\mathbb{E}_{\mathcal{D}}[\| \mathsf{\hS}\| ^2] 
 \leq \frac{b^{2}}{K} 
\label{eqn:maurey_inequality2}
\end{eqnarray}
where we have used $i$ to be the index for the $i^{th}$ coordinate of the $n$ dimensional vectors. $(\dagger)$ follows from the definition of variance coordinate-wise. $(*)$ follows because each component of $\mathsf{y}_K$ is a sample average. $(\ddagger)$ also follows from the definition of variance. At step $(\circ)$, we rewrite the previous summations involving squares into ones that use the Hilbert norm. Our assumption  on $\max_{j=1,...,p}\|\hS_{j}\|$ tells us that $\mathbb{E}_{\mathcal{D}}[\| \mathsf{\hS}\| ^2] \leq b^{2}$ leading to (\ref{eqn:maurey_inequality2}). Since the squared Hilbert norm of the sample mean is bounded in this way, there exists a $y_{K}$ that satisfies the inequality, so that 
\[ \| y_{K} - y\| ^{2} \leq \frac{b^2}{K}. \] 
\end{proof}

\textcolor{Black}{ 
The following corollary states explicitly that an approximation to $y$ exists that is a linear combination with coefficients chosen from a particular discrete set.
}
\begin{corollary} For any $y$ and $K$ as considered above, we can find non-negative integers $m_{1},...,m_{p}$ such that $\sum_{j=1}^{p}m_{j} \leq K$ and $\|y - \sum_{j=1}^{p}\frac{m_{j}}{K}\hS_{j}\|^{2} \leq \frac{b^{2}}{K}$.
\label{corollary:maurey}
\end{corollary}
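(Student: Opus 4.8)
The plan is to extract the integer structure that is already implicit in the construction of $y_{K}$ in the proof of Lemma \ref{lemma:maurey}. Recall that there, $y_{K}$ arises as an empirical average $\mathsf{y}_{K} = \frac{1}{K}\sum_{s=1}^{K}\hS^{s}$ of $K$ independent draws from the discrete distribution $\mathcal{D}$, which is supported on the \emph{finite} set $\{\hS_{1},\ldots,\hS_{p},\mathbf{0}\}$. The averaging argument in that proof guarantees that at least one realization of the $K$ draws achieves $\|y - \mathsf{y}_{K}\|^{2} \leq b^{2}/K$ (since the expected squared distance is bounded by $b^{2}/K$, some outcome does no worse than the mean). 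First I would fix such a good realization and call the resulting vector $y_{K}$.

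Next I would simply tally the draws of that realization. For each $j=1,\ldots,p$, let $m_{j}$ be the number of the $K$ draws landing on $\hS_{j}$, and let $m_{p+1}$ count the draws landing on $\mathbf{0}$. Being counts, these are non-negative integers, and since each of the $K$ draws lands on exactly one support point, $\sum_{j=1}^{p+1} m_{j} = K$. Discarding the nonnegative last term yields $\sum_{j=1}^{p} m_{j} \leq K$, which is the first required condition.

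Finally I would regroup the empirical average by collecting identical draws: $y_{K} = \frac{1}{K}\sum_{s=1}^{K}\hS^{s} = \sum_{j=1}^{p}\frac{m_{j}}{K}\hS_{j}$, where the $\mathbf{0}$ contribution vanishes and so never appears in the final combination. Substituting this expression for $y_{K}$ into the bound inherited from Lemma \ref{lemma:maurey} gives $\bigl\|y - \sum_{j=1}^{p}\frac{m_{j}}{K}\hS_{j}\bigr\|^{2} \leq b^{2}/K$, the second required condition, and completes the argument.

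There is no genuine obstacle here: the corollary is essentially a bookkeeping observation on top of Maurey's Lemma, converting the ``convex hull of $K$ points'' conclusion into an explicit rational combination with denominator $K$ and non-negative integer numerators. The only point requiring care is that the existence of the $m_{j}$ comes entirely from the probabilistic averaging step --- the $m_{j}$ are the draw-counts of one particular good realization, not quantities defined in expectation --- so I would state explicitly that we select a realization meeting the bound before counting.
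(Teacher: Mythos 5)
Your proposal is correct and follows essentially the same route as the paper: the paper's proof of Corollary \ref{corollary:maurey} is precisely the observation that the good realization of $y_K$ from Lemma \ref{lemma:maurey} is an empirical average of $K$ draws from the finite support $\{\hS_1,\ldots,\hS_p,\mathbf{0}\}$, so taking $m_j$ to be the draw-counts gives $y_K=\sum_j \frac{m_j}{K}\hS_j$ with $\sum_j m_j \leq K$. Your writeup merely makes explicit the bookkeeping (selecting a realization achieving the expectation bound, counting draws, discarding the $\mathbf{0}$ count) that the paper compresses into a single sentence.
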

This follows immediately from the proof of Lemma \ref{lemma:maurey}, choosing $m_j$ to be the coefficients of the $\hS_j$'s such that $y_K=\sum_j \frac{m_j}{K}\hS_j$.

\textcolor{Black}{ 
The above corollary means that counting the number of $p$-tuple non-negative integers $m_{1},...,m_{p}$ gives us a covering of the set that $y$ belongs to. In the case of Lemma \ref{lemma:maurey}, this set is the convex hull of $\{\hS_{j}\}_{j}$. 
}

\textcolor{Black}{ 
Before we can go further, we need to generalize the argument from the positive orthant of the $\ell_1$ ball to handle any coefficients that are in the whole unit-length $\ell_{1}$-ball. This is what the following lemma accomplishes.
}
\begin{lemma}
Let $\max_{j=1,...,p} \| \hS_{j}\|$ be less than or equal to some constant  $b$. For any $y = \sum_{j=1}^{p}\betaS_{j}\hS_{j}$ such that $\|\betaS\|_{1} \leq 1$, given a positive integer $K$, we can find a $y_{K}$ such that
\begin{align*}
\|y - y_{K}\|_{2}^{2} \leq \frac{b^{2}}{K}
\end{align*}
where $y_{K} = \sum_{j=1}^{p}\frac{k_{j}}{K}\hS_{j}$ is a combination of $\{\hS_{j}\}$ with integers $k_{1},...,k_{p}$ such that $\sum_{j=1}^{p}|k_{j}| \leq K$.
\label{lemma:maurey-ext}
\end{lemma}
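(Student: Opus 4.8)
The plan is to reduce the full $\ell_1$-ball case to the nonnegative-coefficient case already handled by Lemma \ref{lemma:maurey} and Corollary \ref{corollary:maurey}, by absorbing the signs of the coefficients into the vectors $\hS_j$. First I would define the sign-flipped vectors $g_j := \sign(\betaS_j)\hS_j$ for $j=1,\dots,p$. Since flipping a sign leaves the norm unchanged, we still have $\max_j \|g_j\| \leq b$, and we can rewrite
\[
y = \sum_{j=1}^p \betaS_j \hS_j = \sum_{j=1}^p |\betaS_j|\, g_j,
\]
where now every coefficient $|\betaS_j|$ is nonnegative and $\sum_{j=1}^p |\betaS_j| = \|\betaS\|_1 \leq 1$. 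Thus $y$ lies in the convex hull of $\{g_1,\dots,g_p,\mathbf{0}\}$, which is exactly the situation to which the earlier results apply.

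Second, I would invoke Corollary \ref{corollary:maurey} with the set $\{g_j\}_j$ playing the role of $\{\hS_j\}_j$. This yields nonnegative integers $m_1,\dots,m_p$ with $\sum_{j=1}^p m_j \leq K$ and
\[
\Big\| y - \sum_{j=1}^p \tfrac{m_j}{K}\, g_j \Big\|^2 \leq \frac{b^2}{K}.
\]
Third, I would undo the sign change by setting $k_j := \sign(\betaS_j)\, m_j$. Then $\frac{k_j}{K}\hS_j = \frac{m_j}{K}\sign(\betaS_j)\hS_j = \frac{m_j}{K} g_j$, so the candidate $y_K := \sum_{j=1}^p \frac{k_j}{K}\hS_j$ coincides with $\sum_{j=1}^p \frac{m_j}{K} g_j$, giving $\|y - y_K\|_2^2 \leq b^2/K$ directly. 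Moreover $\sum_{j=1}^p |k_j| = \sum_{j=1}^p m_j \leq K$ since each $m_j \geq 0$, so the integers $k_j$ meet the required budget constraint.

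I do not expect a substantive obstacle here: this is a routine symmetrization that trades the probability simplex for the full cross-polytope, and the work is entirely in the sign bookkeeping. The one point meriting a remark is the value of $\sign(\betaS_j)$ when $\betaS_j = 0$. In that case $|\betaS_j| = 0$ assigns zero probability to $g_j$ in the sampling construction underlying Corollary \ref{corollary:maurey}, which forces $m_j = 0$, and hence $k_j = 0$ independently of whatever convention is chosen for $\sign(0)$; so the ambiguity is harmless and no separate case analysis is needed.
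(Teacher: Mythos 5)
Your proposal is correct and follows essentially the same route as the paper's own proof: both absorb the signs into the vectors by working with $\sign(\betaS_j)\hS_j$, apply Lemma \ref{lemma:maurey} / Corollary \ref{corollary:maurey} to that sign-flipped family, and then recover integer coefficients via $k_j = \sign(\betaS_j)m_j$ so that $\sum_j |k_j| \leq K$. Your extra remarks (that the norms are unchanged under sign flips, and that the $\sign(0)$ convention is harmless because $m_j=0$ whenever $\betaS_j=0$) are fine points the paper leaves implicit, but they do not constitute a different argument.
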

\begin{proof}
Lemma \ref{lemma:maurey} cannot be applied directly since the $\{\betaS_{j}\}_{j}$  can be negative. We rewrite $y$ or equivalently $\sum_{j=1}^{p}\betaS_{j}\hS_{j}$ as
\begin{align*}
y=\sum_{j=1}^{p}|\betaS_{j}|\sign(\betaS_{j})\hS_{j}.
\end{align*}
Thus $y$ lies in the convex combination of $\{\sign(\betaS_{j})\hS_{j}\}_{j}$. Note that this step makes the convex hull depend on the $y$ or $\{\betaS_{j}\}_{j}$ we start with. Nonetheless, we know by substituting $\{\sign(\betaS_j)\hS_j\}_j$ for $\{\hS_j\}_j$ in the statement of Lemma \ref{lemma:maurey} and Corollary \ref{corollary:maurey} that  
\begin{enumerate}
\item we can find $y_{K}$, or equivalently
\item we can find non-negative integers $m_{1},...,m_{p}$ with $\sum_{j=1}^{p}m_{j} \leq K$,
\end{enumerate}
such that $\|y - y_{K}\|_{2}^{2} \leq \frac{b^{2}}{K}$ where $y_{K} = \sum_{j=1}^{p}\frac{m_{j}}{K}\sign(\betaS_{j})\hS_{j}$ holds. This implies there exist integers $k_{1},...,k_{p}$ such that $y_{K} = \sum_{j=1}^{p}\frac{k_{j}}{K}\hS_{j}$ where $\sum_{j=1}^{p}|k_{j}| \leq K$. We simply let $k_{j} = m_{j}\sign(\betaS_{j})$. Thus, we absorbed the signs of the $\betaS_j$'s, and the coefficients no longer need to be nonnegative. 

In other words, we have shown that if a particular $y_{K}$ is in the convex hull of points $\{\sign(\betaS_{j})\hS_{j}\}_{j}$, then the same $y_{K}$ is a linear combination of $\{\hS_{j}\}_{j}$ where the coefficients of the combination $k_{1}/K,...,k_{p}/K$ obey $\sum_{j=1}^{p}|k_{j}| \leq K$. This concludes the proof. 
\end{proof}

\textcolor{Black}{ 
We now want to answer the question of whether the $k_{1}/K,...,k_{p}/K$ can obey (related) linear constraints if the original $\{\betaS_{j}\}_{j}$ did so. These constraints on the $\{\betaS_{j}\}_{j}$'s are the ones coming from constraints on the operational cost. 
In other words, we want to know that our (discretized) approximation of $y$ also obeys a constraint coming from the operational cost.
}

\textcolor{Black}{ 
Let $\{\betaS_{j}\}_{j}$ satisfy the linear constraints within the definition of $\mathcal{B}$, in addition to satisfying $\|\betaS\|_{1} \leq 1$: 
\begin{equation*}
\sum_{j=1}^{p}\tilde{c}_{j\nu}\betaS_{j} + \delta_{\nu} \leq 1, \textrm{ for fixed }\delta_{\nu} > 0,  \nu=1,...,V. 
\end{equation*} 
We now want that for large enough $K$, the $p$-tuple $k_{1}/K,...,k_{p}/K$ also meets certain related linear constraints. 
}

We will make use of the matrix $\matXbar$, defined before Theorem \ref{thm:main-bound}. It has the elements of the scaled set $\{\hS_{j}\}_{j}$ as its columns: $\matXbar := [\hS_{1} \; \hdots \; \hS_{p}]$.

\begin{lemma}
Take any $y = \sum_{j=1}^{p}\betaS_{j}\hS_{j}$, and any $y_{K} = \sum_{j=1}^{p}\frac{k_{j}}{K}\hS_{j}$, with: 
\begin{align*}
\sum_{j=1}^{p}\tilde{c}_{j\nu}\betaS_{j} + \delta_{\nu} &\leq 1,  \textrm{ for fixed }\delta_{\nu} > 0, \nu=1,...,V \textrm{ where } \|\betaS\|_{1} \leq 1
\end{align*} 
and $\|y-y_{K}\|_{2}^{2} \leq b^{2}/K$.
Whenever
\[
K \geq \frac{b^{2}}{\left[\min_{\nu=1,...,V}\frac{\delta_{\nu}}{\sum_{j=1}^{p}|\tilde{c}_{j\nu}|}\right]^{2} \lambda_{\min}(\matXbar^{T}\matXbar)},
\]
then the following linear constraints on $k_{1}/K,...,k_{p}/K$ hold:
\[\sum_{j=1}^{p}\tilde{c}_{j\nu}\frac{k_{j}}{K} \leq 1, \textrm{ } \nu=1,...,V.\] 
\label{lemma:constraints}
\end{lemma}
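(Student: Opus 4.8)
The plan is to regard the discretized coefficient vector $\kappa := (k_1/K,\ldots,k_p/K)$ as a small perturbation of $\betaS$, and to transfer the Hilbert-norm closeness of $y$ and $y_{K}$ back into closeness of the two coefficient vectors using the smallest eigenvalue of $\matXbar^{T}\matXbar$. First I would write both points in matrix form: since $\matXbar = [\hS_{1}\;\cdots\;\hS_{p}]$, we have $y = \matXbar\betaS$ and $y_{K} = \matXbar\kappa$, so that
\[
\|y - y_{K}\|_{2}^{2} = (\betaS-\kappa)^{T}\matXbar^{T}\matXbar(\betaS-\kappa) \geq \lambda_{\min}(\matXbar^{T}\matXbar)\,\|\betaS-\kappa\|_{2}^{2}.
\]
Combining this with the hypothesis $\|y-y_{K}\|_{2}^{2} \leq b^{2}/K$ immediately gives the coefficient-space bound
\[
\|\betaS-\kappa\|_{2}^{2} \leq \frac{b^{2}}{K\,\lambda_{\min}(\matXbar^{T}\matXbar)}.
\]

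Next I would fix a constraint index $\nu$ and split the quantity to be controlled as
\[
\sum_{j=1}^{p}\tilde{c}_{j\nu}\kappa_{j} = \sum_{j=1}^{p}\tilde{c}_{j\nu}\betaS_{j} + \sum_{j=1}^{p}\tilde{c}_{j\nu}(\kappa_{j}-\betaS_{j}).
\]
The first sum is at most $1-\delta_{\nu}$ by the assumed constraint on $\betaS$. For the perturbation sum I would apply H\"older's inequality with the $\ell_{1}/\ell_{\infty}$ pairing and then the trivial bound $\|\cdot\|_{\infty} \leq \|\cdot\|_{2}$:
\[
\Big|\sum_{j=1}^{p}\tilde{c}_{j\nu}(\kappa_{j}-\betaS_{j})\Big| \leq \Big(\sum_{j=1}^{p}|\tilde{c}_{j\nu}|\Big)\|\kappa-\betaS\|_{\infty} \leq \Big(\sum_{j=1}^{p}|\tilde{c}_{j\nu}|\Big)\|\kappa-\betaS\|_{2}.
\]
Choosing this particular pairing is what produces the $\sum_{j}|\tilde{c}_{j\nu}|$ factor appearing in the theorem; Cauchy--Schwarz would instead give $\|\tilde{c}_{\cdot\nu}\|_{2}$ and would not match the stated threshold.

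Finally I would substitute the coefficient-space bound and require the perturbation term to be absorbed by the slack, i.e.\ $\big(\sum_{j}|\tilde{c}_{j\nu}|\big)\,b/\sqrt{K\lambda_{\min}(\matXbar^{T}\matXbar)} \leq \delta_{\nu}$ for every $\nu$. This is equivalent to $b/\sqrt{K\lambda_{\min}(\matXbar^{T}\matXbar)} \leq \min_{\nu}\delta_{\nu}/\sum_{j}|\tilde{c}_{j\nu}|$, and squaring and solving for $K$ reproduces exactly the stated threshold. Under that threshold we obtain $\sum_{j}\tilde{c}_{j\nu}\kappa_{j} \leq (1-\delta_{\nu})+\delta_{\nu} = 1$ for all $\nu$, as required. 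The main obstacle is the eigenvalue step: the argument is only meaningful when $\lambda_{\min}(\matXbar^{T}\matXbar) > 0$, which needs $n \geq p$ and linearly independent scaled columns $\{\hS_{j}\}_{j}$ (the bound is otherwise vacuous, as the paper already notes); everything else is routine norm bookkeeping, where the only care required is keeping the H\"older pairing consistent with the $\ell_{1}$ form of the constant.
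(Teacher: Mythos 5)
Your proposal is correct and follows essentially the same argument as the paper's proof: both transfer $\|y-y_{K}\|_{2}^{2}\leq b^{2}/K$ into coefficient-space closeness via $\lambda_{\min}(\matXbar^{T}\matXbar)$, and both absorb the resulting perturbation of each linear constraint into the slack $\delta_{\nu}$ using the factor $\sum_{j}|\tilde{c}_{j\nu}|$. The only cosmetic difference is that the paper bounds each coordinate by $\|\betaS-\kappa\|_{2}$ and substitutes extremal values according to the sign of $\tilde{c}_{j\nu}$, which is exactly your H\"older $\ell_{1}/\ell_{\infty}$ pairing written out explicitly.
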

\textcolor{Black}{ 
This lemma states that as long as the discretization is fine enough, our approximation $y_K$ obeys similar operational cost constraints to $y$.
}

\begin{proof}

Let $\kappa := [k_{1}/K\; \hdots \; k_{p}/K]^{T}$.
Using the definition of $\matXbar$,
\begin{align}\nonumber
\frac{b^{2}}{K} &\geq \|y - y_{K}\|_{2}^{2} = \|\matXbar\betaS - \matXbar\kappa\|_{2}^{2} = \|\matXbar(\betaS - \kappa)\|_{2}^{2}\\\label{eqn:lambdamin}
& = (\betaS - \kappa)^{T}\matXbar^{T}\matXbar(\betaS - \kappa) \overset{(*)}{\geq} \lambda_{\min}(\matXbar^{T}\matXbar)\|\betaS - \kappa\|_{2}^{2}.
\end{align}
In $(*)$, we used the fact that for a positive (semi-)definite matrix $M$ and for every non-zero vector $z$, $z^{T}Mz \geq \lambda_{\min}(M)z^{T}Iz$. (If $\betaS = \kappa$, we are done since $\kappa$ will obey the constraints $\betaS$ obeys.)
Also, for any $z$, in each coordinate $j$, $|z_{j}| \leq \max_{j=1,...,p}|z_{j}| = \|z\|_{\infty} \leq \|z\|_{2}$. Combining this with (\ref{eqn:lambdamin}), we have:
\begin{align*}
\left|\betaS_{j} - \frac{k_{j}}{K}\right| \leq \|\betaS - \kappa\|_{2} \leq \frac{b}{\sqrt{K\lambda_{\min}(\matXbar^{T}\matXbar)}}.
\end{align*}

This implies that $\kappa$ itself component-wise satisfies
\[
\betaS_{j} -A \leq \frac{k_{j}}{K} \leq \betaS_{j} +A \textrm{ where } A := \frac{b}{\sqrt{K \lambda_{\min}(\matXbar^{T}\matXbar)}}.\]


So far we know that for all $\nu=1,...,V$, $\sum_{j=1}^{p}\tilde{c}_{j\nu}\betaS_{j} + \delta_{\nu} \leq 1$, with $\delta_{\nu} > 0$,  and each coordinate $k_{j}/K$ within $\kappa$ varies from $\betaS_{j}$ by at most an amount $A$. We would like to establish that the linear constraints $\sum_{j=1}^{p}\tilde{c}_{j\nu}\frac{k_{j}}{K} \leq 1, \textrm{ } \nu=1,...,V;$ always hold for such a $\kappa$.
For each constraint $\nu$, substituting the extremal values of $k_{j}$ according to the sign of $\tilde{c}_{j\nu}$, we get the following upper bound:
\begin{align*}
\sum_{j=1}^{p}\tilde{c}_{j\nu}\frac{k_{j}}{K}
\leq & \sum_{\tilde{c}_{j\nu}>0}\tilde{c}_{j\nu}(\betaS_{j} + A) + \sum_{\tilde{c}_{j\nu} < 0} \tilde{c}_{j\nu}(\betaS_{j} - A)
 = \sum_{j=1}^{p}\tilde{c}_{j\nu}\betaS_{j} + A\sum_{j=1}^{p}|\tilde{c}_{j\nu}|.
\end{align*}

This sum $ \sum_{j=1}^{p}\tilde{c}_{j\nu}\betaS_{j} + A\sum_{j=1}^{p}|\tilde{c}_{j\nu}|$ is less than or equal to $1$ iff $A\sum_{j=1}^{p}|\tilde{c}_{j\nu}| \leq \delta_{\nu}$.

Thus we would like $A \leq \frac{\delta_{\nu}}{\sum_{j=1}^{p}|\tilde{c}_{j\nu}|}$ for all $\nu=1,...,V$. That is,
\begin{align*}
{}\;\;\; &\frac{b}{\sqrt{K \lambda_{\min}(\matXbar^{T}\matXbar)}}= A \leq \min_{\nu=1,...,V}\frac{\delta_{\nu}}{\sum_{j=1}^{p}|\tilde{c}_{j\nu}|}\\
\Leftrightarrow\;\;\; & K \geq \frac{b^{2}}{\left[\min_{\nu=1,...,V}\frac{\delta_{\nu}}{\sum_{j=1}^{p}|\tilde{c}_{j\nu}|}\right]^{2} \lambda_{\min}(\matXbar^{T}\matXbar)}. 
\end{align*}
\end{proof}

\textcolor{Black}{ 
We now proceed with the proof of our main result of this section. The result involves covering numbers, where the cover for the set will be the vectors with discretized coefficients that we have been working with in the lemmas above.\\
}

\begin{proof} \textit{(of Theorem \ref{thm:main-bound})}

Recall that
\begin{itemize}
\item the matrix $X$ is defined as $[h_{1}\textrm{ } ... \textrm{ } h_{p}]$;
\item the scaled versions of vector $\{h_{j}\}_{j}$ are $\hS_{j} = \scalej h_{j}$ for $j=1,...,p$;
\item the scaled versions of coefficients $\{\beta_{j}\}_{j}$ are $\betaS_{j} = \invscalej \beta_{j}$ for $j=1,...,p$; and
\item any vector $y = X\beta = \sum_{j=1}^{p}\beta_{j}h_{j}$ can be rewritten as $\sum_{j=1}^{p}\betaS_{j}\hS_{j}$.
\end{itemize}
\textcolor{Black}{ 
We will prove three technical facts leading up to the result.\\
}

\noindent\textbf{Fact 1.} If $\|\beta\|_{q} \leq B_{b}$, then $\|\betaS\|_{1} \leq 1$.\\

Because $1/r + 1/q = 1$,  by H{\"o}lder's inequality we have:
\begin{align}\label{holderseqn}
\sum_{j=1}^{p}|\betaS_{j}|  =& \frac{1}{n^{1/r}B_{b}X_{b}}\sum_{j=1}^{p} \| h_{j}\| _{r}|\beta_{j}|\\ 
\leq & \frac{1}{n^{1/r}B_{b}X_{b}}\left(\sum_{j=1}^{p} \| h_{j}\| _{r}^{r}\right)^{1/r}\left(\sum_{j=1}^{p} | \beta^{j}| ^{q}\right)^{1/q}. \nonumber
\end{align} 
To bound the above notice that in our notation, $(h_{j})_{i} = (x_{i})_{j}$. That is, the $i^{th}$ component of feature vector $h_{j}$, i.e., $(h_{j})_{i}$ is also the $j^\textrm{th}$ component of example $x_i$.  Thus,
\begin{align*}
\left(\sum_{j=1}^{p} \| h_{j}\| _{r}^{r}\right)^{1/r} =& \left(\sum_{j=1}^p\sum_{i=1}^n \left((h_{j})_{i}\right)^r\right)^{1/r} =\left(\sum_{i=1}^n\sum_{j=1}^p\left((h_{j})_{i}\right)^r\right)^{1/r}\\
=&\left(\sum_{i=1}^n\|x_i\|_r^r\right)^{1/r} \leq \left(nX_b^r\right)^{1/r} = n^{1/r}X_b.
\end{align*}
Plugging this into (\ref{holderseqn}), and using the fact that $\|\beta\|_{q} \leq B_{b}$, we have
\[
\sum_{j=1}^{p}|\betaS_{j}|  \leq \frac{1}{n^{1/r}B_{b}X_{b}} n^{1/r}X_{b}  B_{b} = 1,
\]
that is,  $\|\betaS\|_{1} \leq 1$.\\

\noindent \textbf{Fact 2.} Corresponding to the set of linear constraints on $\beta$:
\begin{align*}
\sum_{j=1}^{p}c_{j\nu}\beta_{j} +\delta_{\nu} \leq 1, \delta_{\nu} > 0, \nu=1,...,V,
\end{align*}
there is a set of linear constraints on $\betaS_{j}$, namely $\sum_{j=1}^{p}\tilde{c}_{j\nu} \betaS_{j} + \delta_{\nu} \leq 1, \nu=1,...,V$.\\

Recall that $\beta \in \mathcal{B}$ also means that $\sum_{j=1}^{p}c_{j\nu}\beta_{j} + \delta_{\nu} \leq 1$ for some $\delta_{\nu} > 0$ for all $\nu = 1,...,V$.
Thus, for all $\nu = 1,...,V$:
\begin{eqnarray*}
&& \sum_{j=1}^{p}c_{j\nu}\beta_{j} + \delta_{\nu} \leq 1 \\
&\Leftrightarrow & \sum_{j=1}^{p}c_{j\nu}\left(\scalej \invscalej\right)\beta_{j} + \delta_{\nu} \leq 1\\
&\Leftrightarrow & \sum_{j=1}^{p}\tilde{c}_{j\nu} \betaS_{j} + \delta_{\nu} \leq 1
\end{eqnarray*}
which is the set of corresponding linear constraints on $\{\betaS_{j}\}_{j}$ we want.\\

\noindent\textbf{Fact 3.} $\forall j=1,...,p $, $\; \| \hS_{j}\| _{2} \leq n^{1/2}X_{b}B_{b}$.\\

Jensen's inequality implies that for any vector $z$ in $\mathbb{R}^n$, and for any $r\geq 2$, it is true that $\frac{1}{n^{1/2}} \|z\|_2\leq \frac{1}{n^{1/r}}\|z\|_r$. 
Using this for our particular vector $\hS_{j}$ and our given $r$, 
we get 
\begin{align*}
 \| \hS_{j}\| _{2}\leq \| \hS_{j}\| _{r} n^{1/2}\frac{1}{n^{1/r}}.
\end{align*}
But we know
\begin{align*}
\| \hS_{j}\| _{r} = \left\|\scalej h_{j} \right\|_{r} = \scalej \|h_{j}\|_{r} = n^{1/r}X_{b}B_{b}.
\end{align*}
Thus, we have $\| \hS_{j}\| _{2} \leq n^{1/2}X_{b}B_{b}$ for each $j$, and thus,
$\max_{j=1,...,p}\| \hS_{j}\| _{2} \leq n^{1/2}X_{b}B_{b}$.\\

\textcolor{Black}{ 
With those three facts established, we can proceed with the proof of Theorem \ref{thm:main-bound}. 
}
Facts 1 and 2 show that the requirements on $\betaS$ for Lemma \ref{lemma:maurey-ext} and Lemma \ref{lemma:constraints} are satisfied. Fact 3 shows that the requirement on $\{\hS_{j}\}_{j}$ for Lemma \ref{lemma:maurey-ext} is satisfied with constant $b$ being set to  $n^{1/2}X_{b}B_{b}$.
Since the requirements on $\{\hS_{j}\}_{j}$ and $\{\betaS_{j}\}_{j}$ are satisfied, we want to choose the right value of positive integer $K$ such that Lemma \ref{lemma:constraints} is satisfied and also we would like the squared distance between $y$ and $y_K$ to be less than $n\epsilon^2$. To do this, we pick $K$ to be the bigger of the two quantities: $X_{b}^{2}B_{b}^{2}/\epsilon^{2}$ and that given in Lemma \ref{lemma:constraints}. That is,
\begin{align}
K = \ceil*{\max\left\{\frac{X_{b}^{2}B_{b}^{2}}{\epsilon^{2}}, \frac{nX_{b}^{2}B_{b}^{2}}{\left[\min\limits_{\nu=1,...,V}\frac{\delta_{\nu}}{\sum_{j=1}^{p}|\tilde{c}_{j\nu}|}\right]^{2} \lambda_{\min}(\matXbar^{T}\matXbar)}\right\}}.
\label{eqn:chooseK}
\end{align}
\textcolor{Black}{ 
This will force our discretization for the cover to be sufficiently fine that things will work out: we will be able to count the number of cover points in our finite set, and that will be our covering number. 
}

To summarize, with this choice, for any $y \in \F_{|S}$, we can find integers $k_{1},...,k_{p}$
such that the following hold simultaneously:
\begin{enumerate}
\item[$a$.] (It gives a valid discretization of $y$.) $\sum_{i=1}^{p}|k_{i}| \leq K$,
\item[$b$.] (It gives a good approximation to $y$.) The approximation $y_{K} = \sum_{j=1}^{p}\frac{k_{i}}{K}\hS_{j}$ is $\epsilon\sqrt{n}$ close to $y = \sum_{j=1}^{p}\betaS_{j}\hS_{j}$. That is,
\begin{align*}
\|y - y_{K}\|_{2}^{2} \leq \frac{nX_{b}^{2}B_{b}^{2}}{K} \leq n\epsilon^{2}, \textrm{and}
\end{align*}
\item[$c$.] (It obeys operational cost constraints.) $\sum_{j=1}^{p}\tilde{c}_{j\nu}\frac{k_{j}}{K} \leq 1, \textrm{ } \nu=1,...,V$.
\end{enumerate}
In the above, the existence of $k_{1},...,k_{p}$ satisfying $(a)$ and $(b)$ comes from Lemma \ref{lemma:maurey-ext} where we have also used $K$ satisfying $K \geq X_{b}^{2}B_{b}^{2}/\epsilon^{2} \geq 1$. Lemma \ref{lemma:constraints} along with the choice of $K$ from (\ref{eqn:chooseK}) guarantees that $(c)$ holds as well for this choice of $k_{1},...,k_{p}$.

Thus, by $(b)$, any $y \in \F_{|S}$ is within $\epsilon\sqrt{n}$ in $\ell_{2}$ distance of at least one of the vectors with coefficients $k_{1}/K,...,k_{p}/K$. Therefore counting the number of $p$-tuple integers $k_{1},...,k_{p}$ such that $(a)$ and $(c)$ hold, or equivalently the number of solutions to (\ref{eqn:myconstraints1}), gives a bound on the covering number, which is $|P_{c}^{K}|$. That is,
\begin{align*}
N(\sqrt{n}\epsilon,\mathcal{F}_{|S},\| \cdot\| _{2}) \leq |P_{c}^{K}|.
\end{align*}

If we did not have any linear constraints, we would have the following bound,
\begin{align*}
N(\sqrt{n}\epsilon,\mathcal{F}_{|S},\| \cdot\| _{2}) \leq |P^{K_{0}}|,
\end{align*}
where $K_{0}:=\ceil*{\frac{X_{b}^{2}B_{b}^{2}}{\epsilon^{2}}}$ by using Lemma \ref{lemma:maurey-ext} and very similar arguments as above.

In addition, when $\epsilon\geq X_{b}B_{b}$, the covering number is exactly equal to $1$ since we can cover the set $\F_{|S}$ by a closed ball of radius $\sqrt{n}X_{b}B_{b}$.


Thus we modify our upper bound by taking the minimum of the two quantities $|P^{K_{0}}|$ and $|P_{c}^{K}|$ appropriately to get the result:
\begin{align*}
N(\sqrt{n}\epsilon,\mathcal{F}_{|S},\| \cdot\| _{2}) \leq 
\begin{cases}
\min\{|P^{K_{0}}|,|P_{c}^{K}|\} & \textrm{if } \epsilon < X_{b}B_{b} \\
1 & \textrm{ otherwise.}
\end{cases}
\end{align*}
\end{proof}


Since Theorem \ref{thm:main-bound} suggests that $|P_{c}^{K}|$ may be an important quantity for the learning process, we discuss how to compute it. We assume that $\tilde{c}_{j\nu}$  are rationals for all $j=1,..,p, \nu=1,...,V,$ so that we can multiply each of the $V$ constraints describing $P_{c}^{K}$ by the corresponding gcd of the $p$ denominators. This is without loss of generality because the rationals are dense in the reals. This ensures that all the constraints describing polyhedron $P_{c}^{K}$ have integer coefficients. Once this is achieved, we can run Barvinok's algorithm \citep[using for example, Lattice Point Enumeration, see][and references therein]{deloera05} that counts integer points inside polyhedra and runs in polynomial time for fixed dimension (which is $p$ here). Using the output of this algorithm within our generalization bound will yield a much tighter bound than in previous works \citep[for example, the bound in][Theorem 3]{zhang02}, especially when $(r,q) = (\infty,1)$; this is true simply because we are counting more carefully. Note that counting integer points in polyhedrons is a fundamental question in a variety of fields including number theory, discrete optimization, combinatorics to name a few, and making an explicit connection to bounds on the covering number for linear function classes can potentially open doors for better sample complexity bounds.

\section{Discussion and Conclusion}\label{sec:relatedworks}
\textcolor{Black}{ 
The perspective taken in this work contrasts with traditional decision analysis and predictive modeling; in these fields, a single decision is often the only end goal. Our goal involves exploring how predictive modeling influences decisions and their costs. Unlike traditional predictive modeling, our regularization terms involve optimization problems, and are not the usual vector norms.
}

\textcolor{Black}{ 
The simultaneous process serves as a way to understand uncertainty in decision-making, and can be directly applied to real problems. 
\textcolor{Black}{ 
We centered our discussion and demonstrations around three questions, namely: ``What is a reasonable amount to allocate for this task so we can react best to whatever nature brings?" (answered in Section \ref{sec:experiments}),  ``Can we produce a reasonable probabilistic model, supported by data, where we might expect to pay a specific amount?" (answered in Section \ref{sec:experiments}), and ``Can our intuition about how much it will cost to solve a problem help us produce a better probabilistic model?" (answered in Section \ref{sec:bound}). The first two were answered by exploring how optimistic and pessimistic views can influence the probabilistic models and the operational cost range. Given the range of reasonable costs, we could allocate resources effectively for whatever nature brings. Also given a specific cost value, we could pick a corresponding probabilistic model and verify that it can be supported by data. The third question was comprehensively answered in Section \ref{sec:bound} by evaluating how intuition about the operational cost can restrict the probabilistic model space and in turn lead to better sample complexity if the intuition is correct.
}
These are questions that are not handled in a natural way by current paradigms. Answering these three questions are not the only uses for the simultaneous process. For instance, domain experts could use the simultaneous process to explore the space of probabilistic models and policies, and then simply pick the policy among these that most agrees with their intuition. Or, they could use the method to refine the probabilistic model, in order to exclude solutions that the simultaneous process found that did not agree with their intuition.
}

\textcolor{Black}{ 
The simultaneous process is useful in cases where there are many potentially good probabilistic models, yielding a large number of (optimal-response) policies. This happens when the training data are scarce, or the dimensionality of the problem is large compared to the sample size, and the operational cost is not smooth. These conditions are not difficult to satisfy, and do occur commonly. For instance, data can be scarce (relative to the number of features) when they are expensive to collect, or when each each instance represents a real-world entity where few exist; for instance, each example might be a product, customer, purchase record, or historic event. Operational cost calculations commonly involve discrete optimization; there can be many scheduling, knapsack, routing, constraint-satisfaction, facility location, and matching problems, well beyond what we considered in our simple examples. The simultaneous process can be used in cases where the optimization problem is difficult enough that sampling the posterior of Bayesian models, with computing the policy at each round, is not feasible.
}

\textcolor{Black}{ 
We end the paper by discussing the applicability of our policy-oriented estimation strategy in the real world. Prediction is the end goal for machine learning problems in vision, image processing and biology, and in other scientific domains, but there are many domains where the learning algorithm is used to make recommendations for a subsequent task. We showed applications in Section \ref{sec:experiments} but it is not hard to find applications in other domains, where using either the traditional sequential process, decision theory, or robust optimization may not suffice. Here are some other potential domains:
}
\begin{itemize}

\item Internet advertising, where the goal of the advertising platform is to choose which ad to show a customer. For each customer and advertiser, there is an uncertain estimate of the probability that the customer will click the ad from that advertiser. These estimates determine which ad will be shown next, which is a discrete decision \citep{muthukrishnan07}.

\item Portfolio management, where we allocate our budget among $n$ risky assets with uncertain returns, and each asset has a different cost associated with the investment \citep{konno1991mean}.

\item Maintenance applications \citep[in addition to the ML\&TRP][]{TuRuJaadt11}, where we estimate probabilities of failure for each piece of equipment, and create a policy for repairing, inspecting, or replacing the equipment. Certain repairs are more expensive than others, so the costs of various policy decisions could potentially change steeply as the probability model changes.

\item Traffic flows on transportation networks, where the problem can be that of load balancing based on resource constraints and forecasted demands \citep{KoulaEtAl}. 

\item Policy decisions based on dynamical system simulations, for instance, climate
policy, where a politician wants to understand the uncertainty in policy
decisions based on the results of a large-scale simulation. If the
simulation cannot be computed for all initial values, its result can be
estimated using a machine learning algorithm \citep{BartonNX10}.

\item Pharmaceutical companies choosing a subset of possible drug targets to test,
where the drugs are predicted to be effective, and cannot be overly expensive to
produce \citep{Yu2012}. This might be similar in many ways to the real-estate purchasing problem discussed in Section \ref{sec:experiments}.

\item Machine task scheduling on multi-core processors, where we need to allocate processors to various jobs during a large computation. This could be very similar to the problem of scheduling with constraints addressed in Section \ref{sec:experiments}. If we optimistically estimate the amount of time each job takes, we will hopefully free up processors on time so they can be ready for the next part of the computation.

\end{itemize}

\textcolor{Black}{ 
We believe the simultaneous process will open the door for other methods dealing with the interaction of machine learning and decision-making that fall outside the realm of the usual paradigms.
}




\section*{Acknowledgements}
Funding for this project comes in part from a Fulbright Science and Technology Fellowship, an award from the Solomon Buchsbaum Research Fund, and NSF grant IIS-1053407.

\bibliography{Bib_Theja}
\end{document}